\documentclass[11pt]{article}
\usepackage[margin=1in]{geometry}
\usepackage{times}
\usepackage{tikz}
\usetikzlibrary{
    arrows.meta,
    positioning,
    calc
}
\usepackage{wrapfig}

\usepackage[utf8]{inputenc} 
\usepackage[T1]{fontenc}    
\usepackage{url}            
\usepackage{booktabs}       
\usepackage{nicefrac}       
\usepackage{microtype}      
\usepackage{xcolor}         
\usepackage{adjustbox}
\usepackage{amsmath, amssymb, amsthm, amsfonts}       

\usepackage{cite} 
\usepackage{bm}
\usepackage{etoolbox}
\usepackage{enumitem}
\setlist[itemize]{leftmargin=1em}
\usepackage{graphicx}
\usepackage{caption}
\usepackage{mathtools}
\usepackage{parskip}
\usepackage{subcaption}
\usepackage{algorithm}
\usepackage{algpseudocode}
\usepackage{xparse}
\usepackage{xfrac}
\usepackage[colorlinks=true,citecolor=blue,linkcolor=blue,urlcolor=blue]{hyperref}
\algnotext{EndFor}
\makeatletter
\renewcommand{\theALG@line}{\thealgorithm.\arabic{ALG@line}}
\providecommand{\theHALG@line}{}
\renewcommand{\theHALG@line}{\thealgorithm.\arabic{ALG@line}}
\makeatother

\title{Deep Learning as Neural Low-Degree Filtering:\\
A Spectral Theory of Hierarchical Feature Learning}

\author{%
Yatin Dandi$^{1,2}$,
Matteo Vilucchio$^1$,
Luca Arnaboldi$^1$,
Hugo Tabanelli$^1$,
Florent Krzakala$^1$\\[0.75em]
$^1$Information Learning and Physics Laboratory, \'{E}cole Polytechnique F\'{e}d\'{e}rale de Lausanne (EPFL)\\
$^2$Statistical Physics of Computation Laboratory, \'{E}cole Polytechnique F\'{e}d\'{e}rale de Lausanne (EPFL)
}
\date{September 4, 2026}

\newtheorem{theorem}{Theorem}
\newtheorem{lemma}{Lemma}
\newtheorem{proposition}{Proposition}
\newtheorem{definition}{Definition}

\newtheorem{assumption}{Assumption}
\newtheorem{remark}{Remark}

\usepackage[capitalize,noabbrev]{cleveref}
\usepackage{commath}

\newcommand{\R}{\mathbb{R}}

\newcommand{\cH}{\mathcal{H}}

\newcommand{\E}{\mathbb{E}}

\newcommand{\bilingualcommand}[3]{%
	\newcommand{#1}[1][\ ]{%
		##1%
		\iflanguage{english}{\text{#2}}{%
			\iflanguage{french}{\text{#3}}{}%
		}%
		##1%
	}%
}

\bilingualcommand{\where}{where}{où}
\bilingualcommand{\textif}{if}{si}
\bilingualcommand{\textand}{and}{et}
\bilingualcommand{\textiff}{if and only if}{si et seulement si}
\bilingualcommand{\otherwise}{otherwise}{sinon}

\newcommand{\bg}{{\boldsymbol{g}}}

\newcommand{\bw}{{\boldsymbol{w}}}

\newcommand{\bv}{{\boldsymbol{v}}}
\newcommand{\ba}{{\boldsymbol{a}}}

\newcommand{\balpha}{{\boldsymbol{\alpha}}}

\newcommand{\bz}{{\boldsymbol{z}}}
\newcommand{\bx}{{\boldsymbol{x}}}

\newcommand{\by}{{\boldsymbol{y}}}

\newcommand{\be}{{\boldsymbol{e}}}

\newcommand{\bu}{{\boldsymbol{u}}}

\NewDocumentCommand{\bb}{o}{\mathbf{b}\IfValueT{#1}{^{#1}}}
\NewDocumentCommand{\hh}{o}{\mathbf{h}\IfValueT{#1}{^{#1}}}
\NewDocumentCommand{\zz}{o}{\mathbf{z}\IfValueT{#1}{^{#1}}}
\NewDocumentCommand{\hhtilde}{o}{\tilde{\mathbf{h}}\IfValueT{#1}{^{#1}}}
\NewDocumentCommand{\W}{o}{\mathrm{W}\IfValueT{#1}{^{#1}}}
\NewDocumentCommand{\Wtilde}{o}{\tilde{\mathrm{W}}\IfValueT{#1}{^{#1}}}

\NewDocumentCommand{\pp}{o}{\mathbf{p}\IfValueT{#1}{^{#1}}}
\NewDocumentCommand{\pptilde}{o}{\tilde{\mathbf{p}}\IfValueT{#1}{^{#1}}}
\NewDocumentCommand{\Patch}{o}{\mathsf{P}\IfValueT{#1}{^{#1}}}
\NewDocumentCommand{\Kpatch}{o}{K_{\mathrm{patch}}\IfValueT{#1}{^{#1}}}
\NewDocumentCommand{\Chat}{o}{\widehat{C}\IfValueT{#1}{^{#1}}}
\NewDocumentCommand{\ww}{o}{\mathbf{w}\IfValueT{#1}{^{#1}}}
\NewDocumentCommand{\wtilde}{o}{\tilde{\mathbf{w}}\IfValueT{#1}{^{#1}}}
\NewDocumentCommand{\alphau}{o}{\ensuremath{\alpha\IfValueT{#1}{^{#1}}}}

\DeclareMathOperator*{\argmax}{arg\,max}
\DeclareMathOperator*{\argmin}{arg\,min}

\providecommand{\norm}[1]{\ensuremath{\left\lVert #1\right\rVert}}

\newcommand{\op}{\mathrm{op}}

\newcommand{\RF}{\mathrm{RF}}

\begin{document}

\maketitle

\begin{abstract}
Understanding how deep neural networks learn useful internal representations from data remains a central open problem in the theory of deep learning. We introduce \emph{Neural Low-Degree Filtering} (Neural LoFi), a stylized limit of gradient-based training in which hierarchical feature learning becomes an explicit iterative spectral procedure. In this limit, the dynamics at each layer decouple: given the current representation, the next layer selects directions with maximal accessible low-degree correlation to the label. This yields a tractable surrogate mechanism for deep learning, together with a natural kernel-space interpretation. Neural LoFi provides a mathematically explicit framework for studying multi-layer feature learning beyond the lazy regime. It predicts how representations are selected layer by layer, explains how emergence of concepts arises with given sample complexity,
and gives a concrete mechanism by which depth progressively constructs new features from old ones through low-degree compositionality. We complement the theory with mechanistic experiments on fully connected and convolutional architectures, showing that Neural LoFi improves over lazy random-feature baselines, recovers meaningful structured filters, and predicts representations aligned with early gradient-descent feature discovery with real datasets.
\end{abstract}

\section{Introduction}

One of the most striking features of modern deep learning \cite{lecun2015deep} is the ability of neural networks to learn complex high-dimensional functions from data with extraordinary effectiveness \cite{sejnowski2020unreasonable}. Yet this empirical success still outpaces our theoretical understanding. 
A large body of empirical work suggests that deep networks do not merely fit input-output relations, but progressively build structured representations. In convnets, feature-visualization studies have shown that successive layers extract patterns of increasing complexity, from local edge-like detectors to more semantic motifs \cite{zeiler2014visualizing}. Transfer-learning experiments indicate that earlier layers tend to contain more general features, whereas deeper layers become increasingly specialized to the task \cite{yosinseki2014transferable}. The {\it platonic representation} viewpoint has emphasized that models often learn surprisingly aligned representations across architectures and training procedures \cite{huh2024platonic}. These observations point to a central challenge: 

\emph{Can we understand which features are discovered during training, in which order, and at what sample complexity? And can we understand why some functions are learned more efficiently with multi-layer models?}

Several theoretical perspectives have clarified important parts of this picture. In the lazy regime \cite{chizat2019lazy}, or equivalently in the Neural Tangent Kernel (NTK) limit \cite{jacot2018neural,lee2019wide}, training is described as kernel regression in an essentially fixed feature space, giving a powerful theory of optimization and generalization but largely bypassing feature learning. Mean-field analyses of two-layer networks capture genuine parameter evolution and representation learning \cite{mei2018mean,chizat2018global,rotskoff2018neural,sirignano2020mean}, while tensor-program and dynamical mean-field theory approaches provide a general framework for infinite-width limits, including feature-learning regimes under suitable parametrizations \cite{yang2021tuning,hajjar2024training,bordelon2022self,pacelli2023statistical}. In parallel, stylized high-dimensional models, such as single-index and multi-index settings, have made it possible to analyze when neural networks recover latent structure from data \cite{ghorbani2020neural,bietti2022learning,damian2024computational,troianifundamental}, and recent work has begun to clarify the computational role of depth in synthetic toy models \cite{garnierbrun2025transformerslearnstructureddata,cagnetta2024deep,favero2021locality,favero2025compositional
,dandicomputational,ren2026provable}. Still, we lack a simple predictive mechanism for how deep networks select, organize, and refine features across layers.

In this work, we propose such a mechanism, which we call {\it Neural Low-Degree Filtering} (Neural LoFi). Neural LoFi is a stylized, mathematically tractable iterative spectral surrogate for feature learning: given a current representation, each layer forms a label-weighted moment operator on the current features, selects its leading eigendirections, and lifts the resulting projected features through a nonlinear random feature map. This procedure is motivated by a stylized small-initialization limit of gradient-based training, in which the feature-learning component of the layerwise dynamics is governed by a Hessian-like, label-weighted second-order operator.

The analysis of Neural LoFi leads to two main lessons. First, at a fixed representation, feature learning is governed by a \emph{relevance--complexity} trade-off: the next layer selects features that have large low-degree correlation with the label while remaining simple in the geometry induced by the current representation. Neural LoFi makes this trade-off explicit through a variational problem, where relevance is measured by supervised low-degree correlation and complexity by the RKHS norm. This variational view also yields an {\it explicit, data-driven criterion for feature emergence}: a new direction becomes learnable when its population correlation rises above the empirical noise floor, whose scale is controlled by the residual effective dimension of the current kernel.

Second, depth makes this process powerful because the selected features are lifted into a new representation, changing the geometry in which the next layer searches for signal. Thus a deep network can turn structure that is high-degree in the input into structure that is low-degree in an intermediate representation. This leads to a principle of {\it low-degree compositionality}: deep learning is efficient when each stage of the target is not only compositional, but visible through low-degree correlations in the current representation. The same viewpoint also gives a kernel interpretation of feature learning: Neural LoFi constructs a sequence of task-adaptive kernels, one per layer, rather than a single fixed kernel chosen before seeing the labels. In this sense, depth allows for an adaptive multilayer kernel construction driven by supervised low-degree feature selection. 

Although we use Neural LoFi primarily as a theoretical lens, the resulting procedure is also of independent interest: it is layerwise, backpropagation-free, and based only on label-weighted low-degree moments of the current representation. In this sense, it provides a simple spectral primitive for feature discovery, sitting between fixed random-feature models, which do not adapt their representation, and full end-to-end backpropagation, whose dynamics are much harder to analyze.

The rest of the paper develops this picture. We first introduce Neural LoFi and motivate its label-weighted moment operator from the early feature-learning dynamics of gradient descent. We then derive an RKHS variational characterization, which makes explicit the relevance--complexity trade-off solved at each layer. This leads to a criterion for feature emergence, expressed through the residual effective dimension of the current kernel, and to the principle of low-degree compositionality. We conclude with a study of a solvable mathematical model that illustrates our main point (emergence of features and low-degree compositionality), and with real-data experiments, including fully connected and convolutional architectures, illustrating the predicted mechanisms arising in practice.

A public implementation, together with code to reproduce the numerical illustrations, is available at
\url{https://github.com/IdePHICS/Neural-LoFi-Theory}.

\section{Neural Low-Degree Filtering}
\label{sec:neural-lofi}

\subsection{Setup and guiding approximation}
We now define Neural LoFi and present its feature-space and kernel interpretations. Its derivation as a surrogate for early gradient-based feature learning is given in App.~\ref{app:GD-neural}. Consider supervised data
\[
    \mathcal D_n=\{(\bm x_\mu,y_\mu)\}_{\mu=1}^n,
    \qquad
    \bm x_\mu\in\mathbb R^d,\quad y_\mu\in\mathbb R,
\]
with scalar labels for simplicity (The extension to vector-valued outputs is discussed in App.~\ref{app:vector-labels}).  We assume throughout this section that the labels are centered, and write
\(\widehat{\mathbb E}_n\) for the empirical average over the training set.  A depth-\(L\) neural network builds a sequence of representations
\begin{equation}
    \bm z_0(\bm x)=\bm x,
    \qquad
    \bm h_\ell(\bm x)
    =
    \bm W_\ell \bm z_{\ell-1}(\bm x),
    \qquad
    \bm z_\ell(\bm x) \in \mathbb{R}^{p_\ell}
    =
    \sigma(\bm h_\ell(\bm x)),
    \qquad
    \ell=1,\dots,L,
    \label{eq:standard-network}
\end{equation}
for sequence of widths $p_1,\cdots, p_L$. For structured architectures, such as convolutional networks, the same notation should be read locally: \({\bm z}_{\ell-1}(\bm x)\) is replaced by the local patch or feature vector seen by a filter, and the corresponding estimator is averaged over spatial locations.
The model's output is then defined by a readout applied to the final layer 
\begin{equation}
    \hat f(x)= \langle \ba_L,\bz_L(x)\rangle.
\end{equation}
Fix a layer \(\ell\), and consider the preactivation
$
    h_{\ell,i}(\bm x)
    =
    \langle \bm w_{\ell,i},\bz_{\ell-1}(\bm x)\rangle 
$. We show  in Appendix \ref{app:GD-neural}, that under a layer-wise vanishing initialization scheme $a_{L} \ll W_{L-1} \ll \cdots \ll W_1 \ll 1$, the training of different neurons in a layer, can be described up to leading approximation through a linear dynamics consisting of two terms: (i) a constant drift along a mean direction given by $\hat{\bm u}^\ell \in \mathbb{R}^{p_{\ell}}$
and (ii), a linear projection along the matrix $\widehat{\bm C}^{(\ell)}  \in \mathbb{R}^{p_{\ell} \times p_{\ell}}$, with $\hat{\bm u}^\ell,  \widehat{\bm C}^{(\ell)}$ defined by:
\begin{equation}\label{eq:lofi-main-operator}
    \hat{\bm u}^\ell \coloneqq \frac{1}{n}\sum_{\mu=1}^n y_{\mu} \bz_{\ell-1}(\bx_\mu) , \quad 
    \widehat{\bm C}^{(\ell)}
    :=
    \frac1n
    \sum_{\mu=1}^n
    y_\mu\,
    \bz_{\ell-1}(\bm x_\mu)
    {\bz}_{\ell-1}(\bm x_\mu)^\top .
\end{equation}

This result is formalized below, which is a consequence of Taylor's theorem and the fact that under a layer-wise vanishing scaling of initializations, interactions between neurons within the same layer and between the present layer and the subsequent, untrained layers, are suppressed: 
\begin{proposition}[Informal]\label{thm:layerwise-GD}
Suppose that  $\hat f(x)$ is trained through layer-wise gradient descent on squared loss $\mathcal{L}=\frac{1}{2n}\sum_{\mu=1}^n (y_\mu-\hat{f}(\bx_\mu))^2$ with initialization scale  $a_{L} \ll W_{L-1} \ll \cdots \ll W_1 \ll 1$. Then, for any $\eta>0$, and layer $\ell$, any fixed neuron $w_{\ell,i}$ for $i \in p_{\ell}$, satisfies for small enough time:

\begin{equation}\label{eq:thm1approxmain}
\bw_{\ell,i}(t+1) - \bw_{\ell,i}(t)
\approx \eta\,c_{0,i} \hat{\bu}_\ell
+ \eta\,c_{1,i}\widehat{C}^{(\ell)}\,\bw_{\ell,i}(t)+O(\norm{\bw_{\ell,i}(0)}^2),
\end{equation}
where \(c_{0,i},c_{1,i}\in\mathbb{R}\) are initialization-dependent random coefficients, independent across neurons \(i\).
\end{proposition}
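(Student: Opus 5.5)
Since only layer $\ell$ is trained, the plan is to compute the gradient of $\mathcal L$ with respect to a single neuron $\bw_{\ell,i}$ and expand it to first order in $\bw_{\ell,i}$. The vanishing-initialization hierarchy is what controls the neglected terms. By the chain rule,
\[
-\nabla_{\bw_{\ell,i}}\mathcal L=\frac1n\sum_{\mu=1}^n\big(y_\mu-\hat f(\bx_\mu)\big)\,\frac{\partial \hat f(\bx_\mu)}{\partial h_{\ell,i}}\,\sigma'\big(h_{\ell,i}(\bx_\mu)\big)^{0}\,\bz_{\ell-1}(\bx_\mu),
\]
where the factor $\partial\hat f/\partial h_{\ell,i}$ is obtained by backpropagating through the untrained layers $\ell+1,\dots,L$ and the readout $\ba_L$.

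\textbf{Step 1: remove the residual's dependence on the network output.} Since $\ba_L$ is the smallest scale, $\hat f=O(\|\ba_L\|)$. The residual $y_\mu-\hat f(\bx_\mu)$ is therefore $y_\mu$ up to a correction that is higher order relative to the terms we keep. Centering of $y$ removes constant offsets. This step is what suppresses interactions between neurons of layer $\ell$: a neuron $j\neq i$ enters the gradient of neuron $i$ only through $\hat f$.

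\textbf{Step 2: Taylor-expand the backpropagated factor around $h_{\ell,i}=0$.} Because $\|\bw_{\ell,i}\|\ll1$, we have $h_{\ell,i}(\bx)=\langle\bw_{\ell,i},\bz_{\ell-1}(\bx)\rangle$ small, provided the representation $\bz_{\ell-1}$ is bounded. This boundedness holds because the earlier layers are fixed, though we will need to state it as an assumption. Write
\[
g_i(h):=\frac{\partial\hat f}{\partial h_{\ell,i}}=\Big(\sum_j a_{L,j}\,\partial_{h_{\ell,i}} z_{L,j}\Big)\sigma'(h),
\]
viewed as a smooth function of $h_{\ell,i}$. The key point is that the downstream layers $\ell+1,\dots,L$ act on $\bz_\ell$, whose other coordinates are $\sigma(h_{\ell,j})$ with $h_{\ell,j}$ small. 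To leading order their Jacobian therefore depends only on $\sigma(0)$, $\sigma'(0)$ and the random downstream weights, and not on the input $\bx$. Taylor's theorem then gives $g_i(h)=c_{0,i}+c_{1,i}h+O(h^2)$, with $c_{0,i},c_{1,i}$ built from $\sigma'(0),\sigma''(0)$ and the row of the downstream weights that multiplies neuron $i$. This gives the independence across $i$ when those rows are independent. Substituting into the gradient gives
\[
\frac1n\sum_\mu y_\mu\,\bz_{\ell-1}(\bx_\mu)\big(c_{0,i}+c_{1,i}\langle\bw_{\ell,i},\bz_{\ell-1}(\bx_\mu)\rangle\big)=c_{0,i}\hat\bu^\ell+c_{1,i}\widehat C^{(\ell)}\bw_{\ell,i}.
\]
The Taylor remainder is $O(\|\bw_{\ell,i}\|^2)$. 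An update step $\eta$ yields \eqref{eq:thm1approxmain}.

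\textbf{Step 3: small time.} Iterating the linear map keeps $\|\bw_{\ell,i}(t)\|$ comparable to $\|\bw_{\ell,i}(0)\|$ for $t\lesssim$ a constant over $\eta\,\|\widehat C^{(\ell)}\|$ (plus drift). A discrete Grönwall argument then makes the remainder uniformly $O(\|\bw_{\ell,i}(0)\|^2)$ over that window.

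\textbf{Main obstacle.} The delicate step is Step 2. There I must show that the downstream Jacobian is $\bx$-independent to the needed order, and that the cross-terms coming from other neurons' $h_{\ell,j}$, together with the $O(\|\ba_L\|)$ residual correction, are dominated by the kept terms. I would handle this by making the hierarchy quantitative, i.e. $\|\ba_L\|\le\epsilon\|W_{L-1}\|\le\dots$. I would then bound each neglected term by a product of the smaller scales and absorb it into the stated error. If $c_{0,i}$ vanishes (e.g. odd $\sigma$ with $\sigma'(0)$ factors cancelling), the statement still holds with that coefficient equal to zero.
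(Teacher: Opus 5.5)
Your proposal follows essentially the same route as the paper's proof. Both use the chain rule and replace the residual $y-\hat f$ by $y$. Both replace the downstream Jacobian by its $x$-independent zero-activation value $c_0^{L-\ell}\prod_{m>\ell}W_m(0)$ using the scale hierarchy, then Taylor-expand $\sigma'$ to first order in the neuron's own preactivation, giving $c_{0,i}=\sigma'(0)\bar a_{\ell,i}$ and $c_{1,i}=\sigma''(0)\bar a_{\ell,i}$. Independence comes from the $i$-th column of $W_{\ell+1}(0)$, conditional on the remaining downstream weights, and a short horizon keeps $\|w_{\ell,i}(t)\|=O(\alpha_\ell)$. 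Two small slips, neither of which affects the argument: other neurons of layer $\ell$ also enter neuron $i$'s gradient through the downstream Jacobian, not only through $\hat f$, as your Step 2 in fact acknowledges; and for odd $\sigma$ it is $c_{1,i}\propto\sigma''(0)$ that vanishes, not $c_{0,i}$.
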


Appendix~\ref{app:higher-order-averaging} gives a complementary explanation for this suppression.  In~\Cref{thm:layerwise-GD}, higher-order terms are ordered by the layer-wise small-initialization scales, while 
Appendix~\ref{app:higher-order-averaging} instead isolates the effect of averaging over data: after testing the averaged gradient against a direction, its \(k\)-th contribution is a correlation between two degree-\(k\) polynomial components, both orthogonal to lower-degree terms.  
If the current representation has \(m_\ell\) degrees of freedom, this degree-\(k\) space has dimension of order \(m_\ell^k\), so a random initialization direction has typical squared overlap \(m_\ell^{-k}\) with a fixed label-weighted component. 
Thus higher-order interactions may be present on individual samples but are suppressed in the averaged gradient.

The first term in Eq. \ref{eq:thm1approxmain} is the linear correlation between the current representation and the label. It can be removed by centering, by fitting the best linear predictor in the current representation, or simply interpreted as the degree-one component of the dynamics. The second term, however, is the first genuinely multiple feature-learning component. Let $\widehat{C}^{(\ell)}=\hat{\bm V}\Lambda\hat{\bm V}^\top$ denote the eigendecomposition. For small step-size, the linear term ${\color{blue}c_{1,i}}\widehat{C}^{(\ell)}\,\bw_{\ell,i}(t)$ yields the per-neuron approximation $\bw_{\ell,i}(t)\approx \exp({\color{blue}c_{1,i}} \widehat{\bm C}^{(\ell)} t)\,\bw_{\ell,i}(0)$. Stacking the $p_\ell$ neuron weights into the row matrix $W_\ell(t)\in\mathbb{R}^{p_\ell\times p_{\ell-1}}$ with $i$-th row $\bw_{\ell,i}(t)^\top$, the dynamics splits into two interpretable pieces:
\begin{equation}\label{eq:lofi-decomp}
    W_\ell(t)
    \;\approx\;
    W_\ell(0)\,\exp\bigl(c_1\,\widehat{\bm C}^{(\ell)}\,t\bigr)
    \;=\;
    \underbrace{W_\ell(0)\,\hat{\bm V}}_{\text{random transformation}} \times \quad \underbrace{\exp(c_1\,\Lambda\,t)\,\hat{\bm V}^{\!\top}}_{\text{spectral filter}\,\to\,\text{top-eigenvector projection}}\,.
\end{equation}
The second part depends only on $\widehat{\bm C}^{(\ell)}$: it projects onto the eigen-basis of $\widehat{\bm C}^{(\ell)}$ and reweighs each direction by $\exp(c_1\lambda_r^{(\ell)} t)$. 
Since the coefficients \(c_{1,i}\) are random and may have either sign, the amplified directions are selected by large \emph{absolute} eigenvalues; this is why Neural LoFi orders the spectrum by \(|\lambda_r^{(\ell)}|\).
The first part $W_\ell(0)\,\hat{\bm V}$ is $t$-independent: it is a fixed random linear transformation of those eigenvectors, inherited from the i.i.d.\ initialization of the neuron weights. 
The post-training feature map $\sigma\bigl(W_\ell(t)\,\bz_{\ell-1}(\bm x)\bigr)$ at each layer is therefore, to leading order, the composition of (i)~a spectral filter that selects the leading eigen-directions of $\widehat{\bm C}^{(\ell)}$ and (ii)~a random per-neuron mixing of those directions: small-initialization training acts, to leading order, as a {\it spectral filtering} of the  representation. \looseness=-1

While the exponential weighting in Equation \ref{eq:lofi-decomp} holds under vanishing initialization and small time-scales, we expect, in general, the network to maintain a preference towards features corresponding to larger eigenvalue magnitudes for other training regimes. We discuss how weighting regimes arise under different training settings for two-layer networks in App. \ref{app:two-layer}. The simplest choice of such weights is top-$k$ selection by \(|\lambda_r^{(\ell)}|\): $w(\lambda_r^{(\ell)})=1$ for \(r\leq k\) after magnitude ordering, and $0$ otherwise, which we use in~\Cref{alg:neural-lofi}.

Unlike the general gradient descent dynamics involving complex interactions across layers and neurons, the regime identified by Proposition \ref{thm:layerwise-GD} is {\it sequential} across layers and {\it decoupled} across neurons.  We call this the  {\it Neural LoFi regime} and put forward the following hypothesis:
\begin{center}{\it
    The \textbf{Neural LoFi regime} described by Proposition \ref{thm:layerwise-GD} provides a tractable surrogate towards understanding hierarchical learning in deep neural networks.}
\end{center}

\begin{algorithm}[t]
\caption{Neural Low-Degree Filtering}
\label{alg:neural-lofi}
\begin{algorithmic}[1]
\Require Dataset $\{(\bm x_\mu,y_\mu)\}_{\mu=1}^n$, depth $L$, ranks $\{k_\ell\}_{\ell=1}^L$, widths $\{p_\ell\}_{\ell=1}^L$
\State Initialize $\bm z_0(\bm x)\gets \bm x$ and $p_0 = d$.
\For{$\ell=1,\dots,L$}
    \State $\hat{\bm u}^\ell  \leftarrow \sfrac{1}{n}\sum_{\mu=1}^n y_{\mu} \bz_{\ell-1}(\bx_\mu)$, $\hat{\bm v}^\ell_0 \leftarrow \hat{\bm u}^\ell / \|\hat{\bm u}^\ell\|$
    \Comment{Estimate linear component, normalize}
    \State Form the label-weighted moment operator:
\vspace{-0.6em}
\[
        \widehat{\bm C}^{(\ell)}
        \gets
        \frac1n
        \sum_{\mu=1}^n
        y_\mu\,
      \bz_{\ell-1}(\bm x_\mu)
        \bz_{\ell-1}(\bm x_\mu)^\top \in \mathbb{R}^{p_{\ell-1} \times p_{\ell - 1}}.
\]
\vspace{-0.6em}
    \State{$
        \widehat{\bm V}_\ell
        =
        [\hat{\bm v}^\ell_0 , \hat{\bm v}^{(\ell)}_1,\dots,\hat{\bm v}^{(\ell)}_{k_\ell}] 
        \in \mathbb{R}^{p_{\ell-1} \times k_\ell}
    $, with ordered \(k_\ell\) eigenvectors by decreasing \(|\hat\lambda|\).}
    \State{$
        \bm g_\ell(\bm x)
        \gets
        \widehat{\bm V}_\ell^\top
        \bz_{\ell-1}(\bm x)
        \in\mathbb R^{k_\ell}.
    $} 
    \Comment{Project onto the learned feature subspace}
    \State{$
        \bm z_\ell(\bm x)
        \gets
        \sfrac{1}{\sqrt{p_\ell}} \sigma(\bm R_\ell \bm g_\ell(\bm x)),
        \,
        \bm R_\ell\in\mathbb R^{p_\ell\times k_\ell}.
    $}
    \Comment{Lift by nonlinear random feature map}
\EndFor
\State{Fit a final linear or logistic readout $\bm a$ on $\bm z_L(\bm x)$.}
\State \Return Final predictor $\hat f(\bm x)=\langle \bm a,\bm z_L(\bm x)\rangle$
\end{algorithmic}
\end{algorithm}

\subsection{The Neural Low-Degree Filtering algorithm}
The decomposition in Eq.~\eqref{eq:lofi-decomp} directly motivates the two operations of Neural LoFi (Algorithm~\ref{alg:neural-lofi}), which replace the implicit dynamics of the Neural LoFi regime (Proposition~\ref{thm:layerwise-GD}) with two concrete, decoupled steps per layer. First, a \emph{filter} step projects the current representation onto the leading eigen-directions of the label-weighted moment operator $\widehat{\bm C}^{(\ell)}$, making the spectral filter explicit as $\widehat{\bm V}_\ell$. Second, a \emph{lift} step applies a nonlinear random feature map $\bm R_\ell$ that emulates the per-neuron randomness of the left factor $W_\ell(0)\hat{\bm V}$ in Eq.~\eqref{eq:lofi-decomp}, producing the next representation.

The matrix \(\widehat{\bm V}_\ell\) defines the \emph{learned feature projection} at layer \(\ell\). The projections
\begin{equation}\label{eq:learned-feature-projection}
    \bm g_\ell(\bm x)
    =
    \widehat{\bm V}_\ell^\top
    \bz_{\ell-1}(\bm x) \,,
\end{equation}
are the features selected by low-degree filtering. The random nonlinear lift \(\sigma(\bm R_\ell \bm g_\ell)\) then re-expands these selected coordinates into a richer representation, so that the next layer can search for new low-degree correlations in the transformed feature space.
The projection step can be interpreted as a {\it weighted principal component analysis}: the linear component $\hat{\bm u}^\ell$ gives the direction in the feature space $\bz_{\ell-1}(\bm x)$ maximizing the linear correlation with $y$, and the top eigenvectors of $\widehat{\bm C}^{(\ell)}$ extract the most relevant directions\footnote{In some experiments, we did not even kept the linear component, as the second-order components are numerous and seem to suffice for extracting the most relevant features.} in feature space $\bz_{\ell-1}(\bm x)$ weighted by their 2\textsuperscript{nd}-order correlation with $y$. 

\begin{figure}[t]
    \centering
    \includegraphics[width=\linewidth]{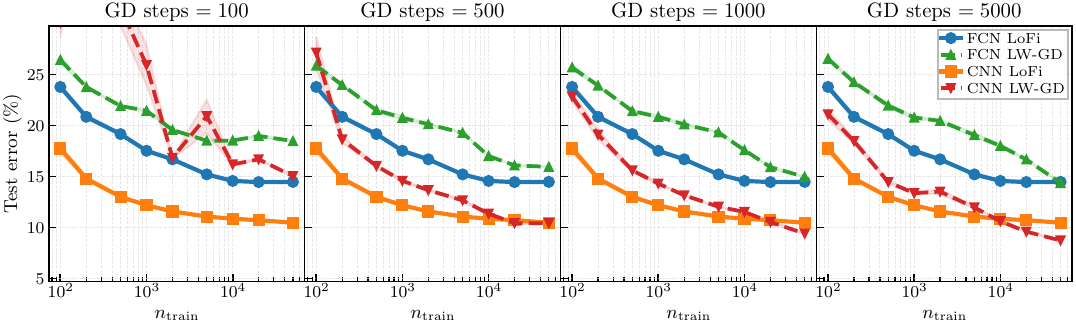}
    \caption{
        \textbf{Neural LoFi versus layerwise gradient descent (LW-GD).}
        Test error on binary CIFAR-10 \cite{krizhevsky2009learning} (animals vs. vehicles) for fully connected networks (FCN) and convolutional networks (CNN). We compare Neural LoFi with networks trained by layerwise gradient descent/backpropagation, shown for different numbers of training steps. In the low-data and low-time regime, Neural LoFi  exceeds GD, illustrating that the spectral surrogate captures an efficient supervised feature-extraction mechanism. See also Fig.\ref{fig:gradient-alignment} for the gradient alignment.}
\label{fig:gd-lofi-comparison-LW}
\end{figure}

\Cref{fig:gd-lofi-comparison-LW} gives a first sanity check for Neural LoFi as a surrogate for gradient-based feature learning. We compare it with layerwise GD (LW-GD) on the binary CIFAR-10 animal-vs.-vehicle task, for both fully connected and convolutional architectures. The results show that Neural LoFi can match early GD performance, especially in the low-data regime, while remaining a one-pass spectral procedure. We return to these experiments in detail in Section~\ref{sec:real-data}, where we also compare with the standard backpropagation gradient descent.

\subsection{Function-space and kernel interpretation}
The projection components $[\hat{\bm v}^\ell_0 , \hat{\bm v}^{(\ell)}_1,\dots,\hat{\bm v}^{(\ell)}_{k_\ell}]$ reside in the dual space w.r.t. the features $\bz_{\ell-1}$ and are themselves not directly interpretable and depend on the choice of random projection weights $R_{\ell-1},\cdots, R_1$. However, we show below that the resulting features $\langle\hat{\bm u}^{(\ell)},\bz_{\ell-1}(\bm x)\rangle$ and $\langle \hat{\bm v}^{(\ell)}_j,\bz_{\ell-1}(\bm x)\rangle$, for $j=1,\dots,k_\ell$, admit a clear description in terms of low-degree projections of $y$ onto the RKHS defined by the previous layer. Furthermore, the features converge to deterministic infinite-width limits. Let
\begin{equation}
    K_{\ell-1}(\bm x,\bm x')
    =
    \langle
    \bz_{\ell-1}(\bm x),
    \bz_{\ell-1}(\bm x')
    \rangle
    \label{eq:lofi-current-kernel}
\end{equation}
be the kernel induced by the current representation, and let \(\mathcal H_{\ell-1}\) be its Reproducing Kernel Hilbert Space (RKHS \cite{steinwart2008support,scholkopf2002learning}). Through an analogue of the classical representer theorem (Lemma \ref{lem:representer-lofi}), we show that for any $j \in k_{\ell}$, the features
$
    \varphi_{j}(\bm x)
    \coloneqq
    \langle \hat{\bm v}^{(\ell)}_j, \bz_{\ell-1}(\bm x)\rangle$, lie in the subspace spanned by $\{K_{\ell-1}(\bm x_{\mu},\cdot)\}_{\mu=1}^n$, and satisfy the equivalence of norms given by \(\|\varphi_{j}(\bm x)\|_{\mathcal H_{\ell-1}}=\|\hat{\bm v}^{(\ell)}_j\|_2\). 
Since $\hat{\bm v}^{(\ell)}_1$ maximizes $\bv^\top\widehat{\bm C}^{(\ell)}\bv$ subject to the $\|\bm v\|_2=1$, we obtain the equivalent criterion:
\begin{equation}
\hat{\bm v}^{(\ell)}_1
    \in
    \argmax_{\|\bm v\|_2=1}
    \left|
    \frac1n
    \sum_{\mu=1}^n
    y_\mu
    \langle \bm v,\bz_{\ell-1}(\bm x_\mu)\rangle^2
    \right|
\rightarrow
    \varphi_1^{(\ell)}
    \in
    \argmax_{\|\varphi\|_{\mathcal H_{\ell-1}}=1}
    \left|
    \frac1n
    \sum_{\mu=1}^n
    y_\mu\,\varphi(\bm x_\mu)^2
    \right|.
    \label{eq:lofi-rkhs-variational}
\end{equation}
Maximizing this correlation with the label, together with the norm constraint in feature space, leads to the fundamental conclusion:
\begin{center}
\emph{
    Neural LoFi searches for features that are {\bf simple} in the geometry induced by the previous layer, but {\bf predictive} through their low-degree correlation with the task.}
\end{center}

This is the sense in which the method is a low-degree filter.  We show in App.\ref{app:kernel-nlofi}, how this can  be turned into a practical kernel version of our approach. This is formalized through the following result, which also shows that the selected features converge to the corresponding variational maximizers for the limiting infinite-width kernel as the layer width grows:
\begin{theorem}[Variational characterization and infinite-width convergence]
\label{thm:variational-rkhs}
Let $\hat{\bm v}^\ell_0$ denote the linear component and $[\hat{\bm v}^{(\ell)}_1,\dots,\hat{\bm v}^{(\ell)}_{k_\ell}]$ the ordered eigenvectors from Algorithm~\ref{alg:neural-lofi}. 
Define the linear feature
\[
    \psi^\ell(\bm x) \coloneqq \langle \hat{\bm v}^\ell_0,\, \bz_{\ell-1}(\bm x)\rangle,
\]
and the second-order features $\varphi_j(\bm x) \coloneqq \langle \hat{\bm v}^{(\ell)}_j,\bz_{\ell-1}(\bm x)\rangle$ for $j=1,\dots,k_\ell$. Then:
\begin{enumerate}[label=(\roman*),noitemsep,leftmargin=1em,wide=0pt]
    \item \textbf{Linear feature.} $\psi^\ell$ is the unit-norm maximizer of the empirical first-order correlation with $y$:
    \begin{equation}
        \psi^\ell \;=\; \argmax_{\psi:\,\|\psi\|_{\mathcal H_{\ell-1}}=1}
        \left|\,\widehat{\mathbb E}_n\bigl[y\,\psi(\bm x)\bigr]\,\right|.
        \label{eq:prop-linear-var}
    \end{equation}
    \item \textbf{Second-order features.} For each $k=1,\dots,k_\ell$, $\varphi_k$ is the unit-norm maximizer of the empirical second-order correlation, successively orthogonalized to the previously selected features:
    \begin{equation}
        \varphi_k \;=\; \argmax_{\substack{\varphi:\,\|\varphi\|_{\mathcal H_{\ell-1}}=1\\ \varphi\perp\varphi_1,\dots,\varphi_{k-1}}}
        \left|\,\widehat{\mathbb E}_n\bigl[y\,\varphi(\bm x)^2\bigr]\,\right|.
        \label{eq:prop-quadratic-var}
    \end{equation}
    \item \textbf{Infinite-width convergence.} Let $K^{\infty}_{\ell-1}$ denote the infinite-width limiting kernel obtained as the layer width $p_{\ell-1}\to\infty$, and let $\{\phi^{\infty}_k\}_{k\geq 1}$ be the successive maximizers of the objective in~\eqref{eq:prop-quadratic-var}, with the RKHS norm and orthogonality induced by $K^{\infty}_{\ell-1}$. Assume that the eigenvalues associated with the selected maximizers in this limiting label-weighted variational problem are separated from the rest of its spectrum. Then, for any pseudo-lipschitz (of finite order) $\sigma(\cdot)$, and any fixed $k$, as $p_{\ell-1}\to\infty$,
    \[
        \varphi_k \;\xrightarrow{\;\;\;}\; \phi^{\infty}_k,
    \]
    where convergence is in $L^2$ under the data distribution.
\end{enumerate}
\end{theorem}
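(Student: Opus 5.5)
The plan is to identify the RKHS $\mathcal H_{\ell-1}$ of the finite-width kernel $K_{\ell-1}(\bm x,\bm x')=\langle\bz_{\ell-1}(\bm x),\bz_{\ell-1}(\bm x')\rangle$ with the quotient of $\mathbb R^{p_{\ell-1}}$ by the kernel of the feature map. Every $\varphi\in\mathcal H_{\ell-1}$ is of the form $\varphi_{\bm v}(\bm x)=\langle\bm v,\bz_{\ell-1}(\bm x)\rangle$, and
\[
\|\varphi_{\bm v}\|_{\mathcal H_{\ell-1}}=\min\{\|\bm v'\|_2:\varphi_{\bm v'}=\varphi_{\bm v}\}.
\]
This minimum is attained at the component of $\bm v$ in the span $S$ of $\{\bz_{\ell-1}(\bm x)\}$. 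In the empirical setting $S$ contains the span of $\{\bz_{\ell-1}(\bm x_\mu)\}_{\mu}$. By the representer-type statement (Lemma~\ref{lem:representer-lofi}), the selected vectors $\hat{\bm v}_0^\ell$ and $\hat{\bm v}^{(\ell)}_j$ lie in $\mathrm{span}\{\bz_{\ell-1}(\bm x_\mu)\}$. Hence $\|\varphi_j\|_{\mathcal H_{\ell-1}}=\|\hat{\bm v}_j^{(\ell)}\|_2=1$, and the map $\bm v\mapsto\varphi_{\bm v}$ is an isometry from $S$ onto $\mathcal H_{\ell-1}$. This isometry is the single tool behind (i) and (ii).

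For (i), write $\widehat{\mathbb E}_n[y\varphi_{\bm v}]=\langle\bm v,\hat{\bm u}^\ell\rangle$. The maximum of this quantity over $\|\bm v\|_2=1$ is attained at $\bm v=\hat{\bm u}^\ell/\|\hat{\bm u}^\ell\|$ by Cauchy--Schwarz. Components of $\bm v$ orthogonal to $S$ do not change $\varphi_{\bm v}$ but do increase its Euclidean norm, so the RKHS-constrained maximum equals the Euclidean one. Because $\hat{\bm u}^\ell\in S$, the maximizer is $\psi^\ell$, which is unique up to sign.

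For (ii), write $\widehat{\mathbb E}_n[y\varphi_{\bm v}^2]=\bm v^\top\widehat{\bm C}^{(\ell)}\bm v$, and note that RKHS orthogonality $\varphi_{\bm v}\perp\varphi_{\bm v'}$ corresponds to $\langle\bm v,\bm v'\rangle=0$ on $S$. Since $\mathrm{range}(\widehat{\bm C}^{(\ell)})\subseteq\mathrm{span}\{\bz_{\ell-1}(\bm x_\mu)\}\subseteq S$, the Courant--Fischer theorem applied to $\pm\widehat{\bm C}^{(\ell)}$ gives the successive maximizers of $|\bm v^\top\widehat{\bm C}\bm v|$. One has to be careful here: maximizing the absolute value of a quadratic form subject to orthogonality picks, at each step, the eigenvector with the largest remaining $|\lambda|$. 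The reason is that $\max\bm v^\top C\bm v$ and $\min\bm v^\top C\bm v$ over the orthogonal complement of the chosen eigenvectors are respectively the top remaining positive and bottom remaining negative eigenvalues. This ordering matches Algorithm~\ref{alg:neural-lofi}, and ties are resolved by the same convention.

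For (iii), I would pass to the $n\times n$ kernel picture. Write $\varphi=\sum_\mu\alpha_\mu K(\bm x_\mu,\cdot)$ with Gram matrix $G=[K(\bm x_\mu,\bm x_\nu)]$ and $D=\mathrm{diag}(y)/n$. Then the problem in (ii) becomes the generalized eigenproblem $GDG\bm\alpha=\lambda G\bm\alpha$, or equivalently the eigenproblem of the symmetric matrix $G^{1/2}DG^{1/2}$, whose nonzero spectrum coincides with that of $\widehat{\bm C}^{(\ell)}$. The same formulation holds with $K^\infty_{\ell-1}$ in place of $K_{\ell-1}$.

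Convergence then proceeds in three steps:
\begin{enumerate}
\item Show $K_{\ell-1}(\bm x,\bm x')\to K^\infty_{\ell-1}(\bm x,\bm x')$ in probability for the finitely many training points and for test $\bm x$. This follows from a law of large numbers over the i.i.d.\ rows of $\bm R_{\ell-1}$. The pseudo-Lipschitz assumption on $\sigma$ provides the moments needed, and one proceeds inductively over earlier layers, conditioning on the previous projections $\bm g_{\ell-1}$, which converge by the induction hypothesis.
\item Deduce $G^{1/2}DG^{1/2}\to G_\infty^{1/2}DG_\infty^{1/2}$. The square root is continuous on PSD matrices, so this follows from step 1. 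The spectral-gap assumption then lets Davis--Kahan yield convergence of the eigenvectors up to sign, and hence of the coefficients $\bm\alpha_k$ on $\mathrm{range}(G_\infty)$.
\item Obtain $\varphi_k(\bm x)=\sum_\mu\alpha_{k,\mu}K_{\ell-1}(\bm x_\mu,\bm x)$. It converges pointwise by steps 1 and 2, and in $L^2$ of the data distribution by dominated convergence or uniform integrability, using the polynomial growth of $\sigma$.
\end{enumerate}

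I expect the main obstacle to be step 2 whenever $G_\infty$ is singular or ill-conditioned. In that case the coefficients $\bm\alpha$ are not well defined, and the finite-width noise in $G$ can leak into near-null directions. I would handle this by working directly with the eigenvectors of $G^{1/2}DG^{1/2}$, writing $\varphi_k=\lambda_k^{-1/2}\langle\bm e_k,G^{-1/2}\bm K(\bm x)\rangle$-type expressions only on the range of $G_\infty$. The bound would control the leakage through the gap $\lambda_k\neq0$, and the gap assumption is exactly what makes this control possible.
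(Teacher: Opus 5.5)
Your proposal is correct and follows the paper's proof essentially step for step. For (i)--(ii) it uses the representer/isometry identification together with Cauchy--Schwarz and Courant--Fischer, ordered by $|\lambda|$; for (iii) it uses a layerwise induction combining a random-feature law of large numbers, the symmetric dual matrix $G^{1/2}DG^{1/2}$, continuity of the matrix square root, Davis--Kahan, and then convergence of coefficients and kernel sections in $L^2(\rho_{\bm x})$. Your min-norm description of $\|\varphi_{\bm v}\|_{\mathcal H_{\ell-1}}$ is slightly more careful than the paper's norm-equivalence lemma, and for the singular-$G_\infty$ case you flag, the cleanest fix is the eigen-identity $\varphi_k=(n\lambda_k)^{-1}\sum_\mu y_\mu\varphi_k(\bm x_\mu)K_{\ell-1}(\bm x_\mu,\cdot)$ with $(\varphi_k(\bm x_\mu))_\mu=G^{1/2}\bm e_k$. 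It needs no inverse of $G$ and only $|\lambda_k|$ bounded away from zero, so it is preferable both to a $G^{-1/2}$-based formula (your sketched version also carries a spurious $\lambda_k^{-1/2}$) and to the paper's pseudo-inverse on the ``stable range''.
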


The above characterization in feature space translates to an explicit recursion over the kernels defined by each layer, once the selected features
$
    \varphi_1^{(\ell)},\dots,\varphi_{k_\ell}^{(\ell)}
$ are obtained, define
$
    \bm g_\ell(\bm x)
    =
    \big(
    \varphi_1^{(\ell)}(\bm x),\dots,\varphi_{k_\ell}^{(\ell)}(\bm x)
    \big).
$
If the next layer is a random nonlinear lift, then the induced kernel is
\begin{equation}
    K_\ell(\bm x,\bm x')
    =
    \mathbb E_{\bm r}
    \left[
    \sigma(\bm r^\top \bm g_\ell(\bm x))
    \sigma(\bm r^\top \bm g_\ell(\bm x'))
    \right].
    \label{eq:lofi-kernel-recursion}
\end{equation}

This recursion is the kernel-level form of feature learning. Unlike a {\it fixed} kernel method, with a single geometry before seeing the labels, Neural LoFi constructs a sequence of task-adaptive kernels
$K_0 \longrightarrow K_1 \longrightarrow \cdots \longrightarrow K_L$,
where each transition is supervised: the next kernel is built from the low-complexity features in \(\mathcal H_{\ell-1}\) whose squared activations are most correlated with the label.  Neural LoFi thus turns deep learning into an explicit procedure for adaptive kernel construction. For vector-valued labels, Appendix~\ref{app:vector-labels} describes the analogous construction: the scalar first-order criterion in \eqref{eq:prop-linear-var} is replaced by the vector objective \(\|\E[\by\,\phi(\bx)]\|_2^2\).  After orthogonalizing the selected scalar features, this already gives up to \(c\) feature directions when \(\by\in\mathbb R^c\); for linear candidates \(\phi(\bx)=\langle \bv,\bz_{\ell-1}(\bx)\rangle\), the criterion is equivalent to SVD on \(\E[\by\,\bz_{\ell-1}^\top]\). Appendix~\ref{app:covariance-correction} describes additional corrections that account for the covariance of the current feature map, through partial whitening of both the first- and second-order LoFi estimators and a covariance drift in the second-order power iteration.

\section{Neural LoFi lessons: emergence and low-degree compositionality}
\label{sec:lofi-lessons}
Neural LoFi is not proposed here as a replacement for backpropagation, but as a tractable, simpler, surrogate for understanding learning. This section now attempts to draw lessons from the surrogate. 

\subsection{Emergence of concepts, effective dimension, and a criterion}
\label{sec:emergence-snr}

A growing body of empirical evidence suggests that learning is often not a smooth process: training dynamics can display long plateaus followed by abrupt transitions, with new directions in representation space \emph{emerging} sequentially rather than all at once
\cite{wei2022emergent,raventos2023pretraining,arora2023theory,schaeffer2023emergent}.
Neural LoFi provides a simple mechanism for this phenomenon. By Theorem~\ref{thm:variational-rkhs}, the $k$th feature extracted by layer $\ell$ maximizes the following second-order {\it empirical} correlation:

\begin{equation}
\widehat \rho_\ell^{(k)}
:=
\sup_{\|\varphi\|_{\mathcal H_{\ell-1}}=1, \varphi \perp \varphi_1,\cdots, \varphi_{k-1}}
\left|
\widehat{\mathbb E}_n
\left[
y\,\varphi(\bm x)^2
\right]
\right|,
\label{eq:lofi-accessibility}
\end{equation}
where $\mathcal H_{\ell-1}$ is the RKHS induced by the representation after $\ell-1$ layers, and $\widehat{\mathbb E}_n$ is the empirical average over data. This variational form gives a direct criterion for feature emergence. Suppose that the features $\varphi_1,\cdots, \varphi_{k-1}$ have been extracted and define the set of candidate features as:

\begin{equation}
  \mathcal{S}^\ell_{k}
  \coloneqq
  \left\{
  \varphi\in\mathcal H_{\ell-1}:
  \norm{\varphi}_{\mathcal H_{\ell-1}}=1,\,
  \varphi \perp \varphi_1,\cdots, \varphi_{k-1}
  \right\}.
  \label{eq:lofi-candidate-class}
\end{equation}
For a fixed unit-norm feature
$\varphi\in \mathcal{S}^\ell_{k}$, define
\begin{equation}
    c_\ell(\varphi)
    :=
    \mathbb E\left[y\,\varphi(\bm x)^2\right],
    \qquad
    \widehat c_{\ell,n}(\varphi)
    :=
    \widehat{\mathbb E}_n\left[y\,\varphi(\bm x)^2\right].
\end{equation}
The empirical correlation $\widehat c_{\ell,n}(\varphi)$ fluctuates around its population value
$c_\ell(\varphi)$. Since Neural LoFi optimizes over a class of candidate features, the relevant measure of variance is the uniform fluctuation over $\mathcal{S}^\ell_{k}$,
\begin{equation}
\tau^k_\ell(n)
\coloneqq
\sup_{\varphi \in  \mathcal{S}^\ell_{k}}
\left|
\widehat c_{\ell,n}(\varphi)-c_\ell(\varphi)
\right|.
\end{equation}

To recover the population minimizer feature, the variance must be small w.r.t. the population correlation
\begin{equation}
\rho_\ell^{(k)}
    :=
    \sup_{\varphi \in  \mathcal{S}^\ell_{k}}
    \left|
    \mathbb E\left[y\,\varphi(\bm x)^2\right]
    \right|.
\end{equation}

Hence, the criterion for the emergence of a feature at layer $\ell$ becomes:
\begin{equation}
    \rho_\ell^{(k)}
    \gg
    \tau^k_\ell(n) .
    \label{eq:lofi-emergence-threshold}
\end{equation}
Below this threshold, the leading empirical direction is dominated by noise. Above it, a task-dependent direction separates from the noise and becomes learnable: Feature emergence is thus akin to a phase transition {\it \`a la} Baik-Ben Arous-P{\'e}ch{\'e}/Edwards-Jones (BBP/EJ) \cite{baik2005phase,edwards1976eigenvalue}.  
We show in Appendix~\ref{app:effective-dimension}, using local Rademacher-complexity bounds over $\mathcal{S}^\ell_{k}$ \cite{mendelson2003performance,bartlett2005local}, that up to poly-logarithmic factors:
\begin{equation}\label{eq:emergence_feature}
     \tau_\ell^k(n)
    \lesssim
    r^k_{\ell}\sqrt{\frac{D_{\ell,k}^{\mathrm{eff}}(r^k_{\ell})}{n}},
\end{equation}
where \(D_{\ell,k}^{\mathrm{eff}}(r)\) is the {\it residual effective dimension} of features with scale $r$ (see Def.\ref{def:lofi-effective-dimension}) within the current RKHS after removing the previously extracted features \cite{caponnetto2007optimal,rudi2017generalization}, and $r^k_{\ell}\coloneqq \argmax_{r} (r \sqrt{D_{\ell,k}^{\mathrm{eff}}(r)})$ is the dominant scale of fluctuations among candidate features. This can be estimated from the empirical spectrum of the kernel, making Eq.\eqref{eq:lofi-emergence-threshold} a data-driven diagnostic for emergence of concepts. 

We develop this formalization of emergence further in Appendix~\ref{app:effective-dimension}. The criterion recovers known sample-complexity scales in solvable hierarchical models
\cite{defilippis2026optimal,tabanelli2026deep,nichani2023provable,wang2023learning,fu2025learning}. More importantly, on real data it predicts the sample scale at which learned concepts appear in different layers: in the CIFAR-10 experiment of Section~\ref{sec:real-data}, individual eigenvector overlaps rise near the predicted thresholds; see Fig.~\ref{fig:sample-complexity-pred}. This makes Neural LoFi a quantitative theory of layerwise concept emergence.

\subsection{Why depth helps: low-degree compositionality}
\label{sec:low-degree-compositionality}
While approximation-theoretic works show that deep networks can represent certain compositional functions much more efficiently than shallow ones, especially in tree-like or hierarchical settings
\cite{mhaskar2017deep,telgarsky2016benefits}, efficient representation does not by itself imply efficient learning. The previous discussion motivates the notion of \emph{low-degree compositionality}: depth is useful to learn functions when each layer exposes a new low-degree signal that is visible in the representation constructed by the previous layers. This means that the target admits a sequence of intermediate representations
\[
\bm x
\;\longrightarrow\;
\bm z_1(\bm x)
\;\longrightarrow\;
\bm z_2(\bm x)
\;\longrightarrow\;
\cdots
\;\longrightarrow\;
\bm z_L(\bm x),
\]
such that, at each stage, the next useful representation is visible through a low-degree statistic of the current one. In the second-order Neural LoFi mechanism, this is precisely the condition that the population counterpart of Equation~\eqref{eq:lofi-accessibility} is larger than the statistical noise floor. Thus, at layer $\ell$, Neural LoFi asks whether the current representation contains a simple feature whose second-order statistics are predictive of the task. This gives a learnability criterion for compositional structure: A deep model does not benefit from {\it arbitrary compositions}; it benefits from {\it hierarchies} in which each intermediate step exposes a {\it statistically detectable low-degree signal}. 
Equivalently, the target should become progressively simpler when expressed in the learned coordinates. 

\begin{center}
    \emph{A function may be high-degree, or otherwise complex, as a function of the original input, while still being learnable through a {\bf sequence of low-degree problems}: Depth then converts one hard estimation problem in the input space into several easier estimation problems in adapted representation spaces.} 
\end{center}

This perspective is consistent with a growing body of theoretical work on idealized models, where compositional structure leads to sequential feature recovery and sample-complexity gains from depth, see:
\cite{cagnetta2024deep,favero2021locality,favero2025compositional
,garnierbrun2025transformerslearnstructureddata,dandicomputational,tabanelli2026deep,ren2026provable,cagnetta2026deriving}. Neural LoFi abstracts a common principle from these settings: a useful hierarchy is one whose next layer is statistically visible through low-degree correlations in the representation already learned. This viewpoint is in particular related to the {\it compositional information exponent} introduced for hierarchical Gaussian targets
\cite{dandicomputational}. In that setting, one assumes access to a planted hierarchy of intermediate features $\bm h_\ell^\star(\bm x)$ and asks for the smallest degree $q$ such that
\begin{equation}
\left\|
\mathbb E
\left[
(\bm h_\ell^\star(\bm x))^{\otimes q}
f^\star(\bm x)
\right]
\right\|_F
=
\Theta(1).
\label{eq:cie-planted}
\end{equation}
A low compositional exponent means that the target has a strong low-degree dependence on the hidden intermediate representation. Instead of assuming access to $\bm h_\ell^\star$, Neural LoFi searches over the current learned feature space through Eq.~\eqref{eq:lofi-accessibility}, and acts as a data-adaptive compositional information test by asking whether the current representation contains simple, statistically visible features that are predictive of the task.
In this sense, Neural LoFi implements a supervised coarse-graining procedure: Each layer keeps a small number of task-relevant directions and discards much of the irrelevant high-dimensional variation, in a way reminiscent of  physics renormalization-inspired views of learning across scales
\cite{wilson1983renormalization}. 

\subsection{Further related works}

\paragraph{Hessian and spectral learning ----}
Hessian-based spectral information has long played a central role in statistical physics and high-dimensional inference, for instance in community detection \cite{saade2014spectral}, spiked matrix--tensor models \cite{ros2019complex,sarao2019afraid,sarao2020marvels}, and BBP/EJ-type  selection mechanisms \cite{baik2005phase,edwards1976eigenvalue}. Related label-weighted second-order operators also appear in single-index and multi-index estimation, where they yield spectral initializations and recovery procedures \cite{lu2020phase,mondelli2018fundamental,maillard2020phase,vilucchio2025asymptotics}. More recently, similar operators have been connected to the early dynamics of shallow neural networks through Hessian or Hessian-like interpretations \cite{bonnaire2025role,zhang2025neural,montanari2026phase,defilippis2026optimal}. Neural LoFi extends this viewpoint beyond the first initial layer and to iterative multi-layer methods.

\paragraph{Beyond second order ---} While the second-order operator \eqref{eq:lofi-main-operator} is  the first nontrivial term in the expansion, higher-order terms correspond to {\it tensor} correlations between the current representation and the label. In Gaussian single-index and multi-index models, such higher-degree components govern harder directions and later learning time scales, as captured by information, leap, and generative exponents \cite{arous2021online,abbe2023sgd,damian2024computational,troianifundamental,damiangenerative}. Higher-order LoFi variants could replace \(y\varphi^2\) by higher-degree statistics, but would lead to (difficult) tensor spectral problems, as tensor PCA can be notably harder than matrix PCA \cite{hopkins2015tensor,lesieur2017statistical,wein2019kikuchi,arous2020algorithmic}. This limitation also points to a natural future direction: Multi-pass gradient descent can break the constraints imposed by information and leap exponents by repeatedly transforming the effective label and representation \cite{dandi2024benefits} as well as through staircase mechanisms \cite{sarao2019afraid,abbe2021staircase,benarous2024stochastic,bardone2024sliding,tabanelli2025computational}.  An interesting direction for future work is a multi-pass Neural LoFi, alternating low-degree spectral filtering with target/residual transformations and feature correction, that would connect the present mechanism to the more powerful generative-exponent/SQ-type learnability picture. We further discuss  connections with this theoretical literature in App.\ref{app:two-layer}.\looseness=-1

Recent alternating-gradient-flow analyses give a complementary dynamical account of feature learning from small initialization, in which training proceeds through plateaus and sharp drops as features activate sequentially \cite{kunin2025alternating}. Related work on sequential group composition studies how networks learn structured compositional tasks one representation-theoretic component at a time, and how depth can exploit associativity to improve scaling \cite{marchetti2026sequential}. These works are closely aligned with the goal of understanding the order in which features emerge under gradient descent; Neural LoFi instead isolates a layerwise spectral primitive that predicts which low-degree directions become available at each representation.

\paragraph{Quadratic networks and spectral feature learning ---}
A closely related line of work studies quadratic and polynomial networks, where the feature-learning component of the dynamics is especially transparent. Early works analyzed the optimization landscape and recovery properties of one-hidden-layer or polynomial networks through tensor and low-rank structure
\cite{ge2016matrix,venturi2019spurious,sarao2020optimization,arjevani2026geometry}.
More recent high-dimensional analyses of overparameterized quadratic networks show that ERM can be mapped to matrix sensing with nuclear-norm regularization, so that capacity control emerges through low-rank learned feature maps
\cite{erbanuclear}. Even closer to our perspective, \cite{defilippis2025scaling} characterize the spectra of trained quadratic and diagonal networks in the feature-learning regime: informative directions appear as spectral outliers or spikes, while the bulk corresponds to learned noise or overfitting and should be pruned. A related bulk-plus-spikes picture was later observed in attention models
\cite{boncoraglio2025inductive}. This mirrors the Neural LoFi mechanism: the Hessian-like label-weighted operator extracts task-correlated spectral directions and discards the residual bulk.  This viewpoint is also consistent with empirical observations that trained deep networks often exhibit structured, non-random spectra in their learned weights with eigenvalues popping out of the bulk \cite{martin2021implicit}.

\paragraph{Scattering, coarse-graining, and multiscale representations ---}
Our construction is  related in spirit to scattering transforms \cite{andreux2020kymatio}. Both scattering and Neural LoFi build representations through a multilevel cascade of filtering and nonlinear lifting. The key difference is that scattering uses fixed analytic filters, typically wavelets, designed for invariance and stability, whereas Neural LoFi is supervised and task-adaptive: each layer selects directions through a label-weighted spectral operator on the current representation. This also resonates with coarse-graining and renormalization-inspired views of deep representations \cite{marchand2022wavelet}.

\paragraph{Rainbow networks ---}
The rainbow analysis  of \cite{guth2024rainbow} gives a post-training description of deep nets: trained layers behave like random feature maps with learned, often low-rank, weight covariances, yielding deterministic hierarchical kernels in the infinite-width limit. In this view, feature learning amounts to identifying low-dimensional covariance structure between random high-dimensional embeddings. Neural LoFi provides a possible mechanism for the emergence of this structure: its label-weighted spectral operator selects the task-correlated directions to keep, while the subsequent random lift re-expands them into a new feature space. This is precisly the picture advocated in \cite{guth2024rainbow}.

\paragraph{Mechanistic interpretability ---}
Neural LoFi is complementary to mechanistic interpretability, which aims to reverse-engineer trained networks into interpretable features and circuits \cite{olah2020zoom,elhage2021mathematical}. Rather than starting from a trained model and asking which circuits implement a behavior, Neural LoFi asks why certain features are selected during training. Its basic objects are spectral directions in representation space, not necessarily individual neurons, aligning with the modern ``features as directions'' viewpoint underlying superposition \cite{elhage2022toy}.

\paragraph{Recursive Feature Machines ---}
A recent line of work proposed the \emph{average gradient outer product} (AGOP) as a mechanism for feature learning and uses it to define Recursive Feature Machines \cite{radhakrishnan2024mechanism,radhakrishnan2025linear}. Closely related convolutional variants were shown to recover edge-detector-like filters and deep visual features through AGOP-based updates \cite{beaglehole2023mechanism}, and recent works further connect neural feature matrices, AGOP-type objects, and weight-Gram dynamics to gradient descent and convergence \cite{boixadsera2026fact,cha2026weightgram}. This line of work is close in philosophy to Neural LoFi: both seek a simple iterative surrogate for representation learning. The constructions differ, however, in where the supervised spectral selection occurs. AGOP/RFM methods first fit a predictor and then update the feature metric using the gradient covariance of that predictor, whereas Neural LoFi applies a label-weighted spectral filter directly to the current representation before the nonlinear lift. 

Appendix~\ref{app:agop} explains the corresponding limitation of AGOP/RFM. For the hierarchical target, fixed-kernel KRR requires the degree-\(2q\) scale \(\Theta(d^{2q})\) before its prediction error improves. For RFM, feature recovery is measured by the fraction of AGOP-transformed feature energy assigned to the planted subspace. In the quadratic multi-index model this fraction vanishes below the degree-two dimension \(D_2=\Theta(d^2)\). The same mechanism suggests a \(d^{2q}\) bottleneck for Deep RFM on the hierarchical model. Neural LoFi, meanwhile, provably recovers the hidden representation at the scale \(D_qd_1=\Theta(d^{q+\epsilon})\) established in Appendix~\ref{app:th_rf}.

\paragraph{Local and backpropagation-free learning ---} In spirit,
Neural LoFi is also related to local alternatives to backpropagation, including feedback alignment and direct feedback alignment
\cite{lillicrap2016random,nokland2016direct,launay2020direct}, local-loss and layerwise training methods
\cite{nokland2019training,belilovsky2020decoupled}, target propagation
\cite{lee2015difference}, and biologically inspired plasticity rules
\cite{illing2019biologically,illing2021local}. Although Neural LoFi is not proposed as a biologically faithful model, it is backpropagation-free and layerwise: its spectral operator can be approximated by label-modulated Hebbian/Oja-style updates\cite{oja1982simplified,sanger1989optimal,gerstner2018eligibility,illing2021local}.

\paragraph{Pruning ---}  Neural LoFi also gives a feature-space perspective on why overparameterization and pruning are not contradictory. Classical pruning methods show that many weights or connections can be removed after training with little loss in performance
\cite{lecun1989optimal,han2015learning}, while the lottery-ticket hypothesis suggests that large dense networks may contain smaller trainable subnetworks
\cite{frankle2019lottery}. In Neural LoFi, width and rank play complementary roles: large width creates a rich feature space in which task-correlated directions can be discovered, while  pruning retains only the directions that finite data can reliably support. Thus large networks are useful for discovery, but low-rank feature selection controls the effective dimension used for prediction.

\section{Mechanistic illustrations: from a solvable model to real data}
\label{sec:real-data}
For this section, codes are available at
\url{https://github.com/IdePHICS/Neural-LoFi-Theory}. 

\begin{figure}[t]
    \centering
    \includegraphics[width=\linewidth]{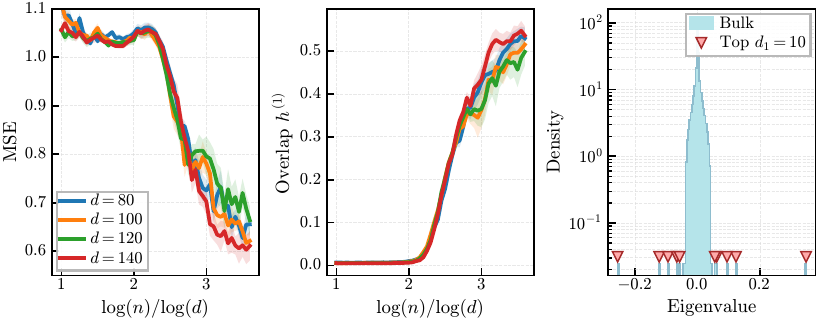}
    \caption{
    \textbf{Neural LoFi in a mathematically solvable model}: 
    We used data generated by the two-level target Eq.\eqref{model}, with ($k=2$), latent dimension $d_1=\lfloor d^\epsilon\rfloor$, $\epsilon=\tfrac12$, and final readout $g^\star(t)=\tanh(t)$, learned by a Neural LoFi approach. For $d\in\{80,100,120,140\}$, we use first-layer random-feature widths $p_1\in\{20000,30000,40000,50000\}$ and second-layer widths $p_2\in\{512,768,1024,1280\}$. The final one-dimensional non-linearity is fitted with a polynomial kernel of maximal degree $5$, using ridge regularization $10^{-6}$ and kernel regularization $10^{-4}$. \textbf{Left:} Mean Squared Error (MSE) of the final predictor as a function of $\alpha$, for input dimensions $d\in\{80,100,120,140\}$. Notice the drop at the predicted sample complexity for feature learning $n \propto d^{2.5}$.
    \textbf{Center:} Overlap between the recovered first-layer representation $\widehat h^{(1)}$ and the ground-truth latent variables $h^{(1)}$.   
    \textbf{Right:} Spectrum of the first random-feature spectral operator $\widehat C_1$ for $d=100$ at $\alpha=3$, where $d_1=10$, showing the emergent informative features.
    }
    \label{fig:rf_theoretical}
\end{figure}

\subsection{A solvable model}
\label{sec:the model}
We first consider a toy model in which the full Neural LoFi mechanism can be inspected mathematically, that illustrates the salient feature discussed in the former section: {\it emergence} and {\it low-degree compositionality}. Following the hierarchical polynomial constructions of
\cite{nichani2023provable,wang2023learning,fu2025learning,tabanelli2026deep}, let $\bm x\sim \mathcal N(0,I_d)$ and generate labels through a planted compositional hierarchy
$
\bm x
\;\longrightarrow\;
\bm h^{(1)}(\bm x)
\;\longrightarrow\;
h^{(2)}(\bm x)
\;\longrightarrow\;
y$. Let $H_k(\bm x)$ denote the degree-$k$ Hermite feature vector of the input. We plant $d_1=d^\varepsilon$ random directions
$\bm A_1^{(1)},\ldots,\bm A_{d_1}^{(1)}$ in this degree-$k$ feature space, together with a random symmetric matrix
$\bm A^{(2)}\in\mathbb R^{d_1\times d_1}$ acting on the first hidden layer. The hidden variables are
\begin{equation}
    h_i^{(1)}(\bm x)
    =
    \big\langle \bm A_i^{(1)}, H_k(\bm x)\big\rangle,
    \qquad i=1,\ldots,d_1,
\end{equation}
and the second representation is a quadratic function of these variables,
\begin{equation}
    h^{(2)}(\bm x)
    =
    \big\langle \bm A^{(2)}, H_2(\bm h^{(1)}(\bm x))\big\rangle,
    \qquad
    y = g^\star(h^{(2)}(\bm x)).\label{model}
\end{equation}
While the target is high-degree as a function of the input, it factors into low-degree transitions: $\bm h^{(1)}$ is degree-$k$ in $\bm x$, and $h^{(2)}$ is quadratic in $\bm h^{(1)}$. This is why the model is useful for illustrating both the advantage of feature learning and the advantage of depth: A fixed kernel or random-feature method \cite{mei2022generalization} sees only the composed map $\bm x\mapsto y$; if $g^\star$ contains a degree-$r$ component, then for $k=2$ this includes degree-$4r$ structure in the input. No learning whatsoever would arise before at least $n=O(d^4)$ data just to beat a random guess!

A two-layer feature learner can already improve on such a one-shot approach by recovering the first hidden representation $\bm h^{(1)}$, but the remaining target is still a nonlinear function of the $d_1=d^\varepsilon$ latent variables, and thus we still face the curse of dimension (over $d_1$ instead of $d^\varepsilon$).

This is where the multi-layer approach solve the problem: A second feature-learning step provides the depth advantage: once $\bm h^{(1)}$ is recovered, the next hidden variable $h^{(2)}$ is only a quadratic spectral problem in dimension $d_1$, followed by a one-dimensional readout. The model realizes the separation identified in \cite{tabanelli2026deep}, but now with Neural LoFi itself: Each layer lowers the effective degree of the remaining task by making the next hidden variable visible through a low-degree statistic. The final target is therefore learned by Neural LoFi progressively, and efficiently, building the correct intermediate representation, rather than by fitting the full high-degree map $\bm x\mapsto y$ in one shot. 

Appendix~\ref{app:th_rf} provides the corresponding mathematical theorems and synthetic experiments, demonstrating feature emergence at the predicted sample scales (Eq.\ref{eq:lofi-emergence-threshold}), spectral outlier formation, alignment with the planted features, and recovery of the final compositional target. 

We thus refer to Appendix~\ref{app:th_rf} for more details  and just briefly illustrate here the performance of Neural LoFi in Fig.~\ref{fig:rf_theoretical}: The function becomes learnable at \(n\gg D_k d_1=O(d^{k+\varepsilon})\), which, in the quadratic case used in the experiments, is \(n\gg d^{2+\varepsilon}\). Around this scale, the Neural LoFi estimator simultaneously shows a drop in prediction error, a sharp increase in overlap with the planted representation, and the separation of the leading eigenvalues from the spectral bulk (see right panel in Fig.~\ref{fig:rf_theoretical} as well as Fig.~\ref{fig:rf_spectrum} showing the emergence of learned features in the first layer in Appendix).

\begin{figure}[t]
    \centering
    \includegraphics[width=0.6\linewidth]{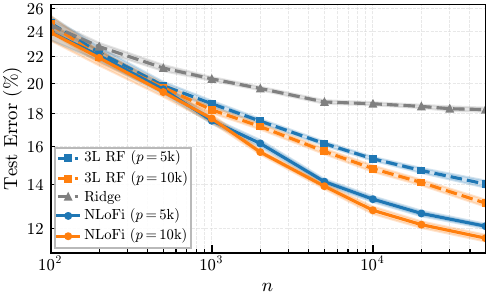}
    \hfill
    \includegraphics[width=0.37082\linewidth,height=0.37082\linewidth]{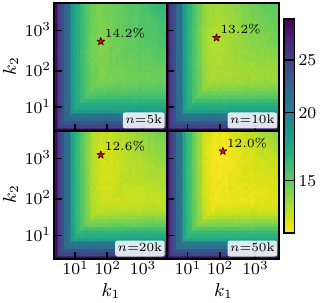}
    \caption{
    \emph{Fully connected Neural LoFi on the CIFAR-10 animal-vs.-vehicle task.}
    \textit{Left:} Test error vs number of training samples for ridge regression, three-layer random features, and Neural LoFi, with projection dimensions $p=5\mathrm{k}$ and $p=10\mathrm{k}$.
    \textit{Right:} Test error over the number of retained features $(k_1,k_2)$ in the first two LoFi layers, for different training-set sizes and fixed projection dimension $p=5\mathrm{k}$. Stars indicate the best point in each grid.
    }
    \label{fig:heatmaps}
\end{figure}

\begin{figure}[t]
    \centering
   \includegraphics[width=\linewidth]{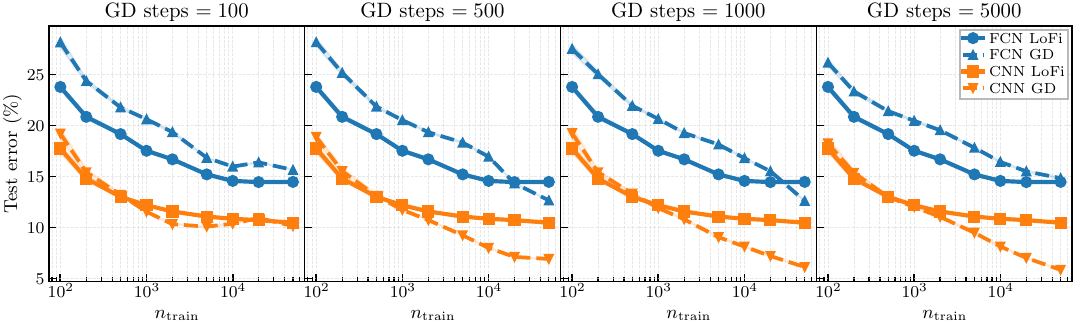}
    \caption{
        \textbf{Neural LoFi versus gradient descent/backpropagation (GD).
        }
        Same as Fig.\ref{fig:gd-lofi-comparison} but with full-backpropagation GD instead of Layer-wise GD: Test error on binary CIFAR-10 \cite{krizhevsky2009learning} (animals vs. vehicles) for fully connected networks (FCN) and convolutional networks (CNN). We compare Neural LoFi with networks trained by gradient descent/backpropagation, shown for different numbers of training steps. In the low-data regime, and at early training times even with more data, Neural LoFi matches or exceeds GD, illustrating that the spectral surrogate captures an efficient supervised feature-extraction mechanism.
    }
\label{fig:gd-lofi-comparison}
\end{figure}
\subsection{Fully connected networks (FCN)} 
We now focus on experiments on real data, intended as mechanistic illustrations of the theory, not as a claim that Neural LoFi is a competitive training algorithm. Fig.\ref{fig:heatmaps} (FCN on CIFAR-10 \cite{krizhevsky2009learning}) illustrates two qualitative predictions: i) the label-weighted LoFi operator extracts useful directions beyond those available to fixed random features, leading to a consistent improvement over ridge and random-feature baselines. ii) feature selection is non-trivial: the best test error is not always achieved by keeping the largest number of features. As the sample size grows, the optimal number of retained features increases or remains stable, matching the signal-to-noise picture in which weaker directions become accessible with more data.
\begin{figure*}[t]
    \centering
    \begin{subfigure}[t]{0.49\textwidth}
        \centering
        \includegraphics[width=\linewidth]{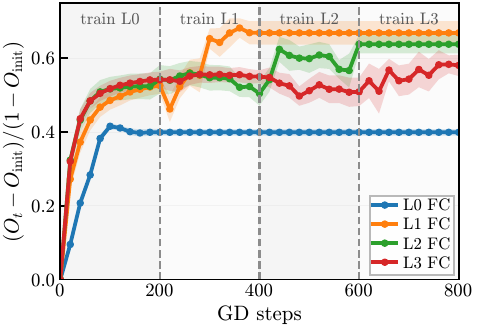}
        \caption{Layerwise GD.}
    \end{subfigure}\hfill
    \begin{subfigure}[t]{0.49\textwidth}
        \centering
        \includegraphics[width=\linewidth]{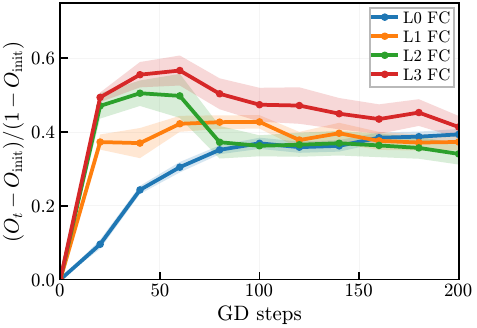}
        \caption{Joint GD.}
    \end{subfigure}
    \caption{
    \emph{CIFAR-10 animal-vs.-vehicle: four-layer GD subspace recovery.}
    GD-SVD subspace recovery by Neural LoFi (see App.~\ref{app:gd-feature-alignment}).
  Both panels plot the normalized subspace-overlap gain \((O_t-O_{\mathrm{init}})/(1-O_{\mathrm{init}})\) between the top \(r=8\) GD-SVD feature span and the fixed LoFi filtered-feature span before the random lift, using \(k=(20,40,60,80)\) retained LoFi directions and averaging over eight random initializations on \(N=10{,}000\) test samples. \textit{Left:} GD is trained layer by layer; during each shaded interval only the indicated hidden layer is updated. \textit{Right:} all hidden layers are trained jointly for the first 200 GD steps.
    }
    \label{fig:gradient-alignment}
\end{figure*}

We also compare Neural LoFi with GD training protocols:
 with layer-wise training as well as with standard backpropagation gradient descent. 
Fig. \ref{fig:gd-lofi-comparison}, for instance, shows that backpropagation gradient descent, while better than Layer-wise training (Fig.\ref{fig:gd-lofi-comparison-LW}) in that it requires less data and less time to beat Neural-Lofi, is still very similar. 

Perhaps more interesting are the plots in Fig.~\ref{fig:gradient-alignment} where, for each hidden layer, the normalized overlap gain \((O_t-O_{\mathrm{init}})/(1-O_{\mathrm{init}})\), where \(O_t\) is the empirical subspace overlap between the features spanned by the top-$8$ singular vectors of a network trained by GD and the features extracted by Neural-Lofi (more details in Appendix \ref{app:gd-feature-alignment}, evaluated on \(N=10{,}000\) held-out samples and averaged over eight random initializations. The left panel trains the hidden layers sequentially, with earlier layers fixed during later phases, while the right panel trains all layers jointly. The layerwise protocol exhibits staircase-like growth: the layer currently being optimized develops new features aligned with the LoFi features for that layer, while already-trained layers remain approximately stable. The joint-training panel lacks these clean layerwise transitions, supporting the interpretation of Neural LoFi as a layerwise spectral approximation to the feature-discovery phase of GD. This is further supported by the aforementioned Fig.~\ref{fig:gd-lofi-comparison}, that shows direct test-error comparisons between GD and Neural LoFi.

Most directly, Fig.~\ref{fig:sample-complexity-pred} tests the feature-emergence criterion of Eqs.~\eqref{eq:lofi-emergence-threshold} and~\eqref{eq:emergence_feature}. We compare each eigenvector learned from \(n\) samples with a large-sample reference eigenvector, and mark the predicted emergence thresholds. The overlap curves rise close to these thresholds, showing that the effective-dimension criterion predicts the sample scale at which individual task-relevant directions ("concepts") become learnable. Additional details on this experiment are given in App.~\ref{app:feature-recovery}; further numerical evidence, including spectra of the $\widehat C$ operator and experiments with the infinite-width Neural LoFi kernel, is reported in App.~\ref{app:numerical-extra}.

\begin{figure}[t]
    \centering
    \includegraphics[width=\linewidth]{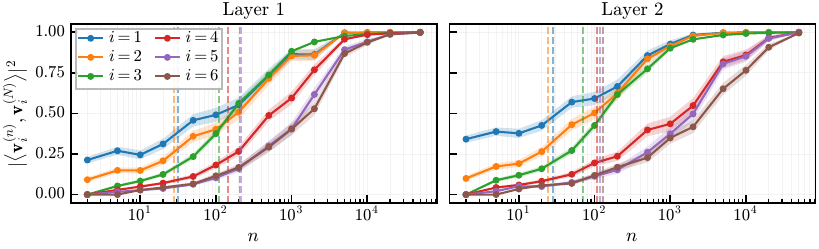}
    \caption{
        \textbf{Predicting when individual features emerge on CIFAR-10.}
        For a three-layer fully connected Neural LoFi model on the CIFAR-10 animal-vs.-vehicle task, we track the squared overlap
        $|\langle v_i^{(n)},v_i^{(N)}\rangle|^2$
        between eigenvectors estimated from \(n\) samples and large-sample reference eigenvectors computed with \(n=60\mathrm{k}\) samples. Curves show mean \(\pm\) SEM over 100 random subsamples at fixed random features. Dashed vertical lines indicate the predicted emergence thresholds \(n_\ell^i\) from \eqref{eq:lofi-emergence-threshold},\eqref{eq:emergence_feature}
        (for further details, see Eq.~\eqref{eq:real-data-emergence-estimate} in Appendix). The sharp rise of each overlap near its predicted threshold shows that the effective-dimension criterion predicts when individual task-relevant directions become learnable.
    }
    \label{fig:sample-complexity-pred}
\end{figure}

\subsection{Convolutional networks (CNN)}
Neural LoFi is not restricted to fully connected architectures: the moment operator can be built in the feature space exposed by the architecture. This makes it natural to incorporate structural inductive biases such as locality, weight sharing, equivariance, or graph structure, in the spirit of geometric deep learning~\cite{bronstein2021geometric}. Here, we consider convolutional networks.
Let $\bm z_{\ell-1}(\bm x_\mu)\in\mathbb R^{s_\ell\times p_{\ell-1}}$ denote the input features to layer $\ell$, where $s_\ell$ is the number of spatial locations and $p_{\ell-1}$ is the number of channels. We define the operator in channel space as:
\begin{equation}
    \widehat{\bm C}^{(\ell)}
    \coloneqq
    \frac{1}{n s_\ell}
    \sum_{\mu=1}^n
    \sum_{j=1}^{s_\ell}
    y_\mu\,
    \bm z_{\ell-1}(\bm x_\mu)_j
    \bm z_{\ell-1}(\bm x_\mu)_j^\top .
\end{equation}
The top eigenvectors of $\widehat{\bm C}^{(\ell)}$ define the task-relevant channel directions retained at layer $\ell$; the resulting features are then passed through the same fixed random lifting used by Neural LoFi.

We remind that Fig.~\ref{fig:gd-lofi-comparison} showed how similar direct GD optimization is to Neural LoFi at early training time. The convolutional experiment also gives the most direct visual evidence for low-degree compositionality. Applying the LoFi operator directly to pixels already recovers familiar low-level visual features: a leading color-sensitive direction, followed by localized contrast and edge-like filters. Several later filters resemble finite-difference or Laplacian-like stencils, echoing classical edge-filter emergence in natural image models~\cite{olshausen1996emergence} and the early filters learned by CNNs trained with gradient descent~\cite{hinton2012imagenet,zeiler2014visualizing,radhakrishnan2024mechanism}. 
In appendix, we further show in
Fig.~\ref{fig:sample-complexity-pred-conv}  the feature-emergence criterion of Eqs.~\eqref{eq:lofi-emergence-threshold} and~\eqref{eq:emergence_feature} for GNN, as we did for FCN in Fig.\ref{fig:sample-complexity-pred}.

\begin{figure}[t]
    \centering
\includegraphics[width=2cm,height=6.25cm]{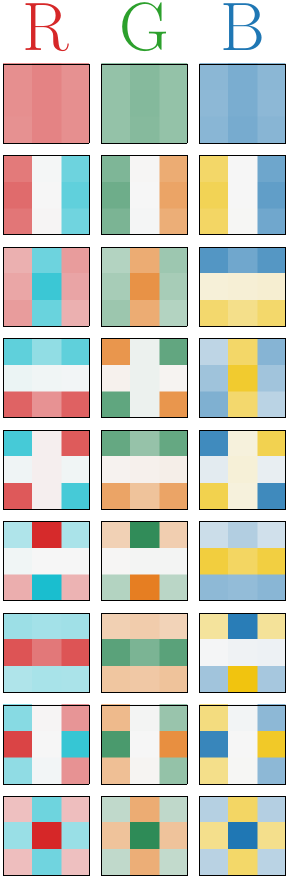}
    \hspace{.5cm}
    \includegraphics[width=6.9cm,height=6.25cm]{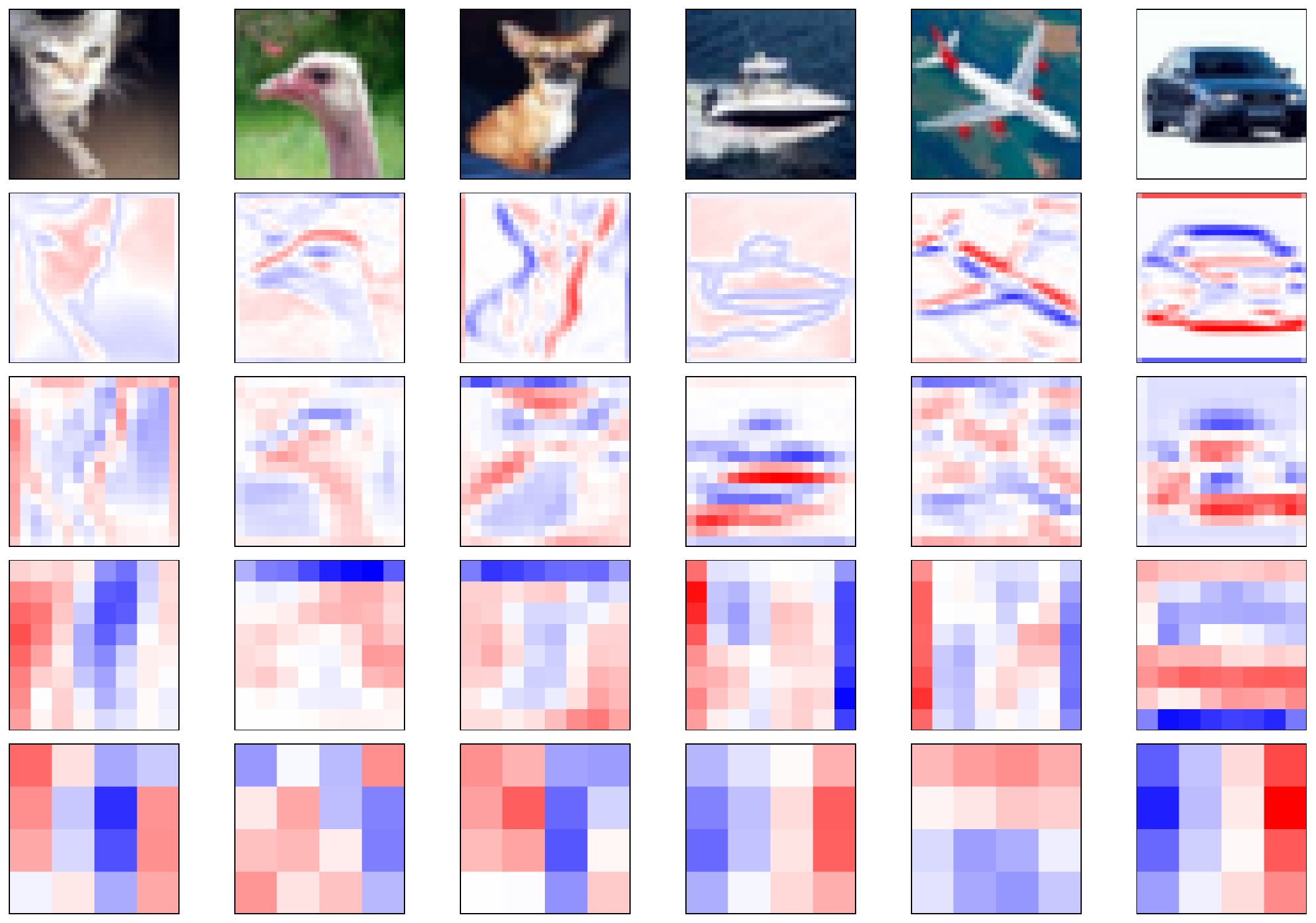}
  \caption{
  Neural LoFi with convolutional layers on the CIFAR-10 animal-vs.-vehicle task. Neural LoFi recovers the usual structures and edge detectors associated to GNN {\it without backpropagation or end-to-end optimization}
  \textit{Left:} Top first-layer filters obtained from the three RGB input channels.
  \textit{Right:} Activations of the sixth LoFi feature on test images at successive depths; top row shows the input images, lower rows  the corresponding activation maps.
  }
  \label{fig:cnn-filter-activations}
\end{figure}

Strikingly, these salient visual structures emerge {\it without backpropagation or end-to-end optimization}: they are obtained simply by diagonalizing the Neural LoFi label-weighted moment operator. This shows that part of early visual feature learning is already captured by the low-degree supervised correlations emphasized by our theory. Depth then turns these simple local filters into more selective features. In Figure~\ref{fig:cnn-filter-activations}, the same LoFi feature behaves like a generic edge or contrast detector at shallow depth, but becomes increasingly selective after successive LoFi layers, responding only to more structured spatial patterns. This is, again, the qualitative mechanism suggested by the theory: each layer extracts a low-degree task-correlated component from the current representation, and repeated extraction builds progressively more structured features. 
Additional  experiments (including the spectrum of the operators, and different datasets) are presented in App.~\ref{app:numerical-extra}.

\section{Conclusion}\label{sec:conclusions}
We introduced \emph{Neural LoFi}, a tractable spectral theory of multilayer feature learning beyond the lazy regime. The framework turns deep representation learning into an explicit layerwise procedure: given a current representation, each layer selects low-degree features that are both task-correlated and simple in the geometry induced by previous layers. This yields a concrete relevance--complexity principle for feature selection, a quantitative criterion for when new features emerge, and a principle of \emph{low-degree compositionality}: depth helps when each layer makes the next useful signal statistically detectable.

This picture is visible beyond idealized models. Neural LoFi improves over random-feature baselines, predicts the sample scale at which layerwise features emerge, aligns with early gradient-descent representations, and recovers salient convolutional filters without backpropagation. Together, these results suggest that part of deep feature learning is already captured by supervised low-degree spectral structure.

The deliberate simplicity of Neural LoFi also delineates a clear research program. Theoretically, it remains to understand the long-time dynamics of the task-adaptive kernels generated by LoFi, their behavior beyond second-order filtering, and their extension to structured architectures such as attention. 

Algorithmically, the main challenge is to scale LoFi from a mechanistic surrogate into a useful training primitive---for initialization, feature pretraining, pruning, or diagnostics. Incorporating data reuse, multi-pass dynamics, and backward feature correction would bring the framework closer to trained networks.

\section*{Acknowledgments}
We would like to thank Bruno Loureiro and Lenka Zdeborov\'{a} for discussions and encouragement. We acknowledge funding from the Swiss National Science Foundation grants OperaGOST (grant number $200021\_200390$) and DSGIANGO (grant number $225837$). This work was supported by the Simons Collaboration on the Physics of Learning and Neural Computation via the Simons Foundation grant ($\#1257412$).  This work was also supported by the EPFL AI Center PhD Fellowship Program 2026.

\bibliographystyle{unsrt}
\bibliography{refs}

\newpage
\appendix

\section{Neural LoFi from Gradient Descent}
\label{app:GD-neural}
In this section, our goal is to prove Proposition \ref{thm:layerwise-GD}, whose formal version is stated below as Theorem \ref{thm:neural-lofi}, i.e. show that under layerwise training with vanishing initialization, the early-time dynamics of each layer produces spikes along the linear direction $\hat{\bu}_\ell$ and aligns with the top eigenvectors of $\widehat{C}^{(\ell)}$, for a standard fully-connected architecture without skip connections.

Consider the standard $L$-layer network
\begin{equation}
\hat f(\bx)=\langle a_L, \bz_L(\bx)\rangle,
\qquad
\bz_0(\bx)=\bx,
\qquad
z_\ell(x)=\sigma\!\bigl(W_\ell \bz_{\ell-1}(\bx)\bigr),\quad \ell=1,\dots,L,
\label{eq:sec2-arch}
\end{equation}
with square loss $\mathcal{L}=\tfrac{1}{2n}\sum_\mu(y_\mu-\hat f(\bx_\mu))^2$. The final readout $a_L\in\mathbb{R}^{p_L}$ is fixed throughout training; the weight matrices $W_\ell$ are updated layer-wise.

We write $w_{\ell,i}$ for the $i$-th row of $W_\ell$ and set
\begin{equation}\label{eq:init-scales}
\alpha_m := \max_i \|\bw_{m,i}(0)\|, \qquad m=1,\dots,L,
\end{equation}
with  $\alpha_{L+1}:=\|a_L\|$. 

\begin{assumption}[Layerwise scale separation]\label{ass:scale-sep}
The dataset, architecture, and activation are fixed. The scales, including the
final readout scale \(\alpha_{L+1}=\|a_L\|\), form a strict hierarchy: for
every $\ell$ and every $m\in\{\ell+1,\dots,L+1\}$,
\begin{equation}\label{eq:scale-hierarchy}
\alpha_m=o(\alpha_\ell).
\end{equation}
\end{assumption}

Thus later hidden layers, and the final readout, vanish on a strictly smaller
asymptotic scale than the layer currently being trained.  The hierarchy is used
below in a scale-counting sense: every replacement of a later-layer quantity by
its leading initialization-scale term carries at least one additional factor
from some \(\alpha_m\) with \(m>\ell\).

\begin{assumption}\label{ass:nonlin}
$\sigma:\mathbb{R}\to\mathbb{R}$ is three times continuously differentiable,
$\sigma(0)=0$, and $\|\sigma'\|_\infty,\|\sigma''\|_\infty,\|\sigma'''\|_\infty<\infty$.
\end{assumption}

Define the \emph{linear spike direction} and the \emph{label-weighted covariance} at layer $\ell$:
\begin{equation}\label{eq:sec2-linear-component}
\hat{\bu}_\ell
\coloneqq \frac{1}{n}\sum_{\mu=1}^n y_\mu\,\bz_{\ell-1}(\bx_\mu),
\qquad
\widehat{C}^{(\ell)}
\coloneqq \frac{1}{n}\sum_{\mu=1}^n
y_\mu\,\bz_{\ell-1}(\bx_\mu)\bz_{\ell-1}(\bx_\mu)^\top.
\end{equation}

\begin{theorem}[Layerwise GD approximation]\label{thm:neural-lofi}
Under Assumptions~\ref{ass:scale-sep}--\ref{ass:nonlin}, there exist constants
$\tau,C>0$ such that the layerwise gradient descent run admits
horizons
\begin{equation}\label{eq:layerwise-horizons}
T_\ell:=\left\lfloor \frac{\tau\alpha_\ell}{\eta}\right\rfloor,
\qquad \ell=1,\dots,L,
\end{equation}
with the following property: for every layer $\ell$, every neuron $i$, and every
$0\le t\le T_\ell$,
\[
\|w_{\ell,i}(t)\|\le C\alpha_\ell.
\]
Thus the layer receives an $O(\alpha_\ell)$ update while staying on its
initialization scale.  Define the leading effective readout
\begin{equation}\label{eq:effective-readout-fc}
\bar{a}_{\ell,i}
:= c_0^{L-\ell}\Bigl[\Bigl(\prod_{m=\ell+1}^{L}W_m(0)\Bigr)^\top a_L\Bigr]_i
\end{equation}
where \(c_0=\sigma'(0)\). Suppose that the effective readout for neuron $i$ is non-degenerate i.e.:
\begin{equation}\label{eq:readout-nondegenerate}
    |\bar a_{\ell,i}|\ge c_{\rm rd} \alpha_{L+1}\prod_{m=\ell+1}^{L}\alpha_m
\end{equation}
for a fixed \(c_{\rm rd}>0\). For such neurons and \(0\le t<T_\ell\),
\begin{equation}\label{eq:sec2-main-dynamics}
w_{\ell,i}(t+1) - w_{\ell,i}(t)
= \eta\,c_0\,\bar{a}_{\ell,i}\,\hat{\bu}_\ell
+ \eta\,\bar{a}_{\ell,i}\,c_1\,\widehat{C}^{(\ell)}\,w_{\ell,i}(t)
+ R_{\ell,i}(t),
\end{equation}
where \(c_1=\sigma''(0)\). Equivalently, the two leading coefficients are
\(c_{0,\ell i}:=c_0\bar a_{\ell,i}\) and
\(c_{1,\ell i}:=c_1\bar a_{\ell,i}\).  These initialization-dependent
coefficients are independent across neurons \(i\). For every \(0\le t\le T_\ell\) the accumulated residual obeys
\begin{equation}\label{eq:remainder-bound}
    \left\|
        \sum_{s=0}^{t-1}R_{\ell,i}(s)
    \right\|
    \le
    C\alpha_\ell^3 .
\end{equation}
Equivalently, the spike contributes \(O(\alpha_\ell)\) over the horizon, the
covariance-amplification term contributes \(O(\alpha_\ell^2)\), and the
discarded part is \(O(\alpha_\ell^3)\). Thus the displayed approximation keeps
the first two nontrivial orders in the layer-\(\ell\) initialization scale.
\end{theorem}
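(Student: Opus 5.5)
We argue by induction over the layers and, within each layer, by a bootstrap (continuity) argument on the horizon $T_\ell$. When layer $\ell$ is trained, layers $1,\dots,\ell-1$ are frozen at their final values, so $\bz_{\ell-1}(\bx_\mu)$ is a fixed, bounded family of vectors; $\hat\bu_\ell$ and $\widehat C^{(\ell)}$ are therefore fixed objects. Layers $\ell+1,\dots,L$ and $a_L$ remain at initialization. For neuron $i$ the gradient is
\[
-\nabla_{w_{\ell,i}}\mathcal L=\frac1n\sum_\mu\bigl(y_\mu-\hat f(\bx_\mu)\bigr)\,\delta_{\ell,i}(\bx_\mu)\,\sigma'(h_{\ell,i}(\bx_\mu))\,\bz_{\ell-1}(\bx_\mu),
\]
where $\delta_{\ell,i}=[W_{\ell+1}^\top D_{\ell+1}\cdots W_L^\top D_L a_L]_i$ is the backpropagated signal and $D_m=\mathrm{diag}(\sigma'(h_m))$. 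The plan is to expand each factor in the small scales, keep the terms of order $\bar a_{\ell,i}$ and $\bar a_{\ell,i}\alpha_\ell$, and bound everything else.

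The bootstrap hypothesis is that $\|w_{\ell,i}(s)\|\le C\alpha_\ell$ for all $i$ and all $s\le t$. Under this hypothesis the following estimates hold.
\begin{enumerate}[label=(\alph*),noitemsep]
\item $|h_{\ell,i}|\lesssim\alpha_\ell$. Since $\sigma(0)=0$, this gives $\|\bz_\ell\|\lesssim\alpha_\ell$. Propagating upward with the initialized later layers gives $\|\bz_m\|\lesssim\prod_{k\le m}\alpha_k$ and $|\hat f|\lesssim\alpha_{L+1}\prod_{m\le L}\alpha_m$. Hence $y_\mu-\hat f=y_\mu+o(\cdot)$, and the $\hat f$-correction is suppressed by Assumption~\ref{ass:scale-sep}.
\item All later preactivations $h_m$ ($m>\ell$) are $o(1)$. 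Taylor expansion with bounded $\sigma''$ gives $D_m=c_0 I+O(|h_m|)$, so $\delta_{\ell,i}=c_0^{L-\ell}[(\prod W_m(0))^\top a_L]_i/c_0\cdot$(unit)$\,+$ corrections. The leading term reproduces $\bar a_{\ell,i}$ after absorbing one $c_0$ appropriately. Each correction carries an extra factor $\alpha_m$ with $m>\ell$, or $\alpha_\ell$ through $h_{\ell+1}$, which is small relative to the leading term thanks to the nondegeneracy~\eqref{eq:readout-nondegenerate}.
\item $\sigma'(h_{\ell,i})=c_0+c_1\langle w_{\ell,i},\bz_{\ell-1}\rangle+O(\alpha_\ell^2)$, using the bounded $\sigma'''$.
\end{enumerate}
Multiplying these expansions gives exactly $\eta c_0\bar a_{\ell,i}\hat\bu_\ell+\eta c_1\bar a_{\ell,i}\widehat C^{(\ell)}w_{\ell,i}(t)$, plus a per-step remainder
\[
\|R_{\ell,i}(t)\|\lesssim\eta\,|\bar a_{\ell,i}|\bigl(\alpha_\ell^2+\epsilon\bigr),
\]
where $\epsilon$ collects the later-layer corrections. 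To make the leading coefficients $O(1)$ in $\eta$-time, I would normalise time by $|\bar a_{\ell,i}|$, or equivalently read $\eta$ as including this scale; the statement implicitly does this.

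We then close the bootstrap. Summing the increments over $t\le T_\ell=\lfloor\tau\alpha_\ell/\eta\rfloor$ steps, the drift contributes $\lesssim T_\ell\eta=\tau\alpha_\ell$. The covariance term contributes $\lesssim T_\ell\eta\,\alpha_\ell\lesssim\tau\alpha_\ell^2$. The remainder contributes $\lesssim T_\ell\eta\,\alpha_\ell^2\lesssim\tau\alpha_\ell^3$, together with the $\epsilon$-part, which is of the same or smaller order under the scale hierarchy. This yields~\eqref{eq:remainder-bound}. Taking $\tau$ small (e.g.\ via a discrete Grönwall argument) and $C$ larger than the initial norm bound plus $\tau\|\hat\bu_\ell\|$ keeps $\|w_{\ell,i}(t)\|\le C\alpha_\ell$, so the hypothesis is self-consistent for all $t\le T_\ell$. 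Independence of $c_{0,\ell i},c_{1,\ell i}$ across $i$ follows because $\bar a_{\ell,i}$ depends only on the $i$-th column of $W_{\ell+1}(0)$ (through $(\prod W_m(0))^\top a_L$). Under i.i.d.\ initialization these columns are independent given the later layers; the statement should be read as conditional on $W_{\ell+2},\dots,a_L$.

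The main obstacle is the careful scale counting in step (b) together with the $\hat f$-feedback. One must verify that every discarded term is smaller than the kept $O(\alpha_\ell^2)$ covariance contribution, not merely smaller than the drift. This requires quantifying Assumption~\ref{ass:scale-sep}, for instance by taking $\alpha_m\le\alpha_\ell^2$ for $m>\ell$, so that later-layer corrections land at order $\alpha_\ell^3$. I would first try to make the assumption quantitative in exactly this way, and track each correction as a monomial in $(\alpha_\ell,\alpha_{\ell+1},\dots)$ relative to $\bar a_{\ell,i}$.
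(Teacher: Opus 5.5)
Your route is the paper's: a continuity argument keeps $\|w_{\ell,i}(t)\|\le C\alpha_\ell$ on the horizon, and Taylor expansion of the factors $\sigma'(h_m)$ along the backpropagation path gives the readout replacement $\bar a_{\ell,i}(\bx)\to\bar a_{\ell,i}$. The nondegeneracy condition makes that replacement a relative error. Then you expand $\sigma'(u)=c_0+c_1u+O(u^2)$ at layer $\ell$, use $r_\mu=y_\mu+o(\alpha_\ell^2)$, and get independence from the columns of $W_{\ell+1}(0)$ conditionally on the downstream weights. The one real problem is the closing step. You rescale time by $|\bar a_{\ell,i}|$ and assert that the $\epsilon$-part of the remainder is of order $\alpha_\ell^3$. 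You then concede this needs $\alpha_m\le\alpha_\ell^2$, which is not among the hypotheses. As written, you would prove the theorem only under a stronger assumption than stated.

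That strengthening is unnecessary, for two independent reasons.

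\emph{The statement does not rescale time.} The horizon $T_\ell=\lfloor\tau\alpha_\ell/\eta\rfloor$ uses the actual step size, and the displayed update carries $\eta\bar a_{\ell,i}$ explicitly. Your per-step bound $\eta|\bar a_{\ell,i}|(\alpha_\ell^2+\epsilon)$ is sharper than the paper's because it keeps the factor $|\bar a_{\ell,i}|$. With $|\bar a_{\ell,i}|\lesssim\gamma_\ell:=\alpha_{L+1}\prod_{m>\ell}\alpha_m=o(\alpha_\ell)$ and even the crude $\epsilon\lesssim\sum_{m>\ell}\alpha_m+|\hat f|=o(\alpha_\ell)$, this bound is already $\eta\,o(\alpha_\ell^2)$. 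It therefore sums to $o(\alpha_\ell^3)$ over $T_\ell$ steps. This is how the paper closes: it uses the uniform bound $\|\nabla_{w_{\ell,i}}\mathcal L\|\le K$ for the horizon and absorbs $|\bar a_{\ell,i}|\,o(\alpha_\ell)$ into $o(\alpha_\ell^2)$ via $|\bar a_{\ell,i}|=o(\alpha_\ell)$.

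\emph{Even your rescaled version closes.} That version is more informative, since the drift then really moves the weights by $\Theta(\alpha_\ell)$, and your scale counting in (b) is too pessimistic for it. For $m>\ell$, $h_m=W_m\bz_{m-1}$, and $\bz_{m-1}$ factors through $\bz_\ell=O(\alpha_\ell)$. Your own step (a) therefore gives $\|h_m\|\lesssim\alpha_m\alpha_\ell$: the factors multiply, it is not ``$\alpha_m$ or $\alpha_\ell$''. Likewise $|\hat f|\lesssim\gamma_\ell\alpha_\ell$. Hence $\epsilon=o(\alpha_\ell^2)$ under the qualitative hierarchy alone, and the remainder over the rescaled horizon $\tau\alpha_\ell/(\eta\gamma_\ell)$ is $O(\tau\alpha_\ell^3)$. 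If you rescale, use the common scale $\gamma_\ell$ rather than the per-neuron $|\bar a_{\ell,i}|$, since one GD run shares a single step size.

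A minor slip in (b): the leading term of $\delta_{\ell,i}$ is exactly $\bar a_{\ell,i}$, because there are $L-\ell$ factors $D_{\ell+1},\dots,D_L$, so there is no $c_0$ to absorb. The extra $c_0$ in the drift comes from $\sigma'(h_{\ell,i})$ in (c).
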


\begin{remark}
The three components in Equation~\eqref{eq:sec2-main-dynamics} have the following roles:
\begin{itemize}[noitemsep]
\item \textbf{Linear spike.} $\eta c_0 \bar{a}_{\ell,i}\hat{\bu}_\ell$ is independent of the current weight; over the horizon $T_\ell$ it moves the neuron by $O(\alpha_\ell)$ toward the empirical label-feature correlation.
\item \textbf{Covariance amplification.} $\eta \bar{a}_{\ell,i} c_1\widehat{C}^{(\ell)}w_{\ell,i}(t)$ is linear in the current weight and amplifies components aligned with the leading eigenvectors of $\widehat{C}^{(\ell)}$.
\item \textbf{Remainder.} The per-step remainder is second order in the current
layer scale, hence it accumulates to \(O(\alpha_\ell^3)\) over
\(T_\ell\asymp \alpha_\ell/\eta\) steps. Terms coming from later layers are
smaller by the scale hierarchy in Assumption~\ref{ass:scale-sep}.
\end{itemize}
The proof below shows that \(\bar a_{\ell,i}\) is the leading, sample-independent
backpropagation coefficient along the path from layer \(\ell\) to the readout.
The non-degeneracy condition only excludes neurons whose leading
backpropagation coefficient is accidentally cancelled; such neurons are inactive
at the displayed order.
\end{remark}

\subsection{Proof of Theorem~\ref{thm:neural-lofi}}

\begin{proof}
All constants below may depend on the fixed data and architecture bounds,
$L$ and the derivative bounds of $\sigma$.
We write \(K\) for a finite constant of this kind, changing from line to line,
and do not track separate data, width, or label constants.

For the standard architecture the gradient of the loss w.r.t.\
\(w_{\ell,i}\) involves a single backpropagation path through all layers
\(\ell+1,\dots,L\). Define the layer-to-layer Jacobian
\begin{equation}\label{eq:jacobian}
\bm{J}_{L\ell}(x)
:= \frac{\partial z_L(x)}{\partial z_\ell(x)}
= \prod_{m=\ell+1}^{L}\mathrm{diag}\!\bigl(\sigma'(\bm{h}_m(x))\bigr)W_m,
\quad \bm{h}_m(x)=W_m z_{m-1}(x),
\end{equation}
and the sample-dependent effective readout
\[
    \bar a_{\ell,i}(x):=[\bm J_{L\ell}(x)^\top a_L]_i .
\]
By the chain rule,
\begin{equation}\label{eq:full-grad}
\nabla_{w_{\ell,i}}\mathcal{L}
= -\frac{1}{n}\sum_{\mu=1}^n
  r_\mu\,\bar{a}_{\ell,i}(\bx_\mu)\,
  \sigma'\!\bigl(u_{\ell,i}(\bx_\mu)\bigr)\,\bz_{\ell-1}(\bx_\mu),
\quad r_\mu := y_\mu - \hat f(\bx_\mu),
\end{equation}
where $u_{\ell,i}(x)=\langle w_{\ell,i},\bz_{\ell-1}(x)\rangle$.

The key observation is that, under Assumption~\ref{ass:scale-sep}, the activation
derivatives along this path are well approximated by their value at zero, so
\(\bar a_{\ell,i}(x)\) is approximated by the initialization-dependent
coefficient in~\eqref{eq:effective-readout-fc}. On the layerwise horizons
considered below the representations are uniformly bounded and
\(\|W_m\|_{\mathrm{op}}\lesssim \alpha_m\). Hence
\(\|\bm h_m(x)\|\lesssim\alpha_m\), and Taylor expanding \(\sigma'\) around
zero gives
\begin{equation}\label{eq:diag-expand}
\mathrm{diag}(\sigma'(\bm{h}_m(x))) = c_0 I + c_1\,\mathrm{diag}(\bm{h}_m(x)) + O(\alpha_m^2),
\quad c_0=\sigma'(0),\;c_1=\sigma''(0).
\end{equation}
Let
\[
    \bm J_{L\ell}^{(0)}:=c_0^{L-\ell}\prod_{m=\ell+1}^L W_m(0)
\]
with the convention that the empty product is the identity. Expanding the
product for \(\bm J_{L\ell}(x)\) around this leading term gives the explicit
scale bound
\[
    \|\bm{J}_{L\ell}(x)-\bm J_{L\ell}^{(0)}\|_{\mathrm{op}}
    \le
    K
    \left(\prod_{m=\ell+1}^{L}\alpha_m\right)
    \left(\sum_{m=\ell+1}^{L}\alpha_m\right).
\]
Since each \(\alpha_m=o(\alpha_\ell)\) for \(m>\ell\), the right-hand side is
\(o(\alpha_\ell)\). Therefore
\begin{equation}\label{eq:Jac-correction}
\|\bm{J}_{L\ell}(x) - \bm{J}_{L\ell}^{(0)}\|_{\mathrm{op}}
= o(\alpha_\ell),
\end{equation}
and when \(\ell=L\) the left-hand side is zero. Multiplying by the final
readout gives, with \(\gamma_\ell=\alpha_{L+1}\prod_{m=\ell+1}^L\alpha_m\),
\[
    |\bar a_{\ell,i}(x)-\bar a_{\ell,i}|
    \le
    K\gamma_\ell
    \left(\sum_{m=\ell+1}^{L}\alpha_m\right),
\]
while \eqref{eq:readout-nondegenerate} gives the relative estimate
\begin{equation}\label{eq:readout-const-approx}
    \frac{|\bar{a}_{\ell,i}(x)-\bar{a}_{\ell,i}|}
    {|\bar a_{\ell,i}|}
    \le
    C
    \sum_{m=\ell+1}^{L}\alpha_m
    =
    o(\alpha_\ell).
\end{equation}
In particular,
\(\bar a_{\ell,i}(x)=\bar a_{\ell,i}(1+\varepsilon_{\ell,i}(x))\) with
\(\sup_x|\varepsilon_{\ell,i}(x)|=o(\alpha_\ell)\). The error is zero when \(\ell=L\).

We next translate the above bound to the required bound on the dynamics of $W_\ell$. We start by stating uniform bounds required throughout.  Since the sample,
widths, and activation are fixed, and since we may restrict to scales
\(\max_m\alpha_m\le1\), the row-norm condition
\(\|w_{j,i}\|\le2\alpha_j\) implies recursively that
\[
    \max_{\mu,j}\|z_j(\bx_\mu)\|\le K .
\]
Hence, at the start of
layer \(\ell\),
\[
\|\hat{\bu}_\ell\|
\le \frac1n\sum_\mu |y_\mu|\,\|\bz_{\ell-1}(\bx_\mu)\|
\le K,
\]
and
\[
\|\widehat C^{(\ell)}\|_{\mathrm{op}}
\le \frac1n\sum_\mu |y_\mu|\,\|\bz_{\ell-1}(\bx_\mu)\|^2
\le K .
\]

The same bounds give \(\|\bm J_{L\ell}^{(0)}\|_{\mathrm{op}}\le K\) and hence
uniformly bounded effective readouts.  The residuals \(r_\mu\) are uniformly
bounded as well.  Therefore
\(\|\nabla_{w_{\ell,i}}\mathcal L\|\le K\).  Choose \(\tau\) small enough that
\(\tau K\le 1\).  Then, for every
$t$ with $\eta t\le \tau\alpha_\ell$,
\[
\|w_{\ell,i}(t)\|
\le \|w_{\ell,i}(0)\|+\eta tK
\le 2\alpha_\ell .
\]

On this horizon,
\(|u_{\ell,i}(\bx_\mu)|\le K\alpha_\ell\).  Taylor's theorem and
Assumption~\ref{ass:nonlin} give
\begin{equation}\label{eq:sec2-sigma-linearization}
\sigma'(u) = c_0 + c_1\,u + \omega_\sigma(u),\quad
|\omega_\sigma(u)|\leq \tfrac{1}{2}\|\sigma'''\|_\infty u^2 .
\end{equation}
Moreover, by~\eqref{eq:readout-const-approx},
\(\bar a_{\ell,i}(\bx_\mu)=\bar a_{\ell,i}(1+\varepsilon_{\ell,i}(\bx_\mu))\)
with \(\sup_\mu|\varepsilon_{\ell,i}(\bx_\mu)|=o(\alpha_\ell)\).  Thus the
readout replacement error is smaller than the leading readout contribution
itself; quantitatively it is
\(|\bar a_{\ell,i}|\,o(\alpha_\ell)=o(\alpha_\ell^2)\), because
\(|\bar a_{\ell,i}|=O(\gamma_\ell)=o(\alpha_\ell)\).  By the hierarchy of scales, we further have
$|\hat f(\bx_\mu)|=o(\alpha_\ell^2)$, and
$r_\mu=y_\mu+o(\alpha_\ell^2)$.
Substituting these three estimates in~\eqref{eq:full-grad} yields
\[
\frac1n\sum_{\mu=1}^n
r_\mu\,\bar a_{\ell,i}(\bx_\mu)\,
\sigma'\!\bigl(u_{\ell,i}(\bx_\mu)\bigr)\,\bz_{\ell-1}(\bx_\mu)
=
\bar a_{\ell,i}\bigl[c_0\hat{\bu}_\ell
+c_1\widehat C^{(\ell)}w_{\ell,i}(t)\bigr]
+\mathcal E_{\ell,i}(t),
\]
so the two leading scalar coefficients in the main-text update
\eqref{eq:thm1approxmain} of Proposition~\ref{thm:layerwise-GD} are
\(c_{0,\ell i}=c_0\bar a_{\ell,i}\) and
\(c_{1,\ell i}=c_1\bar a_{\ell,i}\).  To see the independence, condition on
\(W_{\ell+2}(0),\ldots,W_L(0)\) and \(a_L\).  Then
\(\bar a_{\ell,i}\) is the inner product between the \(i\)-th column of
\(W_{\ell+1}(0)\) and a fixed downstream vector, up to the common scalar
\(c_0^{L-\ell}\).  The columns of \(W_{\ell+1}(0)\) are independent at
initialization, so the coefficients are independent across \(i\) at leading
order
with the per-step error bound
\[
\|\mathcal E_{\ell,i}(t)\|
\le
K\Bigl(
    |\bar a_{\ell,i}|\,\sup_\mu|\varepsilon_{\ell,i}(\bx_\mu)|
    + |\bar a_{\ell,i}|\alpha_\ell^2
    + o(\alpha_\ell^2)
\Bigr)
\le K\alpha_\ell^2 .
\]
Finally,
$w_{\ell,i}(t+1)-w_{\ell,i}(t)=-\eta\nabla_{w_{\ell,i}}\mathcal L$, so setting
$R_{\ell,i}(t):=\eta\mathcal E_{\ell,i}(t)$ proves
\eqref{eq:sec2-main-dynamics}.  Summing over \(0\le s<t\le T_\ell\) gives
\[
    \left\|
        \sum_{s=0}^{t-1}R_{\ell,i}(s)
    \right\|
    \le
    \eta t\,K\alpha_\ell^2
    \le
    K\alpha_\ell^3,
\]
which is \eqref{eq:remainder-bound}.
\end{proof}
\newpage
\section{Vector labels}
\label{app:vector-labels}
\providecommand{\bc}{\mathbf{c}}

In the main text we wrote Neural LoFi for scalar labels.  For vector labels
\(\by\in\mathbb R^c\), the natural object is not a signed scalar correlation
but the label-space vector of correlations.  Given the layer representation
\(\bz_{\ell-1}(\bx)\in\mathbb R^{p_{\ell-1}}\) and a scalar feature
\(\psi:\mathcal X\to\mathbb R\), define
\begin{equation}\label{eq:vec-corr}
    \bc_{\ell,n}[\psi]
    :=
    \widehat{\mathbb E}_n[\by\,\psi(\bx)]
    =
    \bigl(\widehat{\mathbb E}_n[y_a\psi(\bx)]\bigr)_{a=1}^c
    \in\mathbb R^c .
\end{equation}
For a fixed unit label direction \(\ba\), the corresponding scalar LoFi
correlation is
\[
    \widehat{\mathbb E}_n[
        \langle\ba,\by\rangle\,\psi(\bx)]
    =
    \langle\ba,\bc_{\ell,n}[\psi]\rangle .
\]
Optimizing this projected scalar correlation over \(\ba\) gives
\[
    \left\|\bc_{\ell,n}[\psi]\right\|_2^2
    =
    \max_{\|\ba\|_2=1}
    \left|
        \widehat{\mathbb E}_n[
            \langle\ba,\by\rangle\,\psi(\bx)]
    \right|^2
    .
\]
Thus the vector-label first-order LoFi criterion is
\begin{equation}\label{eq:vec-lin}
    \psi_{\ell,j}^{(1)}
    \in
    \arg\max_{\substack{\|\psi\|_{\mathcal H_{\ell-1}}=1\\
    \psi\perp \psi_{\ell,1}^{(1)},\dots,\psi_{\ell,j-1}^{(1)}}}
    \left\|
        \widehat{\mathbb E}_n[\by\,\psi(\bx)]
    \right\|_2^2.
\end{equation}
The vector-label second-order LoFi criterion is
\begin{equation}\label{eq:vec-quad}
    \varphi_{\ell,j}^{(2)}
    \in
    \arg\max_{\substack{\|\varphi\|_{\mathcal H_{\ell-1}}=1\\
    \varphi\perp \varphi_{\ell,1}^{(2)},\dots,\varphi_{\ell,j-1}^{(2)}}}
    \left\|
        \widehat{\mathbb E}_n[\by\,\varphi(\bx)^2]
    \right\|_2^2.
\end{equation}

\subsection{Parametric implementation}

We now explain how the feature-selection criteria
\eqref{eq:vec-lin}--\eqref{eq:vec-quad} translate into concrete spectral
computations when the candidate features are linear functions of the current
representation. The first-order criterion becomes an SVD of a label-feature
matrix, while the second-order criterion becomes an alternating Rayleigh
maximization over a label direction and a feature direction.

\paragraph{First-order term: SVD.}
For a linear feature
\(\psi_{\bv}(\bx)=\langle\bv,\bz_{\ell-1}(\bx)\rangle\), define
\[
    \widehat M_\ell
    :=
    \widehat{\mathbb E}_n[\by\,\bz_{\ell-1}^\top]
    \in\mathbb R^{c\times p_{\ell-1}} .
\]
Then \(\bc_{\ell,n}[\psi_{\bv}]=\widehat M_\ell\bv\), so
\eqref{eq:vec-lin} becomes
\[
    \max_{\|\bv\|_2=1}
    \|\widehat M_\ell\bv\|_2^2 .
\]
The maximizing feature directions are the right singular vectors of
\(\widehat M_\ell\), while the associated label directions are the left
singular vectors.  In the first layer this is exactly the SVD of
\(\widehat{\mathbb E}_n[\by\,\bx^\top]\).

\paragraph{Second-order term: alternating Rayleigh maximization.}
For a linear second-order feature
\(\varphi_{\bv}(\bx)=\langle\bv,\bz_{\ell-1}(\bx)\rangle\), define
\[
    \widehat{\bm r}_\ell(\bv)
    :=
    \widehat{\mathbb E}_n[
        \langle\bv,\bz_{\ell-1}\rangle^2\by]
    \in\mathbb R^c .
\]
Then \eqref{eq:vec-quad} is
\[
    \max_{\|\bv\|_2=1}
    \|\widehat{\bm r}_\ell(\bv)\|_2^2
    =
    \max_{\|\bv\|_2=1,\ \|\ba\|_2=1}
    \left|
        \bv^\top \widehat C_\ell(\ba)\bv
    \right|^2,
\]
where
\[
    \widehat C_\ell(\ba)
    :=
    \widehat{\mathbb E}_n[
        \langle\ba,\by\rangle\,
        \bz_{\ell-1}\bz_{\ell-1}^\top]
    \in\mathbb R^{p_{\ell-1}\times p_{\ell-1}} .
\]
For fixed \(\ba\), the best \(\bv\) is the signed leading eigenvector of
\(\widehat C_\ell(\ba)\), ordered by \(|\lambda|\).  For fixed \(\bv\), the
best label direction is
\[
    \ba
    =
    \widehat{\bm r}_\ell(\bv)/
    \|\widehat{\bm r}_\ell(\bv)\|_2 .
\]
This gives the alternating maximization used in the vector-label algorithm.

Combining the two spectral steps gives the vector-label Neural LoFi procedure:
at each layer we keep the leading right singular directions from the
first-order matrix, add the directions found by the second-order alternating
Rayleigh step, orthogonalize the resulting feature subspace, and then apply the
same random feature lift as in scalar Neural LoFi.

\begin{algorithm}[H]
\caption{\textcolor{blue}{Neural LoFi for vector labels}}
\label{alg:vector-neural-lofi}
\small
\begin{algorithmic}[1]
\Require Dataset $\{(\bx_\mu,\by_\mu)\}_{\mu=1}^n$ with
$\by_\mu\in\mathbb R^c$, depth $L$, first-order ranks
$\{q_\ell\}_{\ell=1}^L$, second-order ranks $\{k_\ell\}_{\ell=1}^L$,
widths $\{p_\ell\}_{\ell=1}^L$, inner iterations $T$, tolerance $\varepsilon$.
\State Initialize $\bz_0(\bx)\gets \bx$ and $p_0=d$.
\For{$\ell=1,\dots,L$}
    \State Form data matrices
    $Y=[\by_1,\dots,\by_n]^\top$ and
    $Z_{\ell-1}=[\bz_{\ell-1}(\bx_1),\dots,\bz_{\ell-1}(\bx_n)]^\top$.
    \State Compute the first-order SVD
    \[
        \widehat M_\ell
        =
        \frac1nY^\top Z_{\ell-1}
        =
        \widehat U_\ell\widehat\Sigma_\ell
        (\widehat V_\ell^{(1)})^\top ,
    \]
    and retain the top \(q_\ell\) right singular vectors in
    \(\widehat V_{\ell,q}^{(1)}\).
    \State Initialize \(\ba^{(0)}\in\mathbb S^{c-1}\), e.g. the leading left
    singular vector in \(\widehat U_\ell\).
    \For{$t=0,\dots,T-1$}
        \State Form the projected-label covariance
        \[
            \widehat C_{\ba^{(t)}}^{(\ell)}
            \gets
            \frac1n
            \sum_{\mu=1}^n
            \langle\ba^{(t)},\by_\mu\rangle\,
            \bz_{\ell-1}(\bx_\mu)\bz_{\ell-1}(\bx_\mu)^\top .
        \]
        \State Compute the signed leading eigenpair
        \[
            (\widehat\lambda^{(t)},\widehat\bv^{(t)})
            \in
            \arg\max_{\lambda,\ \|\bv\|_2=1}
            |\lambda|
            \quad\text{s.t.}\quad
            \widehat C_{\ba^{(t)}}^{(\ell)}\bv=\lambda\bv .
        \]
        \State Compute the label-correlation vector
        \[
            \widehat{\bm r}^{(t)}
            \gets
            \frac1n
            \sum_{\mu=1}^n
            \langle\widehat\bv^{(t)},\bz_{\ell-1}(\bx_\mu)\rangle^2
            \by_\mu .
        \]
        \State Update
        \(\ba^{(t+1)}
        \gets
        \widehat{\bm r}^{(t)}/\|\widehat{\bm r}^{(t)}\|_2\).
        \State \textbf{if}
        \(t>0\) and
        \(|\|\widehat{\bm r}^{(t)}\|_2-\|\widehat{\bm r}^{(t-1)}\|_2|
        <\varepsilon\)
        \textbf{then break}.
    \EndFor
    \State Set \(\ba_\ell^\star\gets \ba^{(t)}\).  Form
    \(\widehat C_{\ba_\ell^\star}^{(\ell)}\) and extract
    \(k_\ell\) eigenvectors by decreasing \(|\widehat\lambda|\):
    \[
        \widehat V_\ell^{(2)}
        =
        [\widehat\bv_1^{(\ell)},\dots,\widehat\bv_{k_\ell}^{(\ell)}] .
    \]
    \State Combine first- and second-order directions
    \[
        \widehat V_\ell
        \gets
        \operatorname{orth}
        \bigl([\widehat V_{\ell,q}^{(1)},\widehat V_\ell^{(2)}]\bigr).
    \]
    \State Project:
    \[
        \bg_\ell(\bx)
        \gets
        \widehat V_\ell^\top\bz_{\ell-1}(\bx).
    \]
    \State Lift:
    \[
        \bz_\ell(\bx)
        \gets
        p_\ell^{-1/2}\sigma(R_\ell\bg_\ell(\bx)),
        \qquad
        R_\ell\in\mathbb R^{p_\ell\times(q_\ell+k_\ell)} .
    \]
\EndFor
\State Fit a final linear or logistic readout
\(A\in\mathbb R^{c\times p_L}\) on \(\bz_L(\bx)\).
\State \Return Final predictor \(\widehat{\by}(\bx)=A\bz_L(\bx)\).
\end{algorithmic}
\end{algorithm}

\subsection{Connection with gradient descent}

The preceding subsection gives a static algorithmic rule. We now explain why
the same spectral objects arise from the early gradient-descent dynamics of a
vector-output network. We start with the two-layer setting, where the updates
of the readout and hidden weights form a coupled power iteration on the
label-feature matrix, and then indicate why the same mechanism is local in
depth.

Consider a two-layer vector-output network
\[
    \widehat{\by}(\bx)
    =
    A\sigma(W\bz(\bx))
    =
    \sum_{i=1}^p \ba_i\sigma(\langle\bw_i,\bz(\bx)\rangle),
    \qquad
    A=[\ba_1,\dots,\ba_p]\in\mathbb R^{c\times p},
    \quad
    W=\begin{bmatrix}\bw_1^\top\\ \vdots\\ \bw_p^\top\end{bmatrix},
\]
trained with squared loss and residual
\(\be_\mu=\by_\mu-\widehat{\by}(\bx_\mu)\).  Let
\[
    \widehat M
    :=
    \widehat{\mathbb E}_n[\by\,\bz^\top]
    \in\mathbb R^{c\times p_{\ell-1}} .
\]
At small initialization scale, \(\be=\by+o(1)\), while
\(\sigma(W\bz)=c_0W\bz+\text{higher-order terms}\) and
\(\sigma'(\langle\bw_i,\bz\rangle)=c_0+\text{higher-order terms}\).  Hence
the readout update is
\[
    \frac{dA}{dt}
    =
    \widehat{\mathbb E}_n[\be\,\sigma(W\bz)^\top]
    =
    c_0\,\widehat M W^\top
    +
    \text{higher-order terms}.
\]
Thus \(A\) is driven by the right projection of the label-feature matrix
\(\widehat M\) through the current hidden directions \(W\): column \(i\) obeys
\(\frac{d\ba_i}{dt}\simeq c_0\widehat M\bw_i\).  Conversely, the hidden weights obey
\begin{equation}\label{eq:vector-two-layer-hidden-gradient}
    \frac{d\bw_i}{dt}
    =
    \widehat{\mathbb E}_n[
        \langle\be,\ba_i\rangle\,
        \sigma'(\langle\bw_i,\bz\rangle)\bz].
\end{equation}
The same expansion gives
\[
    \frac{d\bw_i}{dt}
    =
    c_0\,\widehat M^\top\ba_i
    +
    \text{higher-order terms},
    \qquad
    \text{or equivalently}
    \qquad
    \frac{dW}{dt}
    =
    c_0\,A^\top\widehat M
    +
    \text{higher-order terms}.
\]
Thus \(W\) is driven by the left projection of \(\widehat M\) through the
current readout directions \(A\).  Ignoring the higher-order terms, the
leading dynamics are the coupled power iteration
\[
    \frac{dA}{dt}\simeq c_0\widehat M W^\top,
    \qquad
    \frac{dW^\top}{dt}\simeq c_0\widehat M^\top A.
\]
After normalization and deflation, the coupled iteration aligns the readout
directions with the left singular vectors of \(\widehat M\) and the hidden
directions with the corresponding right singular vectors.  This is the
gradient-descent mechanism behind the first-order vector LoFi SVD of
\(\widehat{\mathbb E}_n[\by\,\bz^\top]\).

The same argument is local in depth.  For a general layer \(\ell\), view
\(\bz_{\ell-1}\) as the input to the block
\(\bz_\ell=\sigma(W_\ell\bz_{\ell-1})\), and view the next-layer weights
\(W_{\ell+1}\), together with the backpropagated residual at layer
\(\ell+1\), as the vector readout.  Then the update of \(W_{\ell+1}\) plays the
role of the update of \(A\) above, while the update of \(W_\ell\) plays the
role of the update of \(W\).  Thus the same left/right projection mechanism
produces the layerwise SVD picture.

Keeping also the term linear in \(\bw_i\) in the hidden-weight expansion gives
\begin{equation}\label{eq:vector-two-layer-projected-lofi}
    \frac{d\bw_i}{dt}
    =
    c_0\,\widehat{\mathbb E}_n[
        \langle\by,\ba_i\rangle\bz]
    +
    c_1\,\widehat C(\ba_i)\bw_i
    +
    \text{higher-order terms}.
\end{equation}
Thus each readout direction \(\ba_i\) induces the scalar projected label
\(\langle\by,\ba_i\rangle\), and hence the projected-label covariance
\(\widehat C(\ba_i)\) appearing in Algorithm~\ref{alg:vector-neural-lofi}.

This already marks an important difference from the scalar-label case.  When
\(\by\in\mathbb R^c\), the linear label-feature matrix
\(\widehat M=\widehat{\mathbb E}_n[\by\bz^\top]\) has up to \(c\) nonzero left
singular directions, and its right singular vectors can therefore expose
multiple feature directions from the first-order signal alone.  Thus, for
vector labels, the SVD step on \(\widehat M\) may already be sufficient for
feature learning before one invokes the projected second-order covariance
\(\widehat C(\ba)\).

\newpage

\section{Partial whitening and a covariance correction}
\label{app:covariance-correction}

Theorem~\ref{thm:neural-lofi} replaces the residual $y-\hat f$ by $y$, so a
candidate feature $\phi$ is selected by maximizing its correlation with the
label. To retain the leading effect of $\hat f$, one can approximate it by
$\phi$ and instead minimize the empirical squared loss
\[
    \frac12\widehat{\mathbb E}_n[(y-\phi)^2]
    =
    \frac12\widehat{\mathbb E}_n[y^2]
    -\widehat{\mathbb E}_n[y\phi]
    +\frac12\widehat{\mathbb E}_n[\phi^2].
\]
Relative to correlation maximization, this adds the variance penalty
$\widehat{\mathbb E}_n[\phi^2]$. For
$\phi_{\bw}=\langle\bw,\bz_{\ell-1}\rangle$,
$\widehat{\mathbb E}_n[\phi_{\bw}^2]
=\bw^\top\widehat\Sigma^{(\ell)}\bw$, with
$
    \widehat\Sigma^{(\ell)}
    :=
    \widehat{\mathbb E}_n[
        \bz_{\ell-1}(\bx)\bz_{\ell-1}(\bx)^\top]
$. A tractable way to incorporate this covariance is regularized whitening
on its leading empirical eigenspace. Let
$\widehat\Sigma^{(\ell)}=\sum_{j=1}^{p_{\ell-1}}s_j\bv_j\bv_j^\top$
with $s_1\geq\cdots\geq s_{p_{\ell-1}}$, and define the regularized
top-$m$ whitening operator
\[
    P_{\lambda,m}^{(\ell)}
    :=
    \sum_{j=1}^{m}(s_j+\lambda)^{-1/2}\bv_j\bv_j^\top,
    \qquad
    \widetilde\bz_{\ell-1}=P_{\lambda,m}^{(\ell)}\bz_{\ell-1}.
\]
The empirical first-order correlation vector and second-order label-weighted
operator computed from $\widetilde\bz_{\ell-1}$ are, respectively,
\begin{equation}\label{eq:whitened-lofi-objects}
    \widetilde\bu_\ell
    =P_{\lambda,m}^{(\ell)}\hat\bu_\ell,
    \qquad
    \widetilde C^{(\ell)}
    =P_{\lambda,m}^{(\ell)}\widehat C^{(\ell)}P_{\lambda,m}^{(\ell)}.
\end{equation}

For the first-order feature, mapping the correlation direction back to the
original coordinates gives
\[
    \bw_{\lambda,m}^{(1)}
    =\bigl(P_{\lambda,m}^{(\ell)}\bigr)^2\hat\bu_\ell
    =\sum_{j=1}^{m}
      \frac{\langle\bv_j,\hat\bu_\ell\rangle}{s_j+\lambda}\bv_j.
\]
This is the ridge coefficient restricted to the leading empirical covariance
eigenspace. If $m=p_{\ell-1}$, it is exactly
\[
    \argmin_{\bw}
    \left\{
      \widehat{\mathbb E}_n[
        (y-\langle\bw,\bz_{\ell-1}\rangle)^2]
      +\lambda\|\bw\|_2^2
    \right\}
    =
    (\widehat\Sigma^{(\ell)}+\lambda I)^{-1}\hat\bu_\ell.
\]
For the second-order feature, the ordinary eigendecomposition of
$\widetilde C^{(\ell)}$ in~\eqref{eq:whitened-lofi-objects} replaces the
eigendecomposition of $\widehat C^{(\ell)}$. Under full whitening, a mapped-back
direction $\bw$ therefore solves the generalized eigenvalue problem
\[
    \widehat C^{(\ell)}\bw
    =\gamma(\widehat\Sigma^{(\ell)}+\lambda I)\bw;
\]
top-$m$ whitening gives the corresponding problem restricted to
$\operatorname{span}\{\bv_1,\ldots,\bv_m\}$.

The covariance correction also appears directly by retaining the next term in
the squared-loss dynamics. Write $t=\langle\bw,\bz_{\ell-1}\rangle$ and use
the local quadratic approximation
\[
    \phi_{\bw}(\bx)=c_0t+\frac{c_1}{2}t^2,
    \qquad
    \mathcal L_{\rm loc}(\bw)
    =\frac12\widehat{\mathbb E}_n[(y-\phi_{\bw}(\bx))^2].
\]
Then
\begin{align}
    -\nabla_{\bw}\mathcal L_{\rm loc}(\bw)
    &=
    \widehat{\mathbb E}_n[
        (y-\phi_{\bw})(c_0+c_1t)\bz_{\ell-1}] \notag\\
    &=
    c_0\hat\bu_\ell
    +\bigl(c_1\widehat C^{(\ell)}
      -c_0^2\widehat\Sigma^{(\ell)}\bigr)\bw
    -\frac{3c_0c_1}{2}\widehat{\mathbb E}_n[t^2\bz_{\ell-1}]
    -\frac{c_1^2}{2}\widehat{\mathbb E}_n[t^3\bz_{\ell-1}] \notag\\
    &=
    c_0\hat\bu_\ell
    +\bigl(c_1\widehat C^{(\ell)}
      -c_0^2\widehat\Sigma^{(\ell)}\bigr)\bw
    +O(\|\bw\|_2^2).
    \label{eq:squared-loss-covariance-correction}
\end{align}
The term $-c_0^2\widehat\Sigma^{(\ell)}\bw$ is therefore the first correction
to the correlation-based dynamics in Theorem~\ref{thm:neural-lofi}. After
treating the first-order forcing separately, the corresponding second-order
iteration is
\[
    \bw^{(t+1)}
    \leftarrow
    \operatorname{normalize}\!\left(
      \bw^{(t)}+\eta
      (c_1\widehat C^{(\ell)}-c_0^2\widehat\Sigma^{(\ell)})\bw^{(t)}
    \right).
\]

\newpage

\begingroup

\section{Higher-order Decay Ansatz}
\label{app:higher-order-averaging}

\subsection{A Motivation: averaging in the gradient}
In the main text, \Cref{thm:layerwise-GD} motivates Neural LoFi through the
label-weighted moment operator, and Appendix~\ref{app:GD-neural} derives this
operator from a layer-wise small-initialization expansion.  That argument
orders terms by their initialization scale.  The present section gives a different, and
complementary, justification for suppressing higher-order terms that does not
rely on layer-wise scale separation, and is therefore more natural from the point of view of machine learning practices. The point is instead that the gradient
is an average over data: even if a higher-order interaction is present and can
matter on individual samples, its contribution to the averaged gradient is
controlled by its correlation with the corresponding label-weighted degree
component after lower-order correlations have been removed.

Fix a layer \(\ell\) and write \(Z=\bz_{\ell-1}(X)\).  For a neuron initialized
at direction \(w_0\), the exact layerwise GD update direction from
\eqref{eq:full-grad} is, up to the global sign convention,
\[
    \widehat G_{\ell,i}^{\mathrm{GD}}(w_0)
    =
    \widehat{\E}_n
    \left[
        r\,\bar a_{\ell,i}(X)\,
        \sigma'(\langle w_0,Z\rangle)Z
    \right],
    \qquad
    r:=y-\hat f(X),
\]
where \(\bar a_{\ell,i}(X)\) is the sample-dependent effective readout coming
from the later layers.  For this averaging argument, we treat this readout as
constant in expectation and assume its fluctuations vanish after averaging over
data.  We also ignore the self-interaction from \(\hat f(X)\) in
\(r=y-\hat f(X)\), replacing \(r\) by \(y\) at leading order.  After absorbing
the constant readout into the neuron-dependent scalar coefficient, the gradient
direction relevant for the averaging argument is therefore
\[
    \widehat G_\ell(w_0)
    =
    \widehat{\E}_n
    \left[
        y\,\sigma'(\langle w_0,Z\rangle)Z
    \right].
\]
To see which feature directions this gradient can reveal, test it against a
reference direction \(v\):
\[
    \langle \widehat G_\ell(w_0),v\rangle
    =
    \widehat{\E}_n
    \left[
        y\,\sigma'(\langle w_0,Z\rangle)\langle Z,v\rangle
    \right].
\]
At population level, let \(P_\ell\) be the law of \(Z\), and choose an
orthonormal polynomial basis
\(\{\psi_{\ell,k,a}\}_{k,a}\) for the polynomial span in \(L^2(P_\ell)\),
chosen so that for each \(k\), \(\{\psi_{\ell,k,a}\}_a\) is an orthonormal
basis of the degree-\(k\) subspace obtained after removing all correlations
with degrees \(<k\).  This basis lets us decompose the tested projection
\(\langle G_\ell(w_0),v\rangle\) into contributions from different degree
interactions.  The first two contributions are exactly the Neural LoFi terms:
the degree-zero interaction gives the linear part \(\langle u_\ell,v\rangle\),
with \(u_\ell:=\E[yZ]\), while the degree-one interaction gives the
second-order part \(\langle v,C_\ell w_0\rangle\), with
\(C_\ell:=\E[yZZ^\top]\); empirically this is
\(\langle v,\widehat C^{(\ell)}w_0\rangle\).  The higher-degree interactions
are the terms controlled by the ansatz below.  If
\[
    a_{\ell,v}(Z):=\E[y\langle Z,v\rangle\mid Z],
    \qquad
    s_{\ell,w_0}(Z):=\sigma'(\langle w_0,Z\rangle),
\]
then both functions can be expanded in this basis.  Writing the coefficients as
\[
    \alpha_{\ell,k,a}(v)
    :=
    \langle a_{\ell,v},\psi_{\ell,k,a}\rangle_{L^2(P_\ell)},
    \qquad
    \beta_{\ell,k,a}(w_0)
    :=
    \langle s_{\ell,w_0},\psi_{\ell,k,a}\rangle_{L^2(P_\ell)},
\]
the population analogue \(G_\ell\) of \(\widehat G_\ell\) decomposes as
\[
    \begin{aligned}
    \langle G_\ell(w_0),v\rangle
    &=
    \E[a_{\ell,v}(Z)s_{\ell,w_0}(Z)] \\
    &=
    \sum_{k\ge0}\sum_a
    \alpha_{\ell,k,a}(v)\beta_{\ell,k,a}(w_0).
    \end{aligned}
\]
Equivalently, writing \(\Pi_{\ell,k}\) for the projection onto this
degree-\(k\) subspace, defined below,
\[
    \langle G_\ell(w_0),v\rangle
    =
    \sum_{k\ge0}
    \left\langle
        \Pi_{\ell,k}a_{\ell,v},
        \Pi_{\ell,k}s_{\ell,w_0}
    \right\rangle_{L^2(P_\ell)} .
\]
Thus the \(k\)-th term in the expansion is a correlation between two
degree-\(k\) polynomials, each orthogonal to all lower-degree terms.  For a
fixed data distribution and reference direction \(v\), the first polynomial is
fixed; since \(w_0\) is random, the second polynomial behaves like a randomly
chosen degree-\(k\) direction.  Heuristically, this is the same dimensional
calculation as the correlation between two generic degree-\(k\) polynomials.
If \(Z\) has \(m_\ell\)
degrees of freedom, the space of degree-\(k\) polynomials after removing
lower-order terms has dimension of order \(m_\ell^k\) for fixed \(k\).  A
random normalized vector in an \(m_\ell^k\)-dimensional space has squared
projection of order \(m_\ell^{-k}\) on any fixed normalized direction.

\begin{center}
\setlength{\fboxsep}{6pt}
\fbox{%
\begin{minipage}{0.90\linewidth}
\textbf{Neural Manifold Ansatz.}
Fix \(k\) and a layer \(\ell\), and write \(Z=z_{\ell-1}(X)\).  There exists
a scale \(m_\ell\) such that, for any fixed normalized degree-\(k\) polynomial
\(p_k(Z)\) orthogonal to lower-degree polynomials in \(L^2(P_\ell)\), any
fixed normalized degree-\(k\) univariate polynomial \(q_k\) orthogonal to lower
degrees, and \(w_0\sim {\rm Unif}(\mathbb S^{p_{\ell-1}-1})\),
\[
    \left|
        \E_{Z\sim P_\ell}\!
        \left[p_k(Z)q_k(\langle w_0,Z\rangle)\right]
    \right|^2_{\mathrm{typ}}
    \asymp_k
    m_\ell^{-k}.
\]
We call \(m_\ell\) the effective number of degrees of freedom in the layer
representation.  Equivalently, the unsquared overlap is typically of size
\(\asymp_k m_\ell^{-k/2}\).
\end{minipage}}
\end{center}

\paragraph{Equivalent tensor form.}

The above ansatz can equivalently be cast in terms of the scaling of random projections on higher-order moment tensors. For scalar labels, define the raw order-\(k\) label tensor
\[
    T_{\ell,k}^{\rm raw}
    :=
    \E[y\,Z^{\otimes k}].
\]
To remove lower-order correlations, associate a symmetric tensor \(T\) with
the polynomial \(Z\mapsto\langle T,Z^{\otimes k}\rangle\), project this
polynomial onto the degree-\(k\) component orthogonal to all lower degrees,
and write \(T_{\ell,k}^{\rm res}\) for the corresponding residualized tensor
obtained from \(T_{\ell,k}^{\rm raw}\).  In these coordinates, only the fixed
label tensor is residualized; the random initialization enters through the
raw rank-one tensor \(w_0^{\otimes k}\).  Thus the ansatz can also be read as
\[
    \left|
        \left\langle
        \frac{T_{\ell,k}^{\rm res}}{\|T_{\ell,k}^{\rm res}\|},
        \frac{w_0^{\otimes k}}{\|w_0^{\otimes k}\|}
        \right\rangle
    \right|_{\rm typ}^2
    \asymp_k
    m_\ell^{-k},
\]
For Gaussian or Hermite coordinates, \(w_0^{\otimes k}\) is the usual
degree-\(k\) random rank-one tensor.

\subsection{Why this scaling is natural}
The reason for \(m_\ell^{-k}\) is dimension counting.  If the current
representation has \(m_\ell\) independent coordinates, then the number of
degree-\(k\) monomials after lower degrees have been removed is
\[
    \binom{m_\ell+k-1}{k}
    \asymp_k
    m_\ell^k .
\]
Thus the degree-\(k\) component lives in a space with about \(m_\ell^k\)
orthogonal directions.  A random unit vector in an \(N\)-dimensional space has
squared projection \(1/N\) on a fixed unit vector; taking
\(N\asymp_k m_\ell^k\) gives the ansatz.  The assumption is therefore not that
higher-order terms are absent on individual samples.  It is that, after
averaging over data, a generic random degree-\(k\) component spreads its mass
over the full degree-\(k\) space rather than aligning with the particular
label-weighted component.

For Gaussian \(Z\sim N(0,I_m)\), this count is exact at the level needed
here.  The orthogonal degree-\(k\) basis is the Hermite basis and has
dimension \(\binom{m+k-1}{k}\).  If \(q_k\) were uniform on this unit sphere,
then \(\E|\langle p_k,q_k\rangle|^2=1/\binom{m+k-1}{k}\asymp_k m^{-k}\).
The random ridge feature \(\rho(\langle w_0,Z\rangle)\) has degree-\(k\)
Hermite component represented by \(w_0^{\otimes k}\).  Rotational invariance
forces the covariance of this random component to be a scalar multiple of the
identity on the degree-\(k\) Hermite space, and the trace normalization gives
the same \(m^{-k}\) squared-overlap scaling.

The same picture applies to an \(m\)-dimensional product distribution.  If
\(U=(U_1,\ldots,U_m)\) has product law and \(\varphi_{j,r}\) are the
one-dimensional orthonormal polynomials, then the degree-\(k\) products
\(\prod_j\varphi_{j,\alpha_j}(U_j)\), \(|\alpha|=k\), form
\(\binom{m+k-1}{k}\asymp_k m^k\) orthogonal directions.  For data lying near
an \(m\)-dimensional manifold, the same substitution \(d\mapsto m\) is
expected when the degree-\(k\) restrictions to the manifold remain
nondegenerate, so that the order-\(k\) polynomial features span a space of
dimension comparable to \(m^k\).  This is the effective-dimension content of
the ansatz.

\subsection{Consequence for the layerwise Neural LoFi approximation.}
The population analogues of the first- and second-order Neural LoFi operators
are
\[
    u_\ell:=\E[yZ],
    \qquad
    C_\ell:=\E[yZZ^\top]
\]
In the layerwise GD approximation discussed above, the full neuron
update is replaced by
\[
    G_\ell(w_0)=\E[y\sigma'(\langle w_0,Z\rangle)Z].
\]
For a reference direction \(v\), set
\[
    a_{\ell,v}(Z):=\E[y\langle Z,v\rangle\mid Z],
    \qquad
    s_{\ell,w_0}(Z):=\sigma'(\langle w_0,Z\rangle).
\]
Then
\[
    \langle G_\ell(w_0),v\rangle
    =
    \langle a_{\ell,v},s_{\ell,w_0}\rangle_{L^2(P_\ell)}.
\]
Now expand \(s_{\ell,w_0}\) in an orthonormal polynomial basis for the law
\(P_\ell\), grouped by degree after lower-degree correlations have been
removed.  Writing \(\Pi_{\ell,r}\) for the degree-\(r\) projection, the first
two components have the form
\[
    \Pi_{\ell,0}s_{\ell,w_0}=c_0,
    \qquad
    \Pi_{\ell,1}s_{\ell,w_0}=c_1\langle w_0,Z\rangle,
\]
where \(c_0,c_1\) are scalar expansion coefficients.  For \(r\ge2\), write
\[
    \Pi_{\ell,r}s_{\ell,w_0}
    =
    c_r q_{\ell,r,w_0},
    \qquad
    \|q_{\ell,r,w_0}\|_{L^2(P_\ell)}=1.
\]
These coefficients come from the orthonormal-polynomial expansion of
\(\sigma'(\langle w_0,Z\rangle)\), not from a Taylor expansion of \(\sigma'\).
By orthogonality across residual degrees,
\[
    \langle G_\ell(w_0),v\rangle
    =
    c_0\langle u_\ell,v\rangle
    +
    c_1\langle C_\ell w_0,v\rangle
    +
    \sum_{r\ge2}c_r R_{\ell,r}(w_0,v),
\]
where
\[
    R_{\ell,r}(w_0,v)
    :=
    \left\langle
        \Pi_{\ell,r}a_{\ell,v},
        q_{\ell,r,w_0}
    \right\rangle_{L^2(P_\ell)}.
\]
The averaging ansatz says that for each fixed \(v\), the residual
degree-\(r\) projection created by a random initialization direction satisfies
\begin{equation}
    \boxed{
    |R_{\ell,r}(w_0,v)|_{\mathrm{typ}}
    \lesssim_r
    m_\ell^{-r}.
    }
    \label{eq:higher-order-decay-ansatz}
\end{equation}
Consequently,
\[
    \langle G_\ell(w_0),v\rangle
    =
    c_0\langle u_\ell,v\rangle
    +
    c_1\langle C_\ell w_0,v\rangle
    +
    O_{\mathrm{typ}}(m_\ell^{-2}),
\]
with higher corrections of order \(m_\ell^{-r}\) at degree \(r\).  This is the
averaging mechanism behind the Neural LoFi approximation: the first two
terms are exactly the population first- and second-order LoFi operators, while
the higher-order ridge contributions are suppressed after averaging over data.

\subsection{CIFAR test of the ansatz}
\Cref{fig:lofi-layer-higher-order-decay} tests the ansatz on real data by measuring
the same random-overlap effect without explicitly constructing the full
degree-\(k\) polynomial space.  Each random initialization direction gives a
one-dimensional ridge slice of the current representation.  For random
directions \(w_{\ell,i}(0)\), set
\[
    a_{\mu,i}
    =
    \langle w_{\ell,i}(0),Z_\mu\rangle .
\]
For each \(i\), orthonormalize
\(1,a_{\mu,i},a_{\mu,i}^2,\ldots,a_{\mu,i}^{k_{\max}}\) by QR on the
reference samples, and let \(\phi_{\ell,k,i}\) be the resulting degree-\(k\)
feature.  Thus \(\phi_{\ell,k,i}\) is the empirical version of the
degree-\(k\) component of a random ridge feature after the lower-degree
components along the same slice have been removed.  It satisfies
\[
    \widehat{\E}_N[
        \phi_{\ell,j,i}(X)\phi_{\ell,k,i}(X)]
    =
    \delta_{jk}.
\]
We then plot
\[
    \widehat B_{\ell,k}
    =
    \mathrm{median}_i
    \left|
        \widehat{\E}_N[
            y\,\phi_{\ell,k,i}(X)]
    \right|^2
\]
after subtracting the fitted \(1/n\) finite-sample floor.  The quantity inside
the median is the squared empirical correlation between the label and a
normalized residual degree-\(k\) random ridge component.  Taking the median
over \(i\) estimates the typical random-initialization overlap rather than a
rare aligned direction.  Thus \(\widehat B_{\ell,k}\) is the empirical analogue
of the squared random-overlap effect underlying
\eqref{eq:higher-order-decay-ansatz}; the observed decay with \(k\) is
consistent with a degree-\(k\) polynomial space whose dimension grows like
\(m_\ell^k\).

\begin{figure}[H]
    \centering
    \includegraphics[width=0.65\linewidth]{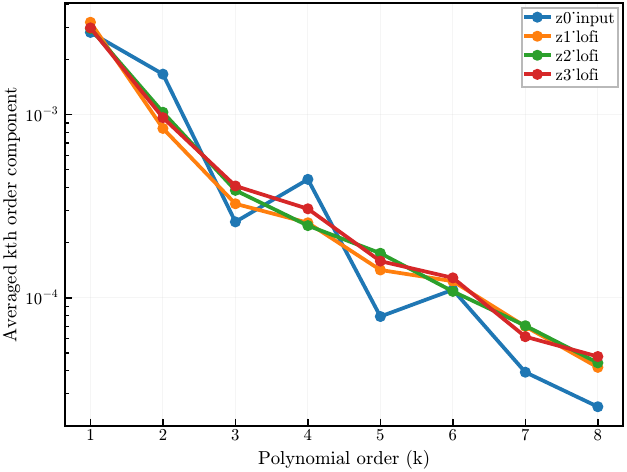}
    \caption{textbf{Layer-wise higher-order decay on CIFAR-10.}
    For a three-layer fully connected Neural LoFi model trained on binary
    CIFAR-10 animals-versus-vehicles, we plot the averaged squared
    degree-\(k\) residual random-projection component as a function of
    polynomial order \(k\) for the input representation and the three
    successive LoFi layer representations.}
    \label{fig:lofi-layer-higher-order-decay}
\end{figure}

\endgroup

\newpage

\section{Two-layer networks}\label{app:two-layer}

\subsection{Training a two-layer network on multi-index models}\label{app:two-layer-mim}

In this section, we instantiate the general framework on a canonical setting that has driven much of the recent theory of feature learning: \emph{training a two-layer network on a Gaussian multi-index model} (e.g. ~\cite{goldt2019dynamics,benarous2024stochastic,abbe2021staircase,abbe2023sgd,dandi2024two,bietti2022learning,bietti2023learning, arnaboldi2023high,damian2024computational,damiangenerative,dandi2024benefits}.)

The goal of this subsection is to make explicit how the second-order correlation criterion that drives Neural LoFi in the main text reduces, in this setting, to the classical \emph{information exponent} of ~\cite{benarous2024stochastic}, and how the regime $\mathrm{IE}\le 2$ is precisely the regime in which Neural LoFi alone (i.e., a single low-degree filtering step) suffices for weak recovery of the hidden subspace. This is connected to a recent line of work on the Hessian of the loss in, e.g. \cite{bonnaire2025role,zhang2025neural,defilippis2026optimal,montanari2026phase}. We then describe how, when $\mathrm{IE}=2$, the resulting dynamics of the two-layer network under gradient descent take the form of a \emph{saddle-to-saddle} cascade through the top directions of the population correlation matrix.

\paragraph{Setting}
We consider inputs $\bx\in\mathbb{R}^d$ with $\bx\sim\mathcal{N}(0,I_d)$ and a teacher of \emph{multi-index} form
\begin{equation}\label{eq:mim-teacher}
y \;=\; f^\star(\bx) \;=\; g^\star\!\bigl(\langle \bu^\star_1,\bx\rangle,\dots,\langle \bu^\star_r,\bx\rangle\bigr) + \xi,
\end{equation}
where $U^\star=[\bu^\star_1,\dots,\bu^\star_r]\in\mathbb{R}^{d\times r}$ has orthonormal columns spanning a hidden subspace $V^\star\subset\mathbb{R}^{d}$ of dimension $r=\mathcal{O}(1)$, $g^\star:\mathbb{R}^r\to\mathbb{R}$ is a fixed link function with $\mathbb{E}[g^\star(Z)^2]<\infty$ for $Z\sim\mathcal{N}(0,I_r)$, and $\xi$ is independent sub-Gaussian noise. The student is the two-layer network
\begin{equation}\label{eq:two-layer-student}
\hat f(\bx) \;=\; \frac{1}{\sqrt{p}}\sum_{i=1}^{p} a_i\,\sigma\!\bigl(\langle\bw_i,\bx\rangle\bigr),\qquad \bw_i\in\mathbb{R}^d,\ a_i\in\mathbb{R},
\end{equation}
trained by (online or layerwise) gradient descent on the squared loss with small initialization $\bw_i(0)\sim\mathcal{N}(0,d^{-1}I_d)$ and second-layer weights $a_i$ either fixed at random signs or trained on a fresh batch. This is the standard setup of e.g. ~\cite{goldt2019dynamics,arnaboldi2023high,benarous2024stochastic,bietti2023learning,abbe2023sgd,dandi2024benefits}.

\paragraph{From the LoFi correlation criterion to the information exponent}
Specializing the first-layer ($\ell=1$) Neural LoFi criterion of equation~\eqref{eq:lofi-accessibility} to this setting, the input representation is $\bz_0(\bx)=\bx$, and the population version of the second-order correlation maximized by Neural LoFi at layer $1$ becomes
\begin{equation}\label{eq:lofi-mim}
\rho(\varphi)
\;=\;
\Bigl|\,\mathbb{E}\!\left[y\,\varphi(\bx)^2\right]\Bigr|,
\qquad
\varphi\in\mathcal{H}_0,\ \|\varphi\|_{\mathcal{H}_0}=1,
\end{equation}
where $\mathcal{H}_0=L^2(\gamma_d)$ is the Gaussian RKHS associated with the identity representation reducing to the constraint $\norm{\bw}=1$.
. Expanding $f^\star$ in the Hermite basis, $f^\star(\bx)=\sum_{k\ge 0}\langle f^\star,H_k\rangle\,H_k(U^{\star\top}\bx)$, and any test feature $\varphi$ in the same basis, the criterion~\eqref{eq:lofi-mim} retains only the components of $\varphi^2$ that overlap with the lowest non-zero Hermite component of $f^\star$. Following~\cite{benarous2024stochastic}, define the \emph{information exponent} of $f^\star$ as
\begin{equation}\label{eq:ie-def}
\mathrm{IE}(f^\star)
\;\coloneqq\;
\min\bigl\{\,k\ge 1 \,:\, \mathbb{E}[f^\star(\bx)\,H_k(\langle \bu,\bx\rangle)]\not\equiv 0 \text{ for some } \bu\in V^\star\,\bigr\}.
\end{equation}
Our second-order correlation $\rho(\varphi)$ probes $\varphi^2$, so it is sensitive to the first two Hermite components. In particular,
\begin{itemize}
    \item if $\mathrm{IE}(f^\star)=1$ (linear teacher direction), the maximizer of~\eqref{eq:lofi-mim} corresponds to $\varphi$ linear in $\bx$ and aligned with the leading direction of $\mathbb{E}[y\bx]$;
        \item if $\mathrm{IE}(f^\star)=2$, the maximizer is a quadratic form in $\bx$ and the criterion reduces to the top-eigenvector problem for the population correlation matrix $C^\star\coloneqq \mathbb{E}[y\,\bx\bx^\top]\!-\!\mathbb{E}[y]\,I_d$;
            \item if $\mathrm{IE}(f^\star)\ge 3$, the second-order correlation is degenerate at the population level on the hidden subspace, and a single LoFi step recovers no signal.
            \end{itemize}
            This is precisely the well-known threshold separating ``easy'' from ``hard'' multi-index problems for online SGD~\cite{benarous2024stochastic,bietti2022learning,abbe2021staircase,damian2024computational}.

            \paragraph{Neural LoFi suffices for weak recovery when $\mathrm{IE}\le 2$.}
            Let $\hat C^{(1)}=\frac{1}{n}\sum_{\mu=1}^n y_\mu\,\bx_\mu\bx_\mu^\top - \bar y\,I_d$ be the empirical correlation used by Neural LoFi at layer $1$, and let $\widehat V_1$ be the top-$k_1$ eigenspace of $\hat C^{(1)}$. The following statement, which is a direct consequence of matrix concentration results applied to $\hat{C}^
            {(1)}$, makes the link with weak recovery quantitative.

            \begin{proposition}[Neural LoFi recovers $V^\star$ when $\mathrm{IE}\le 2$]\label{prop:lofi-ie-2}
            Assume the multi-index model~\eqref{eq:mim-teacher} with $\mathrm{IE}(f^\star)\le 2$ and bounded link function. There exist constants $c,C,\delta>0$ depending only on $g^\star$ such that, with sample size $n\ge C\,d$, the top-$r$ eigenspace $\widehat V_1$ of $\hat C^{(1)}$ satisfies
            \begin{equation}
            \Bigl\|\,P_{\widehat V_1} - P_{V^\star}\,\Bigr\|_{\mathrm{op}} \;\le\; 1-\delta,
            \qquad
            \text{i.e.\ weak recovery of }V^\star,
            \end{equation}
            with probability $1-d^{-c}$.
            \end{proposition}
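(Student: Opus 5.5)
The plan is to reduce the statement to a deterministic perturbation bound: first identify the population matrix exactly, then control the empirical deviation in operator norm, and conclude with Davis--Kahan.

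\textbf{Population structure.} By rotational invariance, write $\bx = U^\star \bz + \bx_\perp$ with $\bz\sim\mathcal N(0,I_r)$ independent of $\bx_\perp\in (V^\star)^\perp$. Since $y$ depends only on $\bz$ and the independent noise $\xi$, a direct computation gives
\[
C^\star=\mathbb E[y\,\bx\bx^\top]-\mathbb E[y]\,I_d
= U^\star M\,U^{\star\top},
\qquad
M:=\mathbb E\bigl[g^\star(\bz)(\bz\bz^\top-I_r)\bigr].
\]
Here $M$ is exactly the matrix of second Hermite coefficients of $g^\star$. The cross terms vanish because $\mathbb E[\bx_\perp]=0$, and the $(V^\star)^\perp$ block cancels against the $-\mathbb E[y]I_d$ centering. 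Hence $C^\star$ has range inside $V^\star$, and its nonzero eigenvalues are those of $M$.

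To obtain the operator-norm bound $\le 1-\delta$ for equal-rank projections, I need every direction of $V^\star$ to be seen. I will therefore read the hypothesis $\mathrm{IE}\le 2$ together with a nondegeneracy condition: $\gamma:=\min_j|\lambda_j(M)|>0$. In the $\mathrm{IE}=1$ case where $M$ is degenerate, I would augment $\widehat V_1$ with the linear direction $\hat{\bv}_0$ from Algorithm~\ref{alg:neural-lofi}, which is controlled by the same argument applied to the vector $\mathbb E[y\bx]=U^\star\mathbb E[g^\star(\bz)\bz]$. I will state this nondegeneracy explicitly as the interpretation of ``constants depending only on $g^\star$''.

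\textbf{Concentration.} Let $E=\hat C^{(1)}-C^\star$. Split the label as $y=y_+-y_-$ into bounded nonnegative parts. Each part $\frac1n\sum_\mu y_{\pm,\mu}\bx_\mu\bx_\mu^\top$ is a sample covariance of the sub-Gaussian vectors $\sqrt{y_{\pm}}\,\bx$, whose $\psi_2$-norm is bounded by $\|g^\star\|_\infty$ plus the noise scale. Standard covariance estimation (Vershynin) then gives $\|E\|_{\mathrm{op}}\lesssim\sqrt{d/n}+d/n+t/\sqrt n$ with probability $1-2e^{-t^2}$. The scalar term $(\bar y-\mathbb E y)I_d$ is of order $n^{-1/2}$ by Hoeffding. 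If the noise $\xi$ is unbounded, I would either truncate it at level $\sqrt{\log d}$, or note that it enters only through $\xi\,\bx\bx^\top$ with $\mathbb E[\xi\mid\bx]=0$ and handle it by the same sub-exponential covariance bound. Choosing $t\asymp\sqrt{\log d}$ gives failure probability $d^{-c}$.

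\textbf{Eigenspace perturbation.} By Weyl's inequality, the $r$ eigenvalues of $\hat C^{(1)}$ that perturb those of $M$ satisfy $|\hat\lambda|\ge\gamma-\|E\|$, while all remaining eigenvalues satisfy $|\hat\lambda|\le\|E\|$. So if $\|E\|<\gamma/2$, ordering by $|\lambda|$ selects exactly the perturbed $V^\star$-block. This remains true even though $M$ may have eigenvalues of both signs, since the bulk sits near $0$ and the selected eigenvalues are separated from $0$ in absolute value.

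Davis--Kahan $\sin\Theta$ then applies in the form where the spectral set is $\{|\lambda|\ge\gamma\}$ and the complement lies in $[-\|E\|,\|E\|]$, giving separation $\gamma-\|E\|$. This yields
\[
\|P_{\widehat V_1}-P_{V^\star}\|_{\mathrm{op}}=\|\sin\Theta\|_{\mathrm{op}}\le \frac{2\|E\|}{\gamma}.
\]
Taking $n\ge Cd$ with $C$ large enough in terms of $\gamma$ and $\|g^\star\|_\infty$ makes the right-hand side at most $1-\delta$, and in fact arbitrarily small.

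\textbf{Main obstacle.} The analysis itself is routine once the reduction is made; the delicate step is the statement's hypotheses. $\mathrm{IE}\le2$ alone does not force $M$ to have full rank $r$, and for $\mathrm{IE}=1$ one can even have $M=0$. Matching the proposition's operator-norm bound therefore requires this nondegeneracy interpretation, together with the linear-direction augmentation in the degenerate case.
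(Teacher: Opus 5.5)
This is the route the paper has in mind. The paper only asserts that the proposition is a direct consequence of matrix concentration applied to $\hat C^{(1)}$, and your completion is correct under your assumption $\min_j|\lambda_j(M)|>0$: the exact form $C^\star=U^\star M U^{\star\top}$, operator-norm concentration at $n\gtrsim d$, then Weyl and Davis--Kahan with the gap measured in $|\lambda|$.

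You are also right that this assumption is genuinely missing from the statement. Take $r=1$ and the bounded link $g^\star=\mathrm{sign}$, which has $\mathrm{IE}=1$ and $M=0$. The measure-preserving flip $\bx_{\perp,\mu}\mapsto\mathrm{sign}(z_\mu)\,\bx_{\perp,\mu}$ shows that $\hat C^{(1)}$ has the same rotation-invariant law as when labels are independent of the inputs, so its top eigenvector is uniform on the sphere. Note also that your fallback must replace an eigenvector by $\hat\bv_0$ rather than append it, since any subspace of dimension larger than $r$ gives $\|P_{\widehat V_1}-P_{V^\star}\|_{\mathrm{op}}=1$ automatically.
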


            In other words, when $\mathrm{IE}\le 2$, a single Neural LoFi layer is statistically sufficient: the same low-degree filtering step that, in the main text, defines the Neural LoFi feature already captures the hidden subspace at the optimal $n=\Theta(d)$ sample complexity, in agreement with the upper bounds of~\cite{benarous2024stochastic, abbe2023sgd}. When $\mathrm{IE}\ge 3$, Proposition~\ref{prop:lofi-ie-2} fails by construction, and recovery of $V^\star$ requires either non-Gaussian preprocessing of $y$ (e.g.\ polynomial reweightings, as in~\cite{damian2024computational,damiangenerative}) or genuinely deeper compositionality, which is the regime studied in Appendix~\ref{app:GD-neural} and the main text.

            \paragraph{Saddle-to-saddle dynamics in the Neural LoFi regime with $\mathrm{IE}=2$.}
            We now describe how, under the same multi-index model with $\mathrm{IE}(f^\star)=2$, the actual gradient-descent dynamics of the two-layer network~\eqref{eq:two-layer-student} traverse the top directions of $C^\star$ in a saddle-to-saddle cascade, recovering the same ordered features as Neural LoFi.

            Order the eigenvalues of $C^\star$ as $\lambda_1>\lambda_2>\cdots>\lambda_r>0$ on the hidden subspace and $\lambda_{r+1}=\cdots=\lambda_d=0$ off it, with associated eigenvectors $\bv^\star_1,\dots,\bv^\star_r$. With small initialization $\|\bw_i(0)\|\asymp d^{-1/2}$ and step-size $\eta$, the leading-order continuous-time dynamics of the (rescaled) neuron weights $\widetilde\bw_i(t)$ satisfy, after averaging over the second-layer signs, a power-iteration--type ODE driven by $C^\star$:
            \begin{equation}\label{eq:two-layer-pi}
            \dot{\widetilde\bw}_i(t)
            \;=\;
            \bigl(C^\star - \widetilde\bw_i^\top C^\star \widetilde\bw_i\,I_d\bigr)\,\widetilde\bw_i(t)\,+\,o(1),
            \end{equation}
            Equation~\eqref{eq:two-layer-pi} is the gradient flow of $-\widetilde\bw^\top C^\star \widetilde\bw$ on the sphere, whose only attractors are the top eigenvectors $\pm\bv^\star_1$ and whose other critical points are strict saddles indexed by the remaining $\bv^\star_j$.

            When several eigenvalues $\lambda_j$ are well separated, this gives rise to the \emph{saddle-to-saddle} picture: starting from a small isotropic initialization, the trajectory spends a time $T_j\asymp \lambda_j^{-1}\log(d)$ in a neighborhood of the saddle associated with $\bv^\star_j$ before escaping along the next leading direction \cite{arous2026learning}. The neurons therefore learn the directions $\bv^\star_1,\bv^\star_2,\dots,\bv^\star_r$ \emph{sequentially}, in order of decreasing population correlation $\lambda_j$, with sharp transitions between successive plateaus.

            The eigenbasis traversed by GD in such saddle-to-saddle dynamics is exactly the basis selected by the Neural LoFi criterion~\eqref{eq:lofi-mim}. In both cases, the relevant operator is $C^\star=\mathbb{E}[y\,\bx\bx^\top]$ (up to centering), and the ordering by $\lambda_j$ is the ordering of low-degree correlations with the label. Neural LoFi can thus be viewed as the \emph{static abstraction} of the saddle-to-saddle GD dynamics in the $\mathrm{IE}=2$ regime, replacing the dynamics through saddles by a single eigendecomposition of $\hat C^{(1)}$.

Finally, we note that with the use of a different loss other than the squared loss or with data reuse or label transformations~\cite{damian2024computational,damiangenerative}, the criterion in Equation \ref{eq:lofi-mim} is modified to:

\begin{equation}\label{eq:lofi-mim-gen}
\rho_g(\varphi)
\;=\;
\Bigl|\,\mathbb{E}\!\left[g(y)\,\varphi(\bx)^2\right]\Bigr|,
\qquad
\varphi\in\mathcal{H}_0,\ \|\varphi\|_{\mathcal{H}_0}=1,
\end{equation}

for a transformation $g:\mathbb{R}\rightarrow\mathbb{R}$. The condition $\rho_g(\varphi)\neq 0$ then corresponds to generative exponent~\cite{damiangenerative,arous2021online} $\geq 2$ instead of the information exponent~\cite{benarous2024stochastic}.

\newpage 
\section{A Solvable Theoretical Setting}
\label{app:th_rf}

The agnostic and recursive nature of Neural LoFi calls for a theoretical setting that contains a compositional structure, while not revealing the relevant intermediate variables to the learner. In this appendix, we follow the hierarchical spectral construction of \cite{tabanelli2026deep}, building on the fundamental  earlier works \cite{wang2023learning,nichani2023provable,fu2025learning}. 
This allows us to define a controlled high-dimensional model and to use it to study how depth turns a globally hard learning problem into a sequence of simpler spectral recoveries.

A natural way to isolate the role of depth is to consider teacher-student models where the target is not merely a low-dimensional function of the input, but is built through a hierarchy of intermediate representations. Such models have appeared in several recent works on compositional learning and the computational advantage of depth, including random hierarchy models, hierarchical Gaussian targets, and polynomial teacher-student constructions (e.g.
\cite{cagnetta2024deep,garnierbrun2025transformerslearnstructureddata,dandicomputational, wang2023learning, nichani2023provable, fu2025learning,cagnetta2025learning,tabanelli2026deep,cagnetta2026deriving,ren2026provable}). 
They share the same guiding principle: a target may look high-dimensional or high-degree as a function of the input, while becoming low-degree after the right intermediate representation has been found. This is precisely the situation in which depth should help, since learning can proceed by a sequence of simpler feature-recovery problems rather than by solving the full high-degree task at once.

\subsection{Setting}\label{subsec:set_th}

We focus on the high-dimensional Gaussian teacher-student model of \cite{tabanelli2026deep}, which provides a particularly tractable instance of this principle. The input is Gaussian, $\bx\sim\mathcal N(0,I_d)$, and the target is generated by a two-level compositional hierarchy
\begin{align}
    \bx \in \mathbb R^d
    \quad \longrightarrow \quad
    h^{(1)}(\bx) \in \mathbb R^{d_1}
    \quad \longrightarrow \quad
    h^{(2)}(\bx) \in \mathbb R
    \quad \longrightarrow \quad
    y .
\end{align}
For $q\geq 1$, we denote by $H_q(\cdot)$ the normalized Hermite polynomial of order $q$, either in its scalar or tensor-valued form.\footnote{
For a vector $\bx\in\mathbb R^m$, the tensor Hermite polynomial $H_q(\bx)\in(\mathbb R^m)_{\mathrm{sym}}^{\otimes q}$ is defined through the tensorial Rodrigues formula
$
\sqrt{q!}\,H_q(\bx)
=
(-1)^q e^{\|\bx\|^2/2}
\nabla_{\bx}^{\otimes q}
\left(e^{-\|\bx\|^2/2}\right),
$
where $\nabla_{\bx}^{\otimes q}$ denotes the $q$-fold symmetric tensor of derivatives. 
For $m=1$, this reduces to the normalized scalar Hermite polynomial
$
\sqrt{q!}\,H_q(z)
=
(-1)^q e^{z^2/2}
\frac{d^q}{dz^q}
\left(e^{-z^2/2}\right).
$
}
We write $\langle\cdot,\cdot\rangle$ for the Frobenius product between tensors of the same order.\footnote{
For example, if $\bx\in\mathbb R^m$ and $A\in\mathbb R^{m\times m}$ is symmetric, then
$
\left\langle A,H_2(\bx)\right\rangle
=
\frac{1}{\sqrt{2}}
\left(
    \bx^\top A\bx-\mathrm{Tr}(A)
\right).
$
}
We write $F_q$ for the symmetric flattening map from order-$q$ symmetric tensors to $\mathbb R^{D_q}$, with $D_q=\binom{d+q-1}{q}$, chosen so that the Frobenius product is preserved. 
Below, we freely identify a symmetric tensor with its flattened representation whenever no ambiguity arises. The teacher parameters are given by symmetric tensors
\begin{align}
    A_i^{(1)}
    \in
    (\mathbb R^d)_{\mathrm{sym}}^{\otimes q},
    \qquad
    i=1,\ldots,d_1,
    \qquad
    d_1=d^\epsilon,
\end{align}
normalized so that the first-layer features have order-one variance, and by a symmetric matrix
\begin{align}
    A^{(2)}
    \in
    \mathbb R^{d_1\times d_1}.
\end{align}
The latent variables and the label are then defined as
\begin{align}
    h^{(1)}_i(\bx)
    &=
    \left\langle
        A_i^{(1)}, H_q(\bx)
    \right\rangle,
    \qquad i=1,\ldots,d_1, \\
    h^{(2)}(\bx)
    &=
    \left\langle
        A^{(2)}, H_2(h^{(1)}(\bx))
    \right\rangle, \\
    y
    &=
    g^\star(h^{(2)}(\bx)).
\end{align}
Thus the first layer selects only $d_1=d^\epsilon$ directions inside the ambient degree-$q$ Hermite space of dimension $D_q$. When $d_1\ll D_q$, the relevant information is sparse in this low-degree feature space: the target depends on $\bx$ only through a small hidden subspace of Hermite features. Learning the first layer therefore amounts to recovering this subspace, so that the rest of the hierarchy can be expressed as a low-degree problem in the variables $h^{(1)}$.

The analysis of \cite{tabanelli2026deep} shows that this sparse compositional structure can be exploited by a hierarchical spectral estimator. Rather than learning the full composed function in one step, the procedure first recovers the hidden degree-$q$ subspace defining $h^{(1)}$, and then uses this recovered representation to make the next component of the hierarchy accessible. In this sense, depth turns the learning problem into a sequence of spectral recovery tasks, each one exposing the variables needed by the next layer.

This gives a clean explanation for the advantage of depth in this model. A one-shot method that works directly on the input must resolve the full high-degree dependence of $y$ on $x$. By contrast, the hierarchical spectral procedure only needs to reveal the first representation and then reuse it to make the next layer visible. In the regime $d_1=d^\epsilon$, the first stage requires on the order of $d^{q+\epsilon}$ samples, while the second stage requires only the sample complexity of a quadratic problem in dimension $d_1$. The dominant cost is therefore the recovery of the first hidden representation, rather than the degree of the full composed polynomial.

The price to pay is that the corresponding spectral estimators are still partially co-designed with the Gaussian-Hermite structure of the teacher. Indeed, the first stage is built in an explicit degree-$k$ Hermite feature space, while the second stage uses a prescribed second-order Hermite structure in the recovered variables. A natural way to relax this feature-design aspect is to replace the explicit Hermite construction by nonlinear random features. Rather than specifying the polynomial coordinates in advance, one lets a random feature map generate a generic nonlinear representation and applies the same layer-wise spectral selection in that space. In the hierarchical teacher-student setting above, this random-feature extension of the hierarchical spectral estimator matches precisely the Neural LoFi algorithm.

\subsection{Random-feature hierarchical estimator}
\label{app:rf-estimator}

In this setting, Neural LoFi takes the following concrete form. Let $W_1=(w_{1,a})_{a=1}^{p_1}$ with $w_{1,a}\sim \mathrm{Unif}(\mathbb S^{d-1})$, and define the first random-feature representation
\begin{align}
    \phi^{(1)}_\mu
    =
    \frac{1}{\sqrt{p_1}}
    \sigma_1(W_1 x_\mu)
    \in \mathbb R^{p_1}.
\end{align}
The first spectral operator is
\begin{align}
    \widehat C_1
    =
    \frac1n
    \sum_{\mu=1}^n
    y_\mu
    \phi^{(1)}_\mu
    \phi^{(1)\top}_\mu .
\end{align}
Let $\widehat V_1\in\mathbb R^{p_1\times d_1}$ contain the eigenvectors of $\widehat C_1$ associated with the largest eigenvalues in absolute value. Here we keep $d_1$ directions for simplicity; this can be replaced by a standard rank-selection step. The recovered first-layer coordinates are
\begin{align}
    \widehat h^{(1)}_\mu
    =
    \widehat V_1^\top \phi^{(1)}_\mu
    \in \mathbb R^{d_1}.
\end{align}
We then draw $W_2=(w_{2,a})_{a=1}^{p_2}$ with $w_{2,a}\sim \mathrm{Unif}(\mathbb S^{d_1-1})$, and define
\begin{align}
    \phi^{(2)}_\mu
    =
    \frac{1}{\sqrt{p_2}}
    \sigma_2(W_2 \widehat h^{(1)}_\mu)
    \in \mathbb R^{p_2}.
\end{align}
At the second layer, since the teacher contains only one hidden direction, we do not construct a matrix-valued spectral estimator. Instead, we directly form the first-order moment vector
\begin{align}
    \widehat v_2
    =
    \frac1n
    \sum_{\mu=1}^n
    y_\mu
    \phi^{(2)}_\mu .
\end{align}
The associated second-layer coordinate is then obtained by projection:
\begin{align}
    \widehat h^{(2)}_\mu
    =
    \widehat v_2^\top \phi^{(2)}_\mu .
\end{align}
Finally, the readout is fitted on the one-dimensional representation $\{(\widehat h^{(2)}_\mu,y_\mu)\}_{\mu=1}^n$, for instance by ridge regression in a polynomial feature space,
\begin{align}
    \widehat g
    \in
    \arg\min_{g\in\mathcal G}
    \frac1n
    \sum_{\mu=1}^n
    \left(
        y_\mu - g(\widehat h^{(2)}_\mu)
    \right)^2
    +
    \lambda \|g\|_{\mathcal G}^2 .
\end{align}

The choice of keeping exactly \(d_1\) eigenvectors at the first layer is not meant to define a different procedure, but simply to make explicit the outcome of the usual rank-selection step within Neural LoFi in the present setting. More generally, one could keep an arbitrary number \(k\) of directions and optimize over \(k\), exactly as in the rest of the pipeline. In the model considered here, this optimization would select \(d_1\) directions at the first layer and a single direction at the second layer. In that sense, fixing \(d_1\) in the first layer and using the linear estimator \(\widehat v_2\) in the second layer is fully equivalent to the standard Neural LoFi selection rule.

\subsection{Main Results}\label{subsec:th_main_results}

The main prediction of this tractable model is that replacing the explicit Hermite structure of the estimator by random features preserves the emergence transition. In particular, for a degree-$q$ first layer with $d_1=d^\epsilon$ hidden variables, we expect the first representation $h^{(1)}$ to become recoverable at the sample scale
\begin{align}
    n
    \gg
    d^{q+\epsilon}.
\end{align}
In the quadratic setting used in our experiments, $q=2$, and the predicted first-stage transition is therefore $n \gg d^{2+\epsilon}$.

This prediction is the concrete instance of the emergence criterion in Eq.~\eqref{eq:emergence_feature} in the main text. Indeed, Eq.~\eqref{eq:emergence_feature} states that the empirical fluctuation level is governed by the effective dimension of the current residual feature class. Section~\ref{app:effective-dimension-hermite} shows that, before the first hidden variables have been recovered, the relevant effective dimension is the size of the degree-$q$ Hermite block, namely $D_q=O(d^q)$. The additional factor $d_1=d^\epsilon$ comes from separating and aligning with the $d_1$ planted directions in this block, giving the scale $D_qd_1=O(d^{q+\epsilon})$.

The role of the next subsection is to explain why the random-feature estimator contains the same signal-bearing Hermite spectral object as the explicit construction, now embedded in the random-feature representation. Figure~\ref{fig:rf_theoretical} illustrates the resulting transition numerically: in the quadratic case, the drop in MSE, the growth of the overlap with $h^{(1)}$, and the separation of the leading eigenvalues all occur at the predicted first-layer emergence scale.

\subsection{Mathematical Justification}\label{gif:rf_maths}

We now analyze the signal structure of the random-feature estimator introduced above. The key point is that, although the estimator is built from generic nonlinear random features, its population signal behaves as a well-defined Hermite estimator aligned with the teacher hierarchy. This will allow us to connect the first step of the Neural LoFi algorithm to the spectral transition established in the Hermite model of \cite{tabanelli2026deep}.

Let us expand the first-layer activation in Hermite as
\begin{align}
    \sigma_1(z)
    =
    \sum_{q\geq 2}
    c_q H_q(z),
    \qquad
    c_q
    =
    \mathbb E_{z\sim \mathcal N(0,1)}
    \left[
        \sigma_1(z)H_q(z)
    \right],
\end{align}
where $H_q$ denotes the normalized scalar Hermite polynomial of degree $q$.
For each row $\bw_{1,a}$ of $W_1$, we use the standard Hermite tensor identity
\begin{align}
    H_q(\langle \bw_{1,a},\bx\rangle)
    =
    \left\langle
        \bw_{1,a}^{\otimes q},
        H_q(\bx)
    \right\rangle .
\end{align}
Here and below, $\bw_{1,a}^{\otimes q}$ denotes the degree-$q$ Hermite coefficient vector, with the multi-index normalization chosen so that the above identity holds. Collecting these coefficient tensors over all rows of $W_1$, and flattening them with the Frobenius-preserving map $F[\cdot]$, we define:
\begin{align}
    \frac{1}{\sqrt{p_1}} H_q(W_1\bx)
    =
    P_q H_q(\bx),\quad \textrm{with} \quad P_q
    :=
    \frac1{\sqrt{p_1}}
    \begin{bmatrix}
        F[\bw_{1,1}^{\otimes q}]^\top \\
        \vdots \\
        F[\bw_{1,p_1}^{\otimes q}]^\top
    \end{bmatrix}
    \in \mathbb R^{p_1\times D_q}.
\end{align}
where $H_q(W_1\bx)\in\R^{p_1}$ is understood entrywise on the left-hand side and $H_q(\bx)\in \R^{D_q}$ is the flattened Hermite tensor. Hence the first random-feature representation decomposes as
\begin{align}
    \phi_\mu^{(1)}
    =
    \frac{1}{\sqrt{p_1}}\sigma_1(W_1\bx_\mu)
    =
    \sum_{q\geq 2}
    c_q
    P_q H_q(\bx_\mu).
\end{align}

We denote by
\begin{align}
    \widehat C_H^{(q)}
    :=
    \frac1n
    \sum_{\mu=1}^n
    y_\mu
    H_q(\bx_\mu)H_q(\bx_\mu)^\top
\end{align}
the degree-$q$ Hermite moment matrix appearing in the middle. If one uses the centered second-order Hermite convention of \cite{tabanelli2026deep}, this matrix should be replaced by its centered version; the difference is an empirical isotropic term, controlled by the same estimates and absorbed in the rates below. With this notation, the degree-$q$ contribution to
the first Neural LoFi operator is
\begin{align}
    \widehat C_{\rm RF}^{(q)}
    =
    c_q^2
    P_q
    \widehat C_H^{(q)}
    P_q^\top .
\end{align}
Thus the random-feature operator contains the Hermite estimator conjugated by the random
Hermite embedding $P_q$, up to the scalar factor $c_q^2$.

We now focus on the quadratic case $q=2$, which is the case controlled explicitly by the quadratic Hermite decomposition of \cite{wen2025when}. Define the normalized RF Gram
\begin{align}
    G_2
    :=
    D_2 P_2^\top P_2 .
\end{align}
The important point is that $G_2$ is not close to the identity on the full degree-$2$ Hermite space. The trace direction produces a deterministic contraction spike. More precisely, by Lemma F.6 and Corollary F.7 of \cite{wen2025when}, one has the decomposition
\begin{align}
    G_2
    =
    I_{D_2}
    +
    K_2
    +
    R_2,
    \qquad
    K_2=\theta_d ee^\top,
    \qquad
    |\theta_d|\le \widetilde O(d),
\end{align}
where $e\in\mathbb R^{D_2}$ is the normalized trace direction in the degree-$2$ Hermite block. The term $K_2$ is the explicit non-trivial contraction term coming from the trace component of the quadratic Hermite features, while $R_2$ is the remaining centered fluctuation of the random-feature Gram, with
\begin{align}
    \|R_2\|_{\op}
    \le
    \widetilde O\!\left(
        \sqrt{\frac{D_2}{p_1}}
        +
        \frac{D_2}{p_1}
    \right),
\end{align}
where the $\widetilde O(\cdot)$ hides logarithmic factors in $d$. This conservative full-column concentration bound is not expected to be optimal, but it is sufficient for the projected comparison below.

Finally, let $A^{(1)}\in\mathbb R^{d_1\times D_2}$ be the matrix of planted first-layer Hermite directions, and define its random-feature image by
\begin{align}
    A_{\RF}^{(1)}
    :=
    \sqrt{D_2}\,
    A^{(1)}
    P_2^\top
    \in\mathbb R^{d_1\times p_1}.
\end{align}
We can now state the projected RF analogue of Theorem~3.1 in
\cite{tabanelli2026deep}.
\medskip

\begin{theorem}[Projected RF Hermite recovery, quadratic case]
\label{thm:rf-hermite-bridge-k2}
Assume the setting and normalization of Theorem~3.1 in
\cite{tabanelli2026deep}, with $d_1=d^\varepsilon$ and
$\varepsilon<1/2$. With $G_2$, $K_2$, $R_2$ as defined above, the following holds with
high probability:
\begin{align}
&\sqrt{d_1}
\left\|
    \frac{D_2}{c_2^2}
    A_{\RF}^{(1)}
    \widehat C_{\rm RF}^{(2)}
    A_{\RF}^{(1)\top}
    -
    \nu_1A^{(2)}
\right\|_{\op}
\nonumber\\
&\qquad\le
\widetilde O\!\left(
    \sqrt{\frac{d^2d_1}{n}}
\right)
+
\widetilde O\!\left(
    \frac{d_1}{\sqrt d}
\right)
+
\widetilde O\!\left(
    \frac1{\sqrt{d_1}}
\right)
\nonumber\\
&\qquad\quad+
\widetilde O\!\left(
    d_1\sqrt{\frac{d_1}{n}}
    +
    \frac{d_1^2}{d^2}
    +
    \sqrt{d_1} \left(
        \sqrt{\frac{D_2}{p_1}}
        +
        \frac{D_2}{p_1}
    \right)\left(1+
        \sqrt{\frac{D_2}{p_1}}
        +
        \frac{D_2}{p_1}
    \right)
\right).
\label{eq:rf-hermite-final-rate}
\end{align}
In particular, if
\[
    n\gg d^2d_1,
    \qquad
    p_1 \gg \sqrt{d_1}d^2,
\]
then the additional RF bridge error is $o(1)$. Therefore the projected RF
estimator has the same limiting signal as the Hermite estimator, up to
the deterministic scalar factor $c_2^2/D_2$.
\end{theorem}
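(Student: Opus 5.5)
The plan is to reduce the projected RF quantity to the projected Hermite quantity controlled by Theorem~3.1 of \cite{tabanelli2026deep}, and then bound the extra error coming from the RF Gram through the decomposition $G_2=I_{D_2}+K_2+R_2$. Substituting $\widehat C_{\rm RF}^{(2)}=c_2^2P_2\widehat C_H^{(2)}P_2^\top$ and $A_{\RF}^{(1)}=\sqrt{D_2}A^{(1)}P_2^\top$ gives the exact identity
\[
\frac{D_2}{c_2^2}A_{\RF}^{(1)}\widehat C_{\rm RF}^{(2)}A_{\RF}^{(1)\top}
=\frac{1}{D_2}\,A^{(1)}G_2\,\widehat C_H^{(2)}\,G_2A^{(1)\top},
\]
after which I would substitute the decomposition of $G_2$.

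\textbf{Step 1 (main term).} The term with $G_2$ replaced by $I_{D_2}$ is $A^{(1)}\widehat C_H^{(2)}A^{(1)\top}$, up to the $D_2$ normalization convention, which I would fix so that this matches the projected Hermite estimator. Theorem~3.1 of \cite{tabanelli2026deep} bounds its deviation from $\nu_1A^{(2)}$ after multiplication by $\sqrt{d_1}$. This produces the first line of \eqref{eq:rf-hermite-final-rate}, namely $\sqrt{d^2d_1/n}$, $d_1/\sqrt d$ and $1/\sqrt{d_1}$. I would use the centered Hermite convention and absorb the isotropic term as stated in the text.

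\textbf{Step 2 (trace spike $K_2$).} The terms involving $K_2=\theta_d ee^\top$ enter only through $A^{(1)}e$, which is the vector of overlaps of the planted directions with the trace direction. Under the normalization of \cite{tabanelli2026deep}, the planted $A_i^{(1)}$ are random and nearly traceless, so $\|A^{(1)}e\|\lesssim\sqrt{d_1}/D_2^{1/2}\cdot\widetilde O(1)$. I would combine this with $|\theta_d|\le\widetilde O(d)$ and with bounds on $e^\top\widehat C_H^{(2)}e$ and $\|\widehat C_H^{(2)}e\|$. These last two are a scalar label-weighted trace moment, of order $1+\sqrt{1/n}$, and a row of the empirical Hermite matrix. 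The goal is to obtain the $d_1\sqrt{d_1/n}+d_1^2/d^2$ terms. The cross terms $A^{(1)}\widehat C_H^{(2)}e\,\theta_d\,e^\top A^{(1)\top}$ need a concentration bound on $\widehat C_H^{(2)}e$ projected onto the planted span.

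\textbf{Step 3 (fluctuation $R_2$).} The remaining terms are $A^{(1)}R_2\widehat C_H^{(2)}A^{(1)\top}$, its transpose, and $A^{(1)}R_2\widehat C_H^{(2)}R_2A^{(1)\top}$, together with mixed $K_2R_2$ terms. I would bound these crudely by operator norms: $\|A^{(1)}\|_{\op}=O(1)$ by near-orthonormality of the planted directions, $\|\widehat C_H^{(2)}\|_{\op}=\widetilde O(1)$ in the regime $n\gg D_2$, and $\|R_2\|_{\op}\le\widetilde O(\sqrt{D_2/p_1}+D_2/p_1)$. Multiplying by $\sqrt{d_1}$ then gives exactly the product form $\sqrt{d_1}\,\varepsilon_p(1+\varepsilon_p)$ with $\varepsilon_p=\sqrt{D_2/p_1}+D_2/p_1$. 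Requiring $p_1\gg\sqrt{d_1}d^2$ makes this $o(1)$, since $\sqrt{d_1}\sqrt{D_2/p_1}$ is small precisely when $p_1\gg d_1D_2$. The stated condition should therefore be checked against this bound, and the width condition adjusted if needed.

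\textbf{Main obstacle.} I expect the main difficulty to be Step 2, because $\theta_d$ is as large as $d$ and cannot be bounded crudely. Its contribution must be killed through near-orthogonality of $A^{(1)}$ to $e$ and through the centering of $\widehat C_H^{(2)}$ along the trace direction. I would first try conditioning on $W_1$ and exploiting rotational invariance of the planted directions to obtain $|\langle A_i^{(1)},e\rangle|=\widetilde O(1/d)$ uniformly in $i$, and then take a union bound over the $d_1$ rows.
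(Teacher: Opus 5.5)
Your route is the paper's: the exact identity, the expansion $G_2=I_{D_2}+K_2+R_2$, Theorem~3.1 of \cite{tabanelli2026deep} for the identity term, $\|A^{(1)}e\|_2=\widetilde O(\sqrt{d_1}/d)$ against $|\theta_d|\le\widetilde O(d)$ for the trace spike, and operator norms for $R_2$. The stray $1/D_2$ in your identity is an arithmetic slip, not a convention issue. $A_{\RF}^{(1)}$ contributes $D_2$ in total, $\widehat C_{\rm RF}^{(2)}$ contributes $c_2^2$, and $P_2^\top P_2=G_2/D_2$ appears twice. The prefactor is therefore $\frac{D_2}{c_2^2}\cdot D_2\cdot c_2^2\cdot D_2^{-2}=1$, and the $I_{D_2}$ term is exactly $A^{(1)}\widehat C_H^{(2)}A^{(1)\top}$.

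The genuine gap is in the bounds on $C:=\widehat C_H^{(2)}$ that you use in Steps~2 and~3. As you state them, the argument does not close.
\begin{itemize}
\item \emph{Quadratic trace term.} Since $|\theta_d|$ is of order $d$ and $\|A^{(1)}e\|_2$ of order $\sqrt{d_1}/d$, we have $\sqrt{d_1}\,\theta_d^2\|A^{(1)}e\|_2^2\asymp d_1^{3/2}$. The $K_2CK_2$ term is therefore of size $d_1^{3/2}|e^\top Ce|$. An order-one trace moment (your $1+\sqrt{1/n}$) makes it diverge; you need $|e^\top Ce|=o(d_1^{-3/2})$. The paper uses $|e^\top Ce|\le\widetilde O(n^{-1/2}+\sqrt{d_1}/d^2)$, justified by the small trace overlap of the planted directions. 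These two pieces are exactly the $d_1\sqrt{d_1/n}$ and $d_1^2/d^2$ terms.
\item \emph{Cross terms.} These need the explicit rate $\|e^\top CA^{(1)\top}\|_2\le\widetilde O(\sqrt{d_1/n}+1/d)$, which gives $d_1\sqrt{d_1/n}+d_1/d$.
\item \emph{Step~3.} The bound $\|C\|_{\op}=\widetilde O(1)$ suffices for $CR_2$, $R_2C$ and $R_2CR_2$, but not for the mixed terms you list. With $\varepsilon_p:=\sqrt{D_2/p_1}+D_2/p_1$, you get $\sqrt{d_1}\,|\theta_d|\,\|A^{(1)}e\|_2\,\|C\|_{\op}\|R_2\|_{\op}\asymp d_1\varepsilon_p$, larger than the claimed $\sqrt{d_1}\varepsilon_p$. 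The paper uses $\|C\|_{\op}\le\widetilde O(d_1^{-1/2})$: the signal $\nu_1A^{(2)}$ has operator norm of order $d_1^{-1/2}$, and the sampling fluctuation $\widetilde O(\sqrt{D_2/n})$ is smaller once $n\gg D_2d_1$. With this bound, or at least $\|Ce\|_2\le\widetilde O(d_1^{-1/2})$, the mixed terms are the worst ones, at $\sqrt{d_1}\varepsilon_p$.
\end{itemize}

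Your suspicion about the width condition is correct, and the paper's proof does not address it. The displayed rate contains $\sqrt{d_1}\sqrt{D_2/p_1}$, which is $o(1)$ only if $p_1\gg d_1D_2\asymp d_1d^2$. The stated $p_1\gg\sqrt{d_1}d^2$ only controls the $\sqrt{d_1}D_2/p_1$ piece: at $p_1=\sqrt{d_1}d^2$ one has $\sqrt{d_1}\sqrt{D_2/p_1}\asymp d_1^{1/4}$. So with operator-norm control of $R_2$, yours or the paper's, the ``in particular'' clause needs $p_1\gg d_1d^2$. Reaching the stated width would require a finer argument. For example, $W_1$ is independent of the data and of $A^{(1)}$, so fixed bilinear forms of $R_2$ are much smaller than $\|R_2\|_{\op}$.
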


\begin{proof}
The proof is a direct comparison with the Hermite estimator. By the
definitions of $A_{\RF}^{(1)}$, $\widehat C_{\rm RF}^{(2)}$, and $G_2$, we
have the exact identity
\begin{align}
    \frac{D_2}{c_2^2}
    A_{\RF}^{(1)}
    \widehat C_{\rm RF}^{(2)}
    A_{\RF}^{(1)\top}
    =
    A^{(1)}G_2\widehat C_H^{(2)}G_2A^{(1)\top}.
\end{align}
Thus it is enough to compare
$A^{(1)}G_2\widehat C_H^{(2)}G_2A^{(1)\top}$ with
$A^{(1)}\widehat C_H^{(2)}A^{(1)\top}$.

We use the RF Gram decomposition stated above,
\[
    G_2=I_{D_2}+K_2+R_2,
    \quad
    K_2=\theta_d ee^\top,
    \quad
    |\theta_d|\le \widetilde O(d),
    \quad
    \|R_2\|_{\op} \le
    \widetilde O\!\left(
        \sqrt{\frac{D_2}{p_1}}
        +
        \frac{D_2}{p_1}
    \right).
\]
Here $K_2$ is the trace-contraction spike and $R_2$ is the centered RF
Gram remainder.

Let $C:=\widehat C_H^{(2)}$. We use the following standard high-probability
bounds:
\begin{align}
    \|A^{(1)}e\|_2
    &\le
    \widetilde O\!\left(\frac{\sqrt{d_1}}{d}\right),
    \label{eq:Ae-bound}
    \\
    \|C\|_{\op}
    &\le
    \widetilde O\!\left(\frac1{\sqrt{d_1}}\right),
    \label{eq:C-op-bound}
    \\
    \|e^\top C A^{(1)\top}\|_2
    &\le
    \widetilde O\!\left(
        \sqrt{\frac{d_1}{n}}+\frac1d
    \right),
    \label{eq:eCA-bound}
    \\
    |e^\top C e|
    &\le
    \widetilde O\!\left(
        \frac1{\sqrt n}
        +
        \frac{\sqrt{d_1}}{d^2}
    \right).
    \label{eq:eCe-bound}
\end{align}
The first estimate is the overlap of a random $d_1$-dimensional Gaussian subspace with the fixed trace direction. The remaining estimates are obtained by the same truncation, matrix Bernstein, and Gaussian-equivalence arguments used in Appendix A.4 of \cite{tabanelli2026deep}, applied either with one test tensor equal to the trace direction $e$, or with both test tensors equal to $e$. The population contributions are smaller because the trace overlap satisfies $\|A^{(1)}e\|_2=\widetilde O(\sqrt{d_1}/d)$.

Expanding $G_2=I+K_2+R_2$ gives
\[
    G_2CG_2-C
    =
    K_2C+CK_2+K_2CK_2
    +
    \text{terms containing }R_2.
\]
For the first trace term,
\[
    A^{(1)}K_2CA^{(1)\top}
    =
    \theta_d
    (A^{(1)}e)(e^\top C A^{(1)\top}),
\]
and therefore, using
\eqref{eq:Ae-bound}--\eqref{eq:eCA-bound},
\[
    \sqrt{d_1}
    \|A^{(1)}K_2CA^{(1)\top}\|_{\op}
    \le
    \widetilde O\!\left(
        d_1\sqrt{\frac{d_1}{n}}
        +
        \frac{d_1}{d}
    \right).
\]
The transpose term $A^{(1)}CK_2A^{(1)\top}$ is bounded identically. For
the quadratic trace term,
\[
    A^{(1)}K_2CK_2A^{(1)\top}
    =
    \theta_d^2
    (A^{(1)}e)(e^\top C e)(A^{(1)}e)^\top,
\]
so by \eqref{eq:Ae-bound} and \eqref{eq:eCe-bound},
\[
    \sqrt{d_1}
    \|A^{(1)}K_2CK_2A^{(1)\top}\|_{\op}
    \le
    \widetilde O\!\left(
        d_1\sqrt{\frac{d_1}{n}}
        +
        \frac{d_1^2}{d^2}
    \right).
\]
Finally, all terms containing $R_2$ are controlled by
$\|R_2\|_{\op}$ and
\eqref{eq:C-op-bound}. The worst mixed terms are $K_2CR_2$ and
$R_2CK_2$, and they satisfy
\[
    \sqrt{d_1}
    \|A^{(1)}K_2CR_2A^{(1)\top}\|_{\op}
    \le
    \widetilde O\!\left(
        \sqrt{d_1}\left(
        \sqrt{\frac{D_2}{p_1}}
        +
        \frac{D_2}{p_1}
    \right)
    \right),
\]
with the same bound for the transpose. The $R_2CR_2$ term contributes at
most
\[
    \widetilde O\!\left(
        \sqrt{d_1}\left(
        \sqrt{\frac{D_2}{p_1}}
        +
        \frac{D_2}{p_1}
    \right)^2
    \right).
\]
Combining these estimates gives the bridge bound
\begin{align}
\sqrt{d_1}
\left\|
    A^{(1)}
    \left(
        G_2CG_2-C
    \right)
    A^{(1)\top}
\right\|_{\op}&
\le
\widetilde O\!\left(
    d_1\sqrt{\frac{d_1}{n}}
    +
    \frac{d_1}{d}
    +
    \frac{d_1^2}{d^2}
\right) \nonumber\\
& + \widetilde O\!\left( \sqrt{d_1}\left(\sqrt{\frac{D_2}{p_1}}
        +
        \frac{D_2}{p_1}
    \right)\left(1+\sqrt{\frac{D_2}{p_1}}
        +
        \frac{D_2}{p_1}\right)
        \right).
\label{eq:rf-bridge-bound}
\end{align}
Theorem~3.1 of \cite{tabanelli2026deep} gives
\[
\sqrt{d_1}
\left\|
    A^{(1)}\widehat C_H^{(2)}A^{(1)\top}
    -
    \nu_1A^{(2)}
\right\|_{\op}
\le
\widetilde O\!\left(
    \sqrt{\frac{d^2d_1}{n}}
\right)
+
\widetilde O\!\left(
    \frac{d_1}{\sqrt d}
\right)
+
\widetilde O\!\left(
    \frac1{\sqrt{d_1}}
\right).
\]
Combining this with \eqref{eq:rf-bridge-bound} proves
\eqref{eq:rf-hermite-final-rate}.
\end{proof}

We stated the theorem for the quadratic case $q=2$ because the contraction
correction is completely explicit. For any fixed degree $q$, the same argument
applies by replacing the trace direction by the finite list of lower
contraction sectors given by the Gegenbauer decomposition; see Lemma F.8 of
\cite{wen2025when}. These contraction terms are subleading on the random
planted subspace by the same vanishing-contraction estimates used in Appendix
A.1--A.4 of \cite{tabanelli2026deep}. Consequently, the Hermite recovery
theorem transfers to the degree-$q$ RF estimator with the same sample scaling.
Together with the RF Gram concentration requirement, the sufficient scaling for
recovery is
\begin{align}
    n\gg D_q d_1,
    \qquad
    p_1\gg D_q,
\end{align}
up to logarithmic factors. Under these conditions, the degree-$q$ RF estimator
has the same projected signal limit as the Hermite estimator, up to the scalar
factor $c_q^2/D_q$.

The projected comparison above shows that the first LoFi step
has the same signal scaling as the explicit Hermite estimator. Therefore, the
bottleneck remains the Hermite recovery scale of the first layer, together with the RF width needed to realize the corresponding degree-$q$ block. The same reasoning applies recursively to later LoFi steps, with the ambient dimension replaced by the dimension of the representation entering that layer.

\subsection{Numerical experiments}
\label{app:rf-numerics}

We begin by defining the two observables reported in the random-feature experiments. 
Given a fresh test set \(\{(\bx_\mu,y_\mu)\}_{\mu=1}^{n_{\mathrm{test}}}\), we measure the prediction error through the test mean-squared error
\begin{align}
    \mathrm{MSE}_{\mathrm{test}}
    =
    \frac{1}{n_{\mathrm{test}}}
    \sum_{\mu=1}^{n_{\mathrm{test}}}
    \bigl(\hat y_\mu-y_\mu\bigr)^2,
\end{align}
and we quantify the recovery of the first hidden layer through the feature overlap
\begin{align}
    \mathrm{Overlap}(h^{(1)},\widehat h^{(1)})
    =
    \bigl\|
    H^{(1)} \hat H^{(1)\top}
    \bigr\|_F^2,
    \quad
    H^{(1)} = \bigl(h^{(1)}_\mu\bigr)_{\mu \le n_{\mathrm{test}}},
    \quad
    \hat H^{(1)} = \bigl(\hat h^{(1)}_\mu\bigr)_{\mu \le n_{\mathrm{test}}}.
\end{align}
All curves are averaged over \(10\) seeds, and error bars indicate one standard deviation.
Concerning the precise setting of the experiments, we specialize to the quadratic first-layer setting \(q=2\). Unless otherwise stated, we fix
\begin{align}
    d_1=\lfloor d^\epsilon\rfloor,
    \qquad
    \epsilon=\frac12,
    \qquad
    n=\lfloor d^\alpha\rfloor,
\end{align}
and generate labels according to
\begin{align}
    y = g^\star(h^{(2)}(\bx)),
    \qquad
    g^\star(t)=\tanh(t).
\end{align}
The purpose of these experiments is to test whether the random-feature pipeline exhibits the same first-layer emergence transition as the explicit Hermite spectral estimator, while replacing the structured Hermite features by generic nonlinear random features.

The random-feature maps used by the learner are chosen independently of the teacher. The rows of the random matrices are sampled uniformly on the sphere,
\begin{align}
    \bw_{1,a}\sim \mathrm{Unif}(\mathbb S^{d-1}),
    \qquad
    \bw_{2,a}\sim \mathrm{Unif}(\mathbb S^{d_1-1}),
\end{align}
as in the estimator of Section~\ref{app:rf-estimator}. For both layers, we use a ReLU activation with its degree-zero and degree-one Hermite components removed:
\begin{align}
    \sigma_{\perp\{0,1\}}(z)
    =
    \mathrm{ReLU}(z)
    -
    c_0 H_0(z)
    -
    c_1 H_1(z),
    \qquad
    c_r
    =
    \mathbb E_{G\sim\mathcal N(0,1)}
    \left[
        \mathrm{ReLU}(G)H_r(G)
    \right].
\end{align}
The random-feature widths are chosen in the overcomplete regime relative to the quadratic Hermite block,
\begin{align}
    p_1 \gtrsim D_2,
    \qquad
    D_2=\binom{d+1}{2},
\end{align}
in agreement with the finite-width requirement appearing in Theorem~\ref{thm:rf-hermite-bridge-k2}. In the second layer, the width $p_2$ is chosen large enough with an equivalent regime as for the first layer, $p_2\geq d_1^2$, with the same activation $\sigma_{\perp\{0,1\}}$ at both layers.

For each value of $d$ and $\alpha$, we generate a training set of size $n=\lfloor d^\alpha\rfloor$ and an independent test set. We follow the Neural LoFi learning strategy detailed in Sec.~\ref{app:rf-estimator}, adding only batch normalization to smooth the anisotropy. The readout $\widehat g$ is fitted by ridge regression on degree-$5$ polynomial features of $\widehat h^{(2)}$, with regularization parameter obtained by cross-validation. We additionally apply the same output calibration procedure throughout all experiments.

Figure~\ref{fig:rf_theoretical} shows that the random-feature estimator undergoes a clear transition near the predicted first-layer scale. For small $\alpha$, the test MSE remains close to its baseline value and the overlap with $h^{(1)}$ is essentially zero, indicating that the first hidden representation is not yet spectrally accessible. Around the predicted scale $n\gg d^{2+\epsilon}$, the MSE drops and the representation overlap grows sharply. This confirms that the improvement in prediction is tied to the recovery of the first hidden representation, rather than merely to the final one-dimensional regression step.

\begin{figure}[t]
    \centering
    \includegraphics[width=\linewidth]{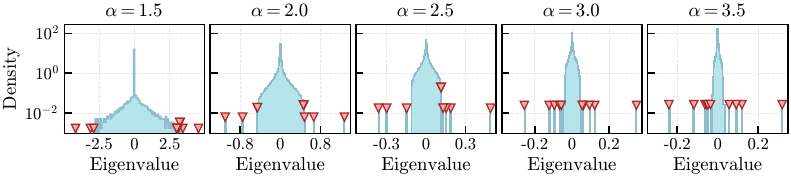}
    \caption{
    \textbf{Spectral emergence in the Neural LoFi estimator.}
    Spectrum of the first random-feature spectral operator $\widehat C_1$ for the hierarchical  solvable model of section \ref{sec:the model}, shown at increasing sample exponents $\alpha=\log(n)/\log(d)$. 
    Blue histograms display the bulk eigenvalue density, while red triangles indicate the leading $d_1$ eigenvalues in absolute value. 
    As $\alpha$ increases, the leading eigenvalues progressively separate from the bulk, marking the {\it emergence} of concepts in the first layer, as predicted by the sample-complexity scale $n\gg d^{2+\epsilon}$ (from Eq.~\eqref{eq:lofi-emergence-threshold}) in the quadratic setting $k=2$.
    }
    \label{fig:rf_spectrum}
\end{figure}

Figure~\ref{fig:rf_spectrum} gives a more direct spectral view of the same phenomenon. At small sample sizes, the leading eigenvalues of $\widehat C_1$ remain buried in the random-feature bulk. As $\alpha$ increases, the leading $d_1$ eigenvalues separate from the bulk and become stable outliers. This outlier formation is the spectral signature of the first-layer signal emerging in the random-feature representation. Together with the overlap curve, it shows that Neural LoFi recovers the hidden degree-$2$ representation at the same sample scale predicted by the Hermite theory, while avoiding the explicit construction of the Hermite feature map.

\newpage 
\section{Neural LoFi Kernel}
\label{app:kernel-nlofi}

\subsection{Preliminaries: RKHS, spectral theorem and kernel integral operators}\label{app:rkhs-prelims}

This subsection collects the standard background on Reproducing Kernel Hilbert Spaces (RKHS) and the spectral decomposition of kernel integral operators that we use in the rest of the appendix.
We refer to \cite{scholkopf2002learning,steinwart2008support} for a more detailed exposition. Throughout, $(\mathcal{X},\mu)$ denotes a measurable input space equipped with a finite Borel measure $\mu$ (e.g.\ the data distribution), and $K:\mathcal{X}\times\mathcal{X}\to\mathbb{R}$ a symmetric kernel.

\paragraph{Reproducing Kernel Hilbert Space}
A Hilbert space $\mathcal{H}$ of real-valued functions on $\mathcal{X}$ is a \emph{Reproducing Kernel Hilbert Space} (RKHS) if, for every $\bx\in\mathcal{X}$, the evaluation functional $\mathrm{ev}_\bx:f\mapsto f(\bx)$ is bounded (continuous) on $\mathcal{H}$. By the Riesz representation theorem, there exists a unique element $K_\bx\in\mathcal{H}$ such that
\begin{equation}\label{eq:reproducing}
  f(\bx)=\langle f,K_\bx\rangle_{\mathcal{H}}\qquad\text{for all }f\in\mathcal{H}.
  \end{equation}
  Defining $K(\bx,\bx'):=\langle K_\bx,K_{\bx'}\rangle_{\mathcal{H}}=K_{\bx'}(\bx)$ yields a symmetric positive semi-definite kernel, and \eqref{eq:reproducing} is called the \emph{reproducing property} because the kernel section $K_\bx=K(\bx,\cdot)$ literally reproduces the value of $f$ at $\bx$ via an inner product. Conversely, by the Moore--Aronszajn theorem~\cite{aronszajn1950theory}, every symmetric positive semi-definite kernel $K$ uniquely determines an RKHS $\mathcal{H}_K$ in which \eqref{eq:reproducing} holds; concretely, $\mathcal{H}_K$ is obtained by completing $\mathrm{span}\{K(\bx,\cdot):\bx\in\mathcal{X}\}$ under the inner product $\langle K(\bx,\cdot),K(\bx',\cdot)\rangle_{\mathcal{H}_K}=K(\bx,\bx')$.

  \paragraph{Spectral theorem for compact self-adjoint operators}
  Let $\mathcal{H}$ be a separable Hilbert space and $T:\mathcal{H}\to\mathcal{H}$ a bounded linear operator. Recall that $T$ is \emph{self-adjoint} if $\langle Tf,g\rangle=\langle f,Tg\rangle$, and \emph{compact} if it maps bounded sets to relatively compact sets (equivalently, it is a norm-limit of finite-rank operators). The spectral theorem states:
  \begin{theorem}[Hilbert--Schmidt spectral theorem]\label{thm:spectral}
  If $T$ is compact and self-adjoint on a separable Hilbert space $\mathcal{H}$, then there exist a (finite or countable) sequence of real eigenvalues $\{\lambda_i\}_{i\geq 1}$ with $|\lambda_1|\geq|\lambda_2|\geq\cdots\to 0$ and an orthonormal system $\{e_i\}_{i\geq 1}\subset\mathcal{H}$ of eigenvectors $Te_i=\lambda_i e_i$ such that
  \begin{equation}\label{eq:spectral}
    T f \;=\; \sum_{i\geq 1} \lambda_i\,\langle f,e_i\rangle\,e_i\qquad\text{for all }f\in\mathcal{H}.
    \end{equation}
    \end{theorem}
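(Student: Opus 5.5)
The plan is the classical variational construction: extract eigenpairs one at a time by maximizing the quadratic form $f\mapsto\langle Tf,f\rangle$ over the unit sphere, and restrict $T$ to successive orthogonal complements. First, for self-adjoint $T$ we establish the identity
\[
\|T\|=\sup_{\|f\|=1}|\langle Tf,f\rangle| .
\]
The inequality $\le$ follows from the polarization identity $4\langle Tf,g\rangle=\langle T(f+g),f+g\rangle-\langle T(f-g),f-g\rangle$ (valid since $T=T^*$), together with the parallelogram law; the inequality $\ge$ is Cauchy--Schwarz. If $T=0$ there is nothing to prove, so we assume $\|T\|>0$.

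Second, we show that this supremum is attained and yields an eigenvector. We take a maximizing sequence $f_k$ with $\|f_k\|=1$ and $\langle Tf_k,f_k\rangle\to\lambda_1$, where $|\lambda_1|=\|T\|$ (passing to a subsequence fixes the sign). Expanding
\[
\|Tf_k-\lambda_1 f_k\|^2=\|Tf_k\|^2-2\lambda_1\langle Tf_k,f_k\rangle+\lambda_1^2\le 2\lambda_1^2-2\lambda_1\langle Tf_k,f_k\rangle\to0 .
\]
By compactness, a subsequence of $Tf_k$ converges to some $g$. Then $\lambda_1 f_k\to g$, so $f_k\to e_1:=g/\lambda_1$, which has unit norm and satisfies $Te_1=\lambda_1 e_1$. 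This step, attaining the supremum using compactness, is the crux, and the estimate above is what we would write out in detail.

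Third, we iterate. The subspace $\mathcal H_1:=\{e_1\}^\perp$ is $T$-invariant by self-adjointness, since $\langle Tf,e_1\rangle=\lambda_1\langle f,e_1\rangle=0$ for $f\in\mathcal H_1$. The restriction $T|_{\mathcal H_1}$ is again compact and self-adjoint, so the second step applied to it gives $(\lambda_2,e_2)$ with $|\lambda_2|=\|T|_{\mathcal H_1}\|\le|\lambda_1|$. Continuing produces an orthonormal system $\{e_i\}$ with non-increasing $|\lambda_i|$. The process stops at a finite stage only if the restriction of $T$ to the current complement is zero.

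Fourth, we show $\lambda_i\to0$ in the infinite case. If $|\lambda_i|\ge\varepsilon>0$ for all $i$, then for $i\ne j$
\[
\|Te_i-Te_j\|^2=\lambda_i^2+\lambda_j^2\ge2\varepsilon^2,
\]
so $(Te_i)$ has no Cauchy subsequence, contradicting compactness.

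Finally, we derive the expansion. For $f\in\mathcal H$ and each $N$, set $r_N:=f-\sum_{i\le N}\langle f,e_i\rangle e_i$. Then $r_N\perp e_1,\dots,e_N$ and $\|r_N\|\le\|f\|$ by Bessel's inequality. Because $T$ restricted to $\{e_1,\dots,e_N\}^\perp$ has norm $|\lambda_{N+1}|$,
\[
\Bigl\|Tf-\sum_{i\le N}\lambda_i\langle f,e_i\rangle e_i\Bigr\|=\|Tr_N\|\le|\lambda_{N+1}|\,\|f\|\to0 .
\]
In the finite case the remainder $Tr_N$ is exactly zero. This proves \eqref{eq:spectral}. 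Separability of $\mathcal H$ is not actually needed; it only guarantees that the eigensystem, when completed by a basis of $\ker T$, can be taken countable.
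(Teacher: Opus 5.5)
The paper does not prove this theorem. It quotes it in Appendix~\ref{app:rkhs-prelims} as standard background, refers to \cite{scholkopf2002learning,steinwart2008support}, and uses it only to justify the Mercer expansion of the kernel integral operator. So there is no proof of the paper's to compare against.

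Your argument is the classical variational (Rayleigh-quotient) proof, and it is correct and complete. Each step checks out:
\begin{itemize}
\item the norm identity $\|T\|=\sup_{\|f\|=1}|\langle Tf,f\rangle|$;
\item attainment of the supremum, via the estimate $\|Tf_k-\lambda_1f_k\|^2\le 2\lambda_1^2-2\lambda_1\langle Tf_k,f_k\rangle\to0$ combined with compactness;
\item $T$-invariance of $\{e_1,\dots,e_N\}^\perp$;
\item the decay $\lambda_i\to0$ from the separation $\|Te_i-Te_j\|^2=\lambda_i^2+\lambda_j^2$;
\item the remainder bound $\|Tr_N\|\le|\lambda_{N+1}|\,\|f\|$.
\end{itemize}

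Two cosmetic remarks. First, over $\mathbb{C}$ the polarization identity gives $4\,\mathrm{Re}\langle Tf,g\rangle$ rather than $4\langle Tf,g\rangle$. This is harmless once you take $g=Tf/\|Tf\|$, and the paper works with real-valued function spaces anyway. Second, in the decay step you implicitly use that $|\lambda_i|$ is non-increasing, so that $\lambda_i\not\to0$ forces a uniform bound $|\lambda_i|\ge\varepsilon$; it is worth saying this explicitly.

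Your closing observation is also right: separability plays no role in the statement as written, since the nonzero eigenvalues produced by the construction are automatically countable.
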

  Trace-class and Hilbert--Schmidt operators are special cases of compact operators on which the spectrum is, respectively, absolutely summable ($\sum_i|\lambda_i|<\infty$) and square-summable ($\sum_i\lambda_i^2<\infty$).

    \paragraph{Kernel integral operator}
    Given a kernel $K$ and the measure $\mu$, the associated \emph{integral operator} $T_K:L^2(\mathcal{X},\mu)\to L^2(\mathcal{X},\mu)$ is defined by
    \begin{equation}\label{eq:integral-op}
      (T_K f)(\bx)\;:=\;\int_{\mathcal{X}} K(\bx,\bx')\,f(\bx')\,d\mu(\bx').
      \end{equation}
      Under the standard assumptions that $K$ is symmetric, continuous (or merely measurable) and square-integrable in the sense that $\int_{\mathcal{X}\times\mathcal{X}} K(\bx,\bx')^2\,d\mu(\bx)d\mu(\bx')<\infty$, the operator $T_K$ is self-adjoint and Hilbert--Schmidt (in particular compact) on $L^2(\mathcal{X},\mu)$.

      Applying Theorem~\ref{thm:spectral} to $T_K$ yields eigenpairs $\{(\lambda_i,e_i)\}_{i\geq 1}$ with $\lambda_i\geq 0$, $\lambda_i\downarrow 0$, and $\{e_i\}$ orthonormal in $L^2(\mathcal{X},\mu)$. Mercer's theorem~\cite{mercer1909functions} then states that, under mild continuity assumptions on $\mathcal{X}$ and $K$ (e.g.\ $\mathcal{X}$ compact and $K$ continuous), the kernel itself admits the absolutely and uniformly convergent expansion
      \begin{equation}\label{eq:mercer}
        K(\bx,\bx')\;=\;\sum_{i\geq 1}\lambda_i\,e_i(\bx)\,e_i(\bx').
        \end{equation}

        \paragraph{Relation between the integral operator and the RKHS}
        The Mercer decomposition~\eqref{eq:mercer} provides an explicit isometric description of the RKHS $\mathcal{H}_K$ in terms of the spectral data of $T_K$. Restricting to indices with $\lambda_i>0$,
        \begin{equation}\label{eq:rkhs-mercer}
          \mathcal{H}_K\;=\;\Bigl\{\,f=\sum_{i:\lambda_i>0}a_i\,e_i\;:\;\|f\|_{\mathcal{H}_K}^2\;:=\;\sum_{i:\lambda_i>0}\frac{a_i^2}{\lambda_i}<\infty\,\Bigr\},
          \end{equation}
          with inner product $\langle f,g\rangle_{\mathcal{H}_K}=\sum_i a_i b_i/\lambda_i$. Equivalently, $\mathcal{H}_K$ is the image of $L^2(\mathcal{X},\mu)$ under the square root $T_K^{1/2}$, with norm $\|T_K^{1/2}g\|_{\mathcal{H}_K}=\|g\|_{L^2(\mathcal{X},\mu)}$ (modulo $\ker T_K$); equivalently, $T_K^{1/2}:L^2(\mathcal{X},\mu)\to\mathcal{H}_K$ is a partial isometry. Two consequences will be used repeatedly below:
          \begin{itemize}
            \item The eigenfunctions $\{\sqrt{\lambda_i}\,e_i\}_{i:\lambda_i>0}$ form an orthonormal basis of $\mathcal{H}_K$, while $\{e_i\}$ form an orthonormal basis of the closure of $\mathrm{range}(T_K)$ in $L^2(\mathcal{X},\mu)$.
              \item Smoothness in $\mathcal{H}_K$ corresponds to spectral concentration: $f\in\mathcal{H}_K$ iff its $L^2$ coefficients $a_i=\langle f,e_i\rangle_{L^2}$ decay fast enough that $\sum_i a_i^2/\lambda_i<\infty$. In particular, functions in the top-$k$ eigenspace of $T_K$ are exactly the ``low-degree'' or ``low-frequency'' functions used throughout the main text.
              \end{itemize}
              This dictionary between the kernel $K$, the integral operator $T_K$ and the RKHS $\mathcal{H}_K$ is what allows us, in the rest of this appendix, to translate between function-space statements (norms, projections, low-degree truncation in $\mathcal{H}_K$) and operator-spectral statements (eigenvalues and eigenfunctions of $T_K$).
              
\subsection{Representer Property}

We begin by establishing a fundamental property: the optimal features lie in the finite-dimensional span of training features, which enables efficient computation via the kernel trick.

\begin{lemma}[Representer property of LoFi features]\label{lem:representer-lofi}
    Let $\{\hat{\bm v}_j^{(\ell)}\}_{j=1}^{k_\ell}$ denote any minimizers of the empirical objective in \eqref{eq:lofi-rkhs-variational}. Then the corresponding weight vectors $\{\hat{\bm v}_j^{(\ell)}\}_{j=1}^{k_\ell}$ lie in the span of the previous-layer features evaluated at the training inputs, i.e.\
    \begin{equation}\label{eq:lofi-span}
        \hat{\bm v}_j^{(\ell)} \in \mathrm{span}\bigl\{\bz_{\ell-1}(\bm x_1),\dots,\bz_{\ell-1}(\bm x_n)\bigr\}, \qquad j=1,\dots,k_\ell.
    \end{equation}
\end{lemma}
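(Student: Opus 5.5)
The plan is to work entirely in the finite-dimensional parametrization. Write $Z\in\mathbb R^{n\times p_{\ell-1}}$ for the matrix with rows $\bz_{\ell-1}(\bx_\mu)^\top$, set $D_y=\mathrm{diag}(y_1,\dots,y_n)$, and let $S:=\mathrm{span}\{\bz_{\ell-1}(\bx_\mu)\}_{\mu=1}^n=\mathrm{range}(Z^\top)$. Then
\[
\widehat{\bm C}^{(\ell)}=\tfrac1n Z^\top D_y Z .
\]
The key observation is that $\mathrm{range}(\widehat{\bm C}^{(\ell)})\subseteq S$ and $S^\perp=\ker Z\subseteq\ker\widehat{\bm C}^{(\ell)}$. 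I read the "minimizers" in the statement as the maximizers of \eqref{eq:lofi-rkhs-variational}, which is how they are defined there.

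\textbf{Step 1: eigenvector formulation.} Any eigenvector $\bv$ with $\widehat{\bm C}^{(\ell)}\bv=\lambda\bv$ and $\lambda\neq0$ satisfies
\[
\bv=\lambda^{-1}\tfrac1n Z^\top\bigl(D_yZ\bv\bigr)\in S .
\]
This already covers the vectors produced by Algorithm~\ref{alg:neural-lofi} whenever the selected eigenvalues are nonzero. The linear component $\hat{\bm v}^\ell_0\propto Z^\top y$ lies in $S$ trivially.

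\textbf{Step 2: variational formulation, top direction.} Decompose $\bv=\bv_\parallel+\bv_\perp$ with $\bv_\parallel\in S$ and $\bv_\perp\in S^\perp$. Since $\langle\bv,\bz_{\ell-1}(\bx_\mu)\rangle=\langle\bv_\parallel,\bz_{\ell-1}(\bx_\mu)\rangle$ for every $\mu$, the objective $F(\bv)=\bigl|\frac1n\sum_\mu y_\mu\langle\bv,\bz_{\ell-1}(\bx_\mu)\rangle^2\bigr|$ satisfies $F(\bv)=F(\bv_\parallel)$. Because $F$ is $2$-homogeneous, if $\bv_\perp\neq0$ and $F(\bv)>0$ then $\bv_\parallel\neq0$ and
\[
F\bigl(\bv_\parallel/\|\bv_\parallel\|\bigr)=F(\bv)/\|\bv_\parallel\|^2>F(\bv).
\]
This contradicts maximality, so every maximizer with a positive optimal value lies in $S$.

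\textbf{Step 3: deflated directions.} For the $j$-th direction, the constraint is orthogonality to $\hat{\bm v}^{(\ell)}_1,\dots,\hat{\bm v}^{(\ell)}_{j-1}$, which by induction already lie in $S$. Since $\bv_\perp\perp S$, the vector $\bv_\parallel$ is still orthogonal to these previous directions. The normalized $\bv_\parallel$ is therefore feasible, and the same rescaling argument applies inductively over $j$.

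\textbf{Main obstacle: the degenerate case.} If the optimal value is zero, or the retained eigenvalue is $0$, the maximizers are not unique and may have components in $S^\perp$. The statement "any maximizer" is then false as written. I would handle this by restricting to directions with nonzero value, i.e.\ nonzero $\hat\lambda$ in Algorithm~\ref{alg:neural-lofi}, which is the nondegenerate case of interest. Alternatively, one can note that a maximizer can always be chosen in $S$, by projecting and renormalizing, or by taking any unit vector of $S$ when $F\equiv0$, and that such a choice leaves the features $\varphi_j$ on the training data unchanged. The norm identity $\|\varphi_j\|_{\mathcal H_{\ell-1}}=\|\hat{\bm v}^{(\ell)}_j\|_2$ mentioned in the text follows from the same observation: for $\bv\in S$, the map $\bv\mapsto\langle\bv,\bz_{\ell-1}(\cdot)\rangle$ is an isometry onto the span of $\{K_{\ell-1}(\bx_\mu,\cdot)\}_{\mu=1}^n$.
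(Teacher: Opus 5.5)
Your Steps 2--3 are the paper's argument: split $\bv=\bv_\parallel+\bv_\perp$ along $S=\mathrm{span}\{\bz_{\ell-1}(\bx_\mu)\}_{\mu=1}^n$, note that the objective depends only on $\bv_\parallel$, and renormalize to get a strictly larger value while keeping the deflation constraints. The proposal is therefore correct and takes the same route. Your explicit condition $F(\bv)>0$ and your induction placing the earlier directions in $S$ make rigorous two points the paper uses without stating them. In particular, the paper's improvement by the factor $\|\bm v_j^\parallel\|_2^{-2}>1$ is vacuous when the optimal value is zero, e.g.\ once $k_\ell$ exceeds the rank of $\widehat{\bm C}^{(\ell)}$, so your caveat about the degenerate case is warranted.
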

\begin{proof}[Proof of \Cref{lem:representer-lofi}]
Fix any index $j\in\{1,\dots,k_\ell\}$. The empirical objective in~\eqref{eq:lofi-rkhs-variational} depends on $\hat{\bm v}_j^{(\ell)}$ only through the inner products $\langle \hat{\bm v}_j^{(\ell)}, \bz_{\ell-1}(\bm x_\mu)\rangle$, $\mu=1,\dots,n$, and is optimized subject to the norm constraint $\|\hat{\bm v}_j^{(\ell)}\|_2 = 1$ (together with the deflation/orthogonality constraints to previously selected directions, which are also expressible solely through such inner products). Decompose $\hat{\bm v}_j^{(\ell)} = \bm v_j^\parallel + \bm v_j^\perp$, where $\bm v_j^\parallel$ is the orthogonal projection onto the finite-dimensional subspace $\mathcal{S} := \mathrm{span}\{\bz_{\ell-1}(\bm x_\mu)\}_{\mu=1}^n \subset \mathbb{R}^{p_{\ell-1}}$ and $\bm v_j^\perp \in \mathcal{S}^\perp$. By orthogonality, $\langle \bm v_j^\perp, \bz_{\ell-1}(\bm x_\mu)\rangle = 0$ for every training input, so $\langle \hat{\bm v}_j^{(\ell)}, \bz_{\ell-1}(\bm x_\mu)\rangle = \langle \bm v_j^\parallel, \bz_{\ell-1}(\bm x_\mu)\rangle$, i.e.\ the data-dependent objective depends only on $\bm v_j^\parallel$. Pythagoras gives $1 = \|\hat{\bm v}_j^{(\ell)}\|_2^2 = \|\bm v_j^\parallel\|_2^2 + \|\bm v_j^\perp\|_2^2$, so any nonzero $\bm v_j^\perp$ forces $\|\bm v_j^\parallel\|_2 < 1$. The rescaled vector $\tilde{\bm v}_j := \bm v_j^\parallel/\|\bm v_j^\parallel\|_2 \in \mathcal{S}$ is then feasible (it has unit norm and inherits the orthogonality constraints, since these involve only inner products with vectors in $\mathcal{S}$) and yields a strictly larger objective by a factor $\|\bm v_j^\parallel\|_2^{-2}>1$, contradicting optimality of $\hat{\bm v}_j^{(\ell)}$. Hence every minimizer satisfies $\bm v_j^\perp = 0$, i.e.\ $\hat{\bm v}_j^{(\ell)} \in \mathcal{S}$, which is~\eqref{eq:lofi-span}.
\end{proof}

\subsection{RKHS-Euclidean Equivalence}\label{app:kernel-equiv}

We now establish a fundamental result that underlies all subsequent proofs in this section. By Lemma~\ref{lem:representer-lofi}, we know features lie in the span of training features. This allows us to identify RKHS constraints with Euclidean constraints on weight vectors.

\begin{lemma}[RKHS-Euclidean Norm Equivalence]\label{lem:rkhs-euclidean}
Let $K_{\ell-1}(\bm x,\bm x')=\langle\bz_{\ell-1}(\bm x),\bz_{\ell-1}(\bm x')\rangle$ and let $\mathcal H_{\ell-1}$ be its induced RKHS. Then the RKHS norm of a linear feature coincides with the Euclidean norm of its weight vector: for any $\bm u\in\mathbb R^{p_{\ell-1}}$, the linear feature
\[
    \varphi_{\bm u}({\bm x}) := \langle \bm u, \bz_{\ell-1}(\bm x)\rangle.
\]
satisfies
\[
    \|\varphi_{\bm u}\|_{\mathcal H_{\ell-1}} = \|\bm u\|_2.
\]
Consequently, the RKHS unit ball $\{\varphi:\|\varphi\|_{\mathcal H_{\ell-1}}\leq 1\}$ and the Euclidean unit sphere $\{\bm u:\|\bm u\|_2=1\}$ are in bijection through $\bm u\mapsto\varphi_{\bm u}$, establishing a primal--dual equivalence between RKHS constraints and Euclidean constraints.
\end{lemma}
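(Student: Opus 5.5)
The approach is to identify $\mathcal H_{\ell-1}$ explicitly as the image of the linear map $\Phi:\mathbb R^{p_{\ell-1}}\to\mathbb R^{\mathcal X}$, $\bm u\mapsto\varphi_{\bm u}$, equipped with a quotient inner product. Uniqueness of the RKHS (Moore--Aronszajn, recalled in App.~\ref{app:rkhs-prelims}) will then force this to be the RKHS of $K_{\ell-1}$, and the norm identity can be read off directly.

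\textbf{Step 1 (the candidate space).} Let $\mathcal S_{\rm all}:=\mathrm{span}\{\bz_{\ell-1}(\bm x):\bm x\in\mathcal X\}\subseteq\mathbb R^{p_{\ell-1}}$, and let $P$ denote the orthogonal projector onto it. The kernel of $\Phi$ is exactly $\mathcal S_{\rm all}^\perp$. Hence $\Phi$ restricted to $\mathcal S_{\rm all}$ is a linear bijection onto $\mathcal F:=\Phi(\mathbb R^{p_{\ell-1}})$. On $\mathcal F$ define
\[
\langle\varphi_{\bm u},\varphi_{\bm v}\rangle_{\mathcal F}:=\langle P\bm u,P\bm v\rangle_2 .
\]
This is well defined, since $\varphi_{\bm u}=\varphi_{\bm u'}$ iff $P\bm u=P\bm u'$. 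It is also complete, because $\mathcal F$ is finite dimensional.

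\textbf{Step 2 (reproducing property).} The kernel section is $K_{\ell-1}(\bm x,\cdot)=\varphi_{\bz_{\ell-1}(\bm x)}\in\mathcal F$. Moreover, since $\bz_{\ell-1}(\bm x)\in\mathcal S_{\rm all}$,
\[
\langle\varphi_{\bm u},K_{\ell-1}(\bm x,\cdot)\rangle_{\mathcal F}=\langle P\bm u,\bz_{\ell-1}(\bm x)\rangle=\langle\bm u,\bz_{\ell-1}(\bm x)\rangle=\varphi_{\bm u}(\bm x).
\]
So $\mathcal F$ is an RKHS with kernel $K_{\ell-1}$, and by uniqueness $\mathcal F=\mathcal H_{\ell-1}$ isometrically. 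It follows that $\|\varphi_{\bm u}\|_{\mathcal H_{\ell-1}}=\|P\bm u\|_2=\min\{\|\bm u'\|_2:\varphi_{\bm u'}=\varphi_{\bm u}\}$.

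\textbf{Step 3 (equality and bijection).} From Step 2, $\|\varphi_{\bm u}\|_{\mathcal H_{\ell-1}}=\|\bm u\|_2$ holds exactly when $\bm u\in\mathcal S_{\rm all}$. In general one only has the inequality $\le$, because directions orthogonal to every feature vector produce the zero function. I therefore expect the main obstacle to be the literal phrase ``for any $\bm u$'': as stated it must be read with $\bm u$ restricted to (or identified modulo $\mathcal S_{\rm all}^\perp$ with) the feature span. For the vectors actually used, this restriction costs nothing. By Lemma~\ref{lem:representer-lofi}, every LoFi direction $\hat{\bm v}^{(\ell)}_j$ lies in $\mathrm{span}\{\bz_{\ell-1}(\bm x_\mu)\}_\mu\subseteq\mathcal S_{\rm all}$, and so does $\hat{\bm v}^\ell_0\propto\hat{\bm u}^\ell$, since it is a combination of the training features. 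For all of these the norms coincide exactly. Finally, $\Phi|_{\mathcal S_{\rm all}}$ is an isometric bijection by Step 2. It therefore maps the Euclidean unit sphere of $\mathcal S_{\rm all}$ onto the RKHS unit sphere, and the closed unit balls correspond in the same way. This gives the primal--dual equivalence claimed in the lemma and justifies the reformulation in \eqref{eq:lofi-rkhs-variational}.
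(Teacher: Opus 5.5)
Your proof is correct, and it does more than the paper does. The paper states Lemma~\ref{lem:rkhs-euclidean} without proof. Its only justification is the preceding remark that, by Lemma~\ref{lem:representer-lofi}, the selected features lie in the span of the training features, so RKHS and Euclidean constraints may be identified. Your argument actually establishes the identity. You realize $\mathcal H_{\ell-1}$ as $\mathbb R^{p_{\ell-1}}$ modulo $\mathcal S_{\mathrm{all}}^\perp$ with the quotient inner product, check the reproducing property, and invoke uniqueness of the RKHS. This gives the sharper, unconditional formula $\|\varphi_{\bm u}\|_{\mathcal H_{\ell-1}}=\|P\bm u\|_2=\min\{\|\bm u'\|_2:\varphi_{\bm u'}=\varphi_{\bm u}\}$.

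That formula makes explicit what the paper leaves implicit. The literal ``for any $\bm u$'' holds exactly when the feature vectors span $\mathbb R^{p_{\ell-1}}$. It fails, for instance, at the first layer if the data lie in a proper subspace of $\mathbb R^d$, or if the kernel is only considered on $n<p_{\ell-1}$ training inputs. Likewise, the stated ``unit ball versus unit sphere'' bijection should be read, as you do, as an isometric correspondence of spheres (and of balls) inside $\mathcal S_{\mathrm{all}}$. When $\mathcal S_{\mathrm{all}}\neq\mathbb R^{p_{\ell-1}}$, the map $\bm u\mapsto\varphi_{\bm u}$ sends the full Euclidean sphere onto the RKHS ball but is not injective. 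Your final appeal to the representer property plays exactly the role of the paper's one-line justification. It is also consistent with the dual formula $\|\varphi_j\|^2_{\mathcal H_{\ell-1}}=\bm\alpha_j^\top G_{\ell-1}\bm\alpha_j$ used afterwards.

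One small caveat in Step~3. Lemma~\ref{lem:representer-lofi}, or directly $\hat{\bm v}=\lambda^{-1}\widehat C^{(\ell)}\hat{\bm v}$, places only eigenvectors with $\lambda\neq0$ in the training span. If $k_\ell$ exceeds the rank of $\widehat C^{(\ell)}$, which can happen when $p_{\ell-1}>n$, the zero-eigenvalue directions must be chosen inside $\mathcal S_{\mathrm{all}}$ for your equality case to apply.
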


\subsection{Proof of Theorem~\ref{thm:variational-rkhs}}

We prove each part of Theorem~\ref{thm:variational-rkhs} in turn, using Lemma~\ref{lem:rkhs-euclidean} to convert between the RKHS and Euclidean formulations.

\paragraph{Proof of part (i).}

By Lemma~\ref{lem:rkhs-euclidean}, the objective equals
\[
    \left|\widehat{\mathbb E}_n\bigl[y\,\psi(\bm x)\bigr]\right|
    =
    \left|\frac{1}{n}\sum_{\mu=1}^n y_\mu\langle \bm u, \bz_{\ell-1}(\bm x_\mu)\rangle\right|
    =
    \left|\left\langle \bm u,\; \hat{\bm u}^\ell\right\rangle\right|,
\]
where $\hat{\bm u}^\ell = \tfrac{1}{n}\sum_{\mu=1}^n y_\mu \bm z_{\ell-1}(\bm x_\mu)$ is the empirical first-order moment computed in Algorithm~\ref{alg:neural-lofi}. By the Cauchy--Schwarz inequality this is maximized at $\bm u = \hat{\bm u}^\ell/\|\hat{\bm u}^\ell\|_2$, giving $\psi^\ell(\bm x)=\langle \hat{\bm u}^\ell, \bm z_{\ell-1}(\bm x)\rangle$ as the unique maximizer (up to sign). \qed

\paragraph{Proof of part (ii).}

By Lemma~\ref{lem:rkhs-euclidean}, the second-order objective equals
\[
    \left|\widehat{\mathbb E}_n\bigl[y\,\varphi(\bm x)^2\bigr]\right|
    =
    \left|\frac{1}{n}\sum_{\mu=1}^n y_\mu\langle \bm u,\bz_{\ell-1}(\bm x_\mu)\rangle^2\right|
    =
    \left|\bm u^\top \widehat{\bm C}^{(\ell)} \bm u\right|,
\]
where $\widehat{\bm C}^{(\ell)} = \tfrac{1}{n}\sum_{\mu=1}^n y_\mu\,\bz_{\ell-1}(\bm x_\mu)\bz_{\ell-1}(\bm x_\mu)^\top$ is the empirical second-order moment operator. By the Courant--Fischer minimax theorem, the unit-norm vector maximizing $|\bm u^\top \widehat{\bm C}^{(\ell)}\bm u|$ is the leading eigenvector $\hat{\bm v}^{(\ell)}_1$ of $\widehat{\bm C}^{(\ell)}$ (in absolute eigenvalue). Successive orthogonal maximizers are the subsequent eigenvectors $\hat{\bm v}^{(\ell)}_2,\dots,\hat{\bm v}^{(\ell)}_{k_\ell}$. Via $\bm u\mapsto\varphi_{\bm u}$, these correspond exactly to $\varphi_1,\dots,\varphi_{k_\ell}$ satisfying the stated variational recursion. \qed

\paragraph{Dual (kernel-space) form}

By Lemma~\ref{lem:representer-lofi}, any optimal feature $\varphi_j$ lies in the span of the $n$ training feature vectors:
\begin{equation}
\varphi_j(\bm x) = \sum_{\mu=1}^n \alpha_{j,\mu} K_{\ell-1}(\bm x, \bm x_\mu).
\end{equation}
Define $\bm\alpha_j = (\alpha_{j,1}, \dots, \alpha_{j,n})^\top$ as the coefficient vector for feature $j$. By Lemma~\ref{lem:rkhs-euclidean}, the RKHS norm of this feature is
\begin{equation}
\|\varphi_j\|^2_{\mathcal H_{\ell-1}} = \bm\alpha_j^\top G_{\ell-1} \bm\alpha_j,
\end{equation}
where $G_{\ell-1} = Z_{\ell-1}Z_{\ell-1}^\top \in \mathbb R^{n \times n}$ is the Gram matrix with entries $G_{\ell-1}(\mu, \nu) = K_{\ell-1}(\bm x_\mu, \bm x_\nu)$.

The empirical objective from Theorem~\ref{thm:variational-rkhs}~(ii) becomes
\begin{equation}
\max_{\bm\alpha:\bm\alpha^\top G_{\ell-1}\bm\alpha=1}
\left|
    \frac{1}{n}\bm\alpha^\top
    G_{\ell-1}Y G_{\ell-1}\bm\alpha
\right|,
\end{equation}
where $Y=\operatorname{diag}(y_1,\ldots,y_n)$ and we used that the vector of
feature evaluations on the training data is $G_{\ell-1}\bm\alpha$.

Hence, we obtain the following proposition.

\begin{proposition}[Generalized Eigenvector Problem]\label{prop:generalized-eigenvector}
The optimal dual coefficients $\{\hat{\bm\alpha}_j\}_{j=1}^{k_\ell}$ that sequentially maximize the above constrained objective can be chosen on the range of $G_{\ell-1}$ to satisfy
\begin{equation}
\frac{1}{n}G_{\ell-1}Y G_{\ell-1}\hat{\bm\alpha}_j
=\lambda_j G_{\ell-1}\hat{\bm\alpha}_j,
\qquad j=1,\dots,k_\ell,
\label{eq:gen-eigenproblem}
\end{equation}
with $\hat{\bm\alpha}_i^\top G_{\ell-1}\hat{\bm\alpha}_j=\delta_{ij}$,
where the generalized eigenvalues $\lambda_j$ are ordered by magnitude. These
coefficients define the second-order features without requiring explicit access
to the feature vectors.
\end{proposition}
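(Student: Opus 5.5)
We reduce the sequential dual problem to a symmetric eigenvalue problem by a change of variables on the range of $G_{\ell-1}$. Write $G:=G_{\ell-1}=Z_{\ell-1}Z_{\ell-1}^\top$, which is positive semidefinite. Take its spectral decomposition $G=U S U^\top$ with $S$ positive diagonal on the range, and let $G^{1/2}$ and $G^{+1/2}$ denote the square root and the pseudo-inverse square root. First note that $\bm\alpha$ enters both the objective and the constraint only through $G\bm\alpha$ (equivalently through $G^{1/2}\bm\alpha$), since $\bm\alpha^\top G Y G\bm\alpha=(G\bm\alpha)^\top Y(G\bm\alpha)$. Any component of $\bm\alpha$ in $\ker G$ is therefore irrelevant, and this is what justifies the phrase ``can be chosen on the range of $G_{\ell-1}$''. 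This is the dual counterpart of Lemma~\ref{lem:representer-lofi}.

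Next substitute $\bm\beta:=G^{1/2}\bm\alpha\in\mathrm{range}(G)$. The constraint becomes $\|\bm\beta\|_2=1$, and the objective becomes $|\bm\beta^\top M\bm\beta|$ with
\[
M:=\tfrac1n G^{1/2}YG^{1/2}.
\]
The orthogonality constraints $\varphi_j\perp\varphi_i$ in $\mathcal H_{\ell-1}$ translate, via Lemma~\ref{lem:rkhs-euclidean}, into $\bm\alpha_i^\top G\bm\alpha_j=\bm\beta_i^\top\bm\beta_j=0$. The sequential problem is thus the standard deflated Rayleigh quotient maximization of $|\bm\beta^\top M\bm\beta|$ over the unit sphere of $\mathrm{range}(G)$. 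Here $M$ is symmetric and maps $\mathrm{range}(G)$ into itself, because $G^{1/2}$ does. By the spectral theorem and the Courant--Fischer argument already used in part (ii) of Theorem~\ref{thm:variational-rkhs}, the successive maximizers are orthonormal eigenvectors $\bm\beta_j$ of $M$ restricted to the range, ordered by $|\lambda_j|$.

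Finally map back by setting $\hat{\bm\alpha}_j:=G^{+1/2}\bm\beta_j$, which lies in the range. Then $G^{1/2}\hat{\bm\alpha}_j=\bm\beta_j$, and multiplying $M\bm\beta_j=\lambda_j\bm\beta_j$ on the left by $G^{1/2}$ gives
\[
\tfrac1n GYG\hat{\bm\alpha}_j=\lambda_j G\hat{\bm\alpha}_j,
\]
which is \eqref{eq:gen-eigenproblem}. The normalization $\hat{\bm\alpha}_i^\top G\hat{\bm\alpha}_j=\bm\beta_i^\top\bm\beta_j=\delta_{ij}$ also holds. As a consistency check, one can verify that the induced features coincide with those of part (ii): the identity $Z^\top G^{+1/2}M G^{+1/2}\ldots$ links $M$ to $Z^\top Y Z/n=\widehat{\bm C}^{(\ell)}$ on $\mathrm{range}(Z^\top)$ through the SVD of $Z$, so the nonzero spectra agree.

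The only delicate point is the rank-deficient case. There one must check that eigenvectors of $M$ with nonzero eigenvalue automatically lie in the range. One must also check that zero-eigenvalue directions within the range can still be chosen so that orthonormality is preserved. Both follow because $M$ vanishes on $\ker G$ and leaves the range invariant.
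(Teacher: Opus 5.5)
Your proof is correct and follows essentially the paper's route. The paper writes the dual objective $\frac1n\bm\alpha^\top G_{\ell-1}YG_{\ell-1}\bm\alpha$ under $\bm\alpha^\top G_{\ell-1}\bm\alpha=1$, and in the proof of part (iii) of Theorem~\ref{thm:variational-rkhs} it uses your substitution $\bm\beta=G_{\ell-1}^{1/2}\bm\alpha$, the symmetric matrix $\frac1n G_{\ell-1}^{1/2}YG_{\ell-1}^{1/2}$, and the pseudo-inverse square root to map back. Your treatment of the sequential deflation, the ordering by $|\lambda_j|$, and the rank-deficient case fills in details the paper leaves implicit; the SVD consistency check is not needed for the statement.
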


\paragraph{Proof of part (iii).}
At every layer, assume that the limiting eigenvalue (or
eigenvalue cluster) being selected is separated from the rest of the spectrum. For a single
feature this means a positive eigengap; for a cluster the statements below hold for the
spectral projector, with an arbitrary orthonormal basis chosen inside the limiting
eigenspace.

\emph{Induction statement.} Let $\rho_{\bm x}$ be the data distribution. After layer $r$, let
\[
    \bm g_r^{(p)}(\bm x)
    =
    \bigl(\psi_r^{(p)}(\bm x),\varphi_{r,1}^{(p)}(\bm x),\dots,
    \varphi_{r,k_r}^{(p)}(\bm x)\bigr)
\]
denote the projected features selected by Neural LoFi at finite width, and let
$K_r^{(p)}$ be the kernel produced by the following random lift. The induction claim
$\mathsf I_r$ is that there exist deterministic functions $\bm g_r^\infty$ and a
deterministic kernel $K_r^\infty$ such that
\begin{equation}
    \|\bm g_r^{(p)}-\bm g_r^\infty\|_{L^2(\rho_{\bm x})}\xrightarrow[p\to\infty]{P}0
    \label{eq:feature-induction}
\end{equation}
and, for the training inputs,
\begin{equation}
    \|\bm G_r^{(p)}-\bm G_r^\infty\|_{\mathrm{op}}\xrightarrow[p\to\infty]{P}0,\qquad
    \sum_{\mu=1}^n
    \|K_r^{(p)}(\cdot,\bm x_\mu)-K_r^\infty(\cdot,\bm x_\mu)\|_{L^2(\rho_{\bm x})}^2
    \xrightarrow[p\to\infty]{P}0,
    \label{eq:kernel-section-induction}
\end{equation}
where $(\bm G_r^{(p)})_{\mu\nu}=K_r^{(p)}(\bm x_\mu,\bm x_\nu)$.
The base case is deterministic: $\bm g_0(\bm x)=\bm x$ and $K_0(\bm x,\bm x')=
\langle\bm x,\bm x'\rangle$.

We first record the kernel-propagation step used in the induction. Suppose that, for some
layer $r$, the projected features already satisfy
$\bm g_r^{(p)}\to\bm g_r^\infty$ in $L^2(\rho_{\bm x})$. For a deterministic feature map
$\bm g$, write
\[
    \mathcal K_r[\bm g](\bm x,\bm x')
    =
    \E_{\bm a\sim\pi_r}
    \bigl[
        \sigma(\bm a^\top \bm g(\bm x))
        \sigma(\bm a^\top \bm g(\bm x'))
    \bigr].
\]
The limiting next-layer kernel is
$K_r^\infty=\mathcal K_r[\bm g_r^\infty]$. Since $\bm g_r^{(p)}\to \bm g_r^\infty$ in
$L^2$ and $\sigma$ is pseudo-Lipschitz with the required moment bounds, dominated
convergence gives convergence of the deterministic kernels
$\mathcal K_r[\bm g_r^{(p)}]\to \mathcal K_r[\bm g_r^\infty]$ on the fixed Gram matrix
and in the $L^2$ section norm in~\eqref{eq:kernel-section-induction}. Conditional on
$\bm g_r^{(p)}$, the finite-width kernel is an average of iid random features:
\[
    K_r^{(p)}(\bm x,\bm x')
    =
    \frac1{p_r}\sum_{i=1}^{p_r}
    \sigma(\bm a_i^\top \bm g_r^{(p)}(\bm x))
    \sigma(\bm a_i^\top \bm g_r^{(p)}(\bm x')).
\]
For each fixed anchor $\bm x_\mu$,
\begin{equation}
    \E_{\bm a}
    \bigl\|
        K_r^{(p)}(\cdot,\bm x_\mu)
        -\mathcal K_r[\bm g_r^{(p)}](\cdot,\bm x_\mu)
    \bigr\|_{L^2(\rho_{\bm x})}^2
    \leq \frac{C_\mu}{p_r},
    \label{eq:rf-section-lln}
\end{equation}
and the same iid law of large numbers applies to the finitely many Gram entries. Thus
\eqref{eq:kernel-section-induction} follows. This is the formal sense in which, once the
features entering a layer have a deterministic limit, the next layer sees a fixed
deterministic distribution.

It remains to show that this deterministic kernel convergence propagates through the
spectral filtering step. Assume $\mathsf I_{\ell-1}$ and abbreviate
$K_p=K_{\ell-1}^{(p)}$, $K_\infty=K_{\ell-1}^\infty$, and
$\bm G_p,\bm G_\infty$ for their Gram matrices. By~\eqref{eq:kernel-section-induction},
\begin{equation}
    \|\bm G_p-\bm G_\infty\|_{\mathrm{op}}\xrightarrow[p\to\infty]{P}0,\qquad
    \sum_{\mu=1}^n
    \|K_p(\cdot,\bm x_\mu)-K_\infty(\cdot,\bm x_\mu)\|_{L^2(\rho_{\bm x})}^2
    \xrightarrow[p\to\infty]{P}0.
    \label{eq:gram-and-section-conv}
\end{equation}
The explicit $O_P(n/\sqrt p)$ Gram-matrix rate in the non-adaptive, fixed-$\bm g$
case is the usual random-feature concentration bound; for the section convergence,
\eqref{eq:rf-section-lln} gives the sharper Hilbert-space law of large numbers needed for
out-of-sample features.

By Lemma~\ref{lem:representer-lofi}, every selected feature has the form
\[
    \varphi(\bm x)=\sum_{\mu=1}^n \alpha_\mu K_p(\bm x,\bm x_\mu).
\]
Its RKHS norm and empirical second-order objective are
\[
    \|\varphi\|_{\cH_p}^2=\balpha^\top \bm G_p\balpha,\qquad
    \widehat{\E}_n[y\varphi(\bm x)^2]
    =
    \frac1n\balpha^\top \bm G_p Y \bm G_p\balpha,
    \qquad
    Y=\operatorname{diag}(y_1,\dots,y_n).
\]
Therefore, on the range of $\bm G_p$, the generalized eigenproblem
in~\eqref{eq:gen-eigenproblem} is equivalently the
symmetric eigenproblem
\begin{equation}
    \bm B_p \bm\beta=\lambda \bm\beta,\qquad
    \bm B_p
    =
    \frac1n \bm G_p^{1/2}Y\bm G_p^{1/2},\qquad
    \bm\beta=\bm G_p^{1/2}\balpha.
    \label{eq:symmetric-dual-eigenproblem}
\end{equation}
The square-root map is continuous on positive semidefinite matrices, hence
$\|\bm B_p-\bm B_\infty\|_{\mathrm{op}}\to 0$ in probability. If the $k$th limiting eigenvalue is
isolated with gap $\Delta_k>0$, the Davis--Kahan $\sin\Theta$ theorem
\cite{davis1970rotation} implies convergence of the corresponding projectors at
rate $O_P(\|\bm B_p-\bm B_\infty\|_{\mathrm{op}}/\Delta_k)$. For a simple eigenvalue, after fixing
the sign,
\begin{equation}
    \bm\beta_{p,k}\xrightarrow[p\to\infty]{P}\bm\beta_{\infty,k}.
    \label{eq:beta-conv}
\end{equation}

We next convert the convergence of $\bm\beta_{p,k}$ into convergence of the actual feature functions.
Let
\[
    \bm k_p(\bm x)=\bigl(K_p(\bm x,\bm x_1),\dots,K_p(\bm x,\bm x_n)\bigr)^\top,
    \qquad
    \balpha_{p,k}=\bm G_p^{\dagger/2}\bm\beta_{p,k},
\]
where $\dagger$ denotes the Moore--Penrose inverse on the stable range of the limiting
Gram matrix. If $\bm G_\infty$ is nonsingular, this is the ordinary inverse square root. Continuity of the pseudo-inverse operator and~\eqref{eq:beta-conv} give
$\balpha_{p,k}\to\balpha_{\infty,k}$ in probability. Hence, with
\[
    \widehat\varphi_{p,k}(\bm x)=\bm k_p(\bm x)^\top\balpha_{p,k},\qquad
    \phi^\infty_k(\bm x)=\bm k_\infty(\bm x)^\top\balpha_{\infty,k},
\]
we have
\begin{equation}
\begin{aligned}
    \|\widehat\varphi_{p,k}-\phi^\infty_k\|_{L^2(\rho_{\bm x})}
    &\leq
    \|\bm k_p-\bm k_\infty\|_{L^2(\rho_{\bm x};\R^n)}\,
    \|\balpha_{p,k}\|_2 \\
    &\quad+
    \|\bm k_\infty\|_{L^2(\rho_{\bm x};\R^n)}\,
    \|\balpha_{p,k}-\balpha_{\infty,k}\|_2
    \xrightarrow[p\to\infty]{P}0.
\end{aligned}
    \label{eq:feature-function-conv}
\end{equation}
The linear feature satisfies the same conclusion directly, since
\[
    \psi_p^\ell(\bm x)
    =
    \frac1n\sum_{\mu=1}^n y_\mu K_p(\bm x,\bm x_\mu)
    \to
    \frac1n\sum_{\mu=1}^n y_\mu K_\infty(\bm x,\bm x_\mu)
    =
    \psi_\infty^\ell(\bm x)
\]
in $L^2(\rho_{\bm x})$. Thus the whole projected vector
$\bm g_\ell^{(p)}=(\psi_p^\ell,\widehat\varphi_{p,1},\dots,\widehat\varphi_{p,k_\ell})$
converges to the deterministic vector $\bm g_\ell^\infty$ in $L^2$. Applying the first
part of the induction to this deterministic limit gives convergence of the next random
feature kernel $K_\ell^{(p)}$ to
\[
    K_\ell^\infty(\bm x,\bm x')
    =
    \E_{\bm a\sim\pi_\ell}
    \left[
        \sigma(\bm a^\top\bm g_\ell^\infty(\bm x))
        \sigma(\bm a^\top\bm g_\ell^\infty(\bm x'))
    \right],
\]
which proves $\mathsf I_\ell$. By induction, all finite collections of learned features and
the induced kernels converge layer by layer to deterministic infinite-width limits.

 \qed

\newpage 

\section{Effective dimension and sample complexity of emergence}
\label{app:effective-dimension}

As discussed in the main text,  training dynamics is found in practice to often display long plateaus followed by abrupt transitions, with new directions in representation space \emph{emerging} sequentially \cite{wei2022emergent,raventos2023pretraining,arora2023theory,schaeffer2023emergent}. We return in this section to the emergence of learned features at each layer, and in particular, we now make the noise level \(\tau^k_\ell(n)\) in \eqref{eq:lofi-emergence-threshold} explicit.  Throughout the section our results apply conditioned on a fixed kernel $K_{\ell-1}(\bm x,\bm x')$ defined by the features $\bz_{\ell-1}$ of the previous layers.

Recall the residual candidate class
\[
  \mathcal{S}^\ell_{k}
  =
  \left\{
  \varphi\in\mathcal H_{\ell-1}:
  \norm{\varphi}_{\mathcal H_{\ell-1}}=1,\,
  \varphi \perp \varphi_1,\cdots, \varphi_{k-1}
  \right\}.
\]
Let \(T_{\ell-1}:L^2(P_x)\to L^2(P_x)\) be the integral operator of the current kernel,
\[
    (T_{\ell-1}f)(\bm x)
    =
    \int K_{\ell-1}(\bm x,\bm x')f(\bm x')\,dP_x(\bm x'),
\]
and let \(\Pi^\perp_k\) denote the projection orthogonal to the selected features \(\varphi_1,\dots,\varphi_{k-1}\). The residual covariance operator is
\[
    T_{\ell,k}
    :=
    \Pi^\perp_k T_{\ell-1}\Pi^\perp_k .
\]
For a resolution parameter \(r>0\), define the {\it residual effective dimension}:
\begin{definition}[Residual effective dimension]
    \label{def:lofi-effective-dimension}
Let  \(\{\lambda^{\ell,k}_j\}_{j\geq 1}\) denote the eigenvalues of \(T_{\ell,k}\). The \emph{effective residual dimension} at scale \(r\) is defined as
    \begin{equation}
    D^\ell_k(r)
    \coloneqq
    \operatorname{Tr}\!\left[
        T_{\ell,k}(T_{\ell,k}+r I)^{-1}
    \right]
    =
    \sum_{j\geq 1}
    \frac{\lambda^{\ell,k}_j}{\lambda^{\ell,k}_j+r},
    \label{eq:lofi-effective-dimension}
\end{equation}
\end{definition}
The above notion of effective dimension is standard in kernel regression
\cite{caponnetto2007optimal,rudi2017generalization}, and similar quantities
appear in local Rademacher analyses of kernel classes
\cite{mendelson2003performance,bartlett2005local}. \(D^\ell_k(r)\) can be
interpreted as counting the number of residual directions whose variance is
above the resolution \(r\).

\begin{assumption}[Residual eigenfunction hypercontractivity]
\label{ass:residual-eigenfunction-hypercontractivity}
Let \(\mathcal E_{\ell,k}\) denote the
\(L^2(P_x)\)-closed span of the residual eigenfunctions
\(\{\phi_j^{\ell,k}\}_{j\ge1}\). There exists \(H_\ell<\infty\) such that every
\(g\in\mathcal E_{\ell,k}\) satisfies
\[
    \norm{g}_{L^4(P_x)}
    \le
    H_\ell \norm{g}_{L^2(P_x)}.
\]
Equivalently, the same inequality holds for every \(L^2\)-convergent expansion
\(g=\sum_j c_j\phi_j^{\ell,k}\) with \(\sum_jc_j^2<\infty\).

\end{assumption}

Such an assumption holds, in particular, for the polynomial eigenbases that
appear in random-feature kernels such as Hermite polynomials and spherical harmonics (\cite{mei2022generalization}).

\begin{theorem}[Emergence sample complexity]
\label{prop:emergence}
Suppose that Assumption~\ref{ass:residual-eigenfunction-hypercontractivity}
holds and the kernel \(K_{\ell-1}\) and labels \(y\) are uniformly bounded. Let
\[
    r_\ell^{k,\star}
    \coloneqq
    \argmax_{r:\, r \leq \lambda_1^{\ell,k}}
    r\sqrt{D^\ell_k(r)} .
\]
Here \(\lambda_1^{\ell,k}\) is the top residual eigenvalue, hence the largest
possible variance scale of a unit-RKHS candidate in \(\mathcal S^\ell_k\).
Then, for any \(\delta>0\), there is a constant
\(\widetilde C_{\ell,H}>0\) such that with probability at least \(1-\delta\),
\begin{equation}
    \begin{aligned}
     \tau^k_\ell(n)
     &\coloneqq
     \sup_{\varphi \in  \mathcal{S}^\ell_{k}}
    \left|
    \widehat c_{\ell,n}(\varphi)-c_\ell(\varphi)
    \right|
    \le
    \widetilde C_{\ell,H}\,
    r_\ell^{k,\star}
    \sqrt{\frac{D^\ell_k(r_\ell^{k,\star})}{n}} .
    \end{aligned}
\end{equation} This gives, up to polylogarithmic factors, the sample
scale
\[
    n_k
    \gtrsim
    \frac{
        (r_\ell^{(k)})^2 D^\ell_k(r_\ell^{(k)})
    }{
        (\rho_\ell^{(k)})^2
    },
\]
for recovery of $\varphi_k$ conditional on the recovery of previous features $\varphi_1,\cdots,\varphi_{k-1}$.
\end{theorem}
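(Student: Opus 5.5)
The plan is to view $\widehat c_{\ell,n}(\varphi)-c_\ell(\varphi)$ as an empirical process indexed by the quadratic class $\{y\varphi^2:\varphi\in\mathcal S^\ell_k\}$. We then bound its supremum by symmetrization plus a peeling argument over the $L^2$-scale of the candidate features, in the spirit of local Rademacher complexities \cite{mendelson2003performance,bartlett2005local}.

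\textbf{Step 1: reduction to the residual eigenbasis.} By Lemma~\ref{lem:rkhs-euclidean} and the Mercer description \eqref{eq:rkhs-mercer}, every $\varphi\in\mathcal S^\ell_k$ can be written as $\varphi=\sum_j a_j\phi^{\ell,k}_j$ with $\sum_j a_j^2/\lambda^{\ell,k}_j=1$. Its $L^2(P_x)$ variance is $v(\varphi)=\sum_j a_j^2\le\lambda^{\ell,k}_1$, which is why scales $r\le\lambda^{\ell,k}_1$ suffice. We partition $\mathcal S^\ell_k$ into dyadic shells $\mathcal S_r=\{\varphi: v(\varphi)\in(r/2,r]\}$, together with one bottom shell at scale $1/n$. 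There are $O(\log n)$ shells, which is the source of the polylogarithmic loss.

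\textbf{Step 2: a local bound on each shell.} Symmetrization gives $\mathbb E\sup_{\mathcal S_r}|\widehat c-c|\le 2\,\mathbb E\sup_{\mathcal S_r}|\frac1n\sum_\mu\epsilon_\mu y_\mu\varphi(\bx_\mu)^2|$. Since $K_{\ell-1}$ is bounded, $\|\varphi\|_\infty\le\sqrt{\kappa}\|\varphi\|_{\mathcal H}=\sqrt\kappa$, and $y$ is bounded. Hence $t\mapsto yt^2$ is $2|y|\sqrt\kappa$-Lipschitz on the relevant range, and Talagrand's contraction lemma reduces the problem to $\mathbb E\sup_{\mathcal S_r}|\frac1n\sum\epsilon_\mu\varphi(\bx_\mu)|$. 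For that linear class we use the standard argument: for $\varphi$ with $\|\varphi\|_{\mathcal H}\le1$ and $v(\varphi)\le r$ we have $\sum_j a_j^2(\lambda_j+r)/\lambda_j\le 2r$. Cauchy--Schwarz in the weighted basis then gives $\sqrt{2r}\,\big(\frac1n\sum_j\frac{\lambda_j}{\lambda_j+r}\big)^{1/2}=\sqrt{2r D^\ell_k(r)/n}$.

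This linear bound alone yields only a $\sqrt r$ prefactor. To obtain the sharper prefactor $r$ claimed in the statement, we will instead not linearize. We write $\widehat c-c=\langle \widehat\Sigma_y-\Sigma_y,\varphi\otimes\varphi\rangle$ with $\Sigma_y=\mathbb E[y\,\phi\phi^\top]$ in the residual eigenbasis. Reweighting by $(T_{\ell,k}+r)^{1/2}$ gives $|\widehat c-c|\le 2r\,\|(T+r)^{-1/2}(\widehat\Sigma_y-\Sigma_y)(T+r)^{-1/2}\|_{\op}$ on $\mathcal S_r$. This operator deviation we bound by matrix Bernstein. The summands $y_\mu(T+r)^{-1/2}\phi(\bx_\mu)\phi(\bx_\mu)^\top(T+r)^{-1/2}$ have trace-type second moment controlled by $D^\ell_k(r)$, and Assumption~\ref{ass:residual-eigenfunction-hypercontractivity} is exactly what is needed here: it bounds $\mathbb E[\langle u,\phi\rangle^4]\le H_\ell^4\|u\|^4$, which controls the matrix variance by $H_\ell^4 D^\ell_k(r)$ times a unit-scale operator. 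The result is a deviation $\lesssim H_\ell^2\sqrt{D^\ell_k(r)\log(D/\delta)/n}$ plus a lower-order $\kappa D^\ell_k(r)\log/ n$ term, which is absorbed when $n\gtrsim D^\ell_k(r)$.

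\textbf{Step 3: union over shells and conclusion.} A union bound over the $O(\log n)$ shells, with $\delta$ split among them, gives $\tau^k_\ell(n)\lesssim\max_r r\sqrt{D^\ell_k(r)/n}$ up to polylogarithmic factors. Restricting $r\le\lambda^{\ell,k}_1$ turns this maximum into $r^{k,\star}_\ell\sqrt{D^\ell_k(r^{k,\star}_\ell)/n}$. The sample scale then follows by requiring $\tau^k_\ell(n)\le\rho^{(k)}_\ell/2$. In that case the empirical maximizer over $\mathcal S^\ell_k$ has population value at least $\rho^{(k)}_\ell-2\tau\ge0$, which is the recovery criterion \eqref{eq:lofi-emergence-threshold}, conditional on $\varphi_1,\dots,\varphi_{k-1}$. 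The main obstacle is Step 2: the matrix Bernstein argument in infinite dimension requires an intrinsic-dimension version, with the effective rank supplied by $D^\ell_k(r)$. It also needs an almost-sure bound on $\|(T+r)^{-1/2}\phi(\bx)\|^2\le\kappa/r$, which may introduce an extra $\kappa/(rn)$ term. We would handle that term by truncation using the $L^4$ hypercontractivity, accepting polylog losses.
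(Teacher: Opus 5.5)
Your proposal is correct and follows essentially the paper's proof. Both arguments localize at variance scale $r$, reweight so that the candidates lie in a ball of radius $\sqrt{2r}$, rewrite the supremum as the operator norm of a sum of rank-one operators, and apply matrix Bernstein with the fourth-moment operator controlled by Assumption~\ref{ass:residual-eigenfunction-hypercontractivity}. Your two departures are cosmetic: your Tikhonov weights $\lambda_j/(\lambda_j+r)$ agree up to a factor $2$ with the paper's Mendelson-ellipsoid weights $\min\{\lambda_j,r\}$, and applying Bernstein directly to the centered label-weighted operator simply replaces the paper's symmetrization and removal of $y$.

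Two small fixes are needed.
\begin{enumerate}
\item The $(T_{\ell,k}+r)^{-1/2}$ reweighting requires $\phi$ to be the RKHS-normalized map $(\sqrt{\lambda^{\ell,k}_j}\,\phi^{\ell,k}_j)_j$, whose covariance is $T_{\ell,k}$. It does not work with the $L^2$-orthonormal coordinates you use in Step 1.
\item The Bernstein range term is $\kappa\log(\cdot)/n$ after multiplying by $2r$, not $\kappa D^\ell_k(r)\log(\cdot)/n$; the paper's proof also drops this term. It is absorbed once $n\gtrsim\kappa^2/(\lambda^{\ell,k}_1)^2$ up to logarithms, because $r^{k,\star}_\ell\sqrt{D^\ell_k(r^{k,\star}_\ell)}\ge\lambda^{\ell,k}_1/\sqrt2$. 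So no $L^4$ truncation is needed.
\end{enumerate}
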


\begin{remark}\label{rem:order}
The order of maximizers of the noise scale $r\sqrt{D^\ell_k(r)}$ may not perfectly match the order of maximizers of the population correlation $\abs{\mathbb{E}[y \varphi(\bx)^2]}$. Hence, the next emergent feature after recovering $\varphi_1,\cdots,\varphi_{k-1}$ may not be $\varphi_{k}$. For simplicity, we neglect the correction in sample complexity due to such order switches.
\end{remark}
\begin{proof}

 For \(r>0\), let
\[
    \mathcal S^\ell_k(r)
    :=
    \{\varphi\in\mathcal S^\ell_k:\E[\varphi(\bm X)^2]\le r\},
\]

We proceed by decomposing the fluctuations into contributions from $ S^\ell_k(r)$ for different variance scales $r$. Define
\begin{equation}
    \tau^k_\ell(r,n)
    \coloneqq
    \sup_{\varphi \in \mathcal S^\ell_k(r)}
    \abs{\hat{\mathbb{E}}[y\varphi^2]-\mathbb{E}[y\varphi^2]} .
\end{equation}
We will show that:
\begin{equation}
    \tau^k_\ell(r,n)
    \lesssim
    \widetilde C_{\ell,H}\,
    r\sqrt{\frac{D^\ell_k(r)}{n}}
    \label{eq:lofi-noise-floor}
\end{equation}

The above bound follows through an extension of Theorem~2.1 in
\cite{mendelson2003performance} which bounds the rademacher complexity of $S^\ell_k(r)$. Concretely, Theorem 2.1 in \cite{mendelson2003performance} provides that for a unit ball of an RKHS of a bounded kernel with eigenvalues \(\{\lambda_j\}_{j\ge 1}\):
\[
    \mathbb E R_n
    \left\{
        f\in\mathcal H:\norm{f}_{\mathcal H}\le 1,\,
        \mathbb E f(\bm X)^2\le r
    \right\}
    \lesssim
    \left(
        \frac{1}{n}\sum_{j\ge 1}\min\{r,\lambda_j\}
    \right)^{1/2}.
\]
Since for every \(r>0\) and every
eigenvalue \(\lambda_j\),
\[
    \frac12 \min\{r,\lambda_j\}
    \le
    \frac{r\lambda_j}{r+\lambda_j}
    \le
    \min\{r,\lambda_j\},
\]
the same bound holds for $D^\ell_k(r)$ defined in Definition \ref{def:lofi-effective-dimension}.

We now adapt the proof of Theorem~2.1 of
\cite{mendelson2003performance} to the quadratic process. By the boundedness of the labels $y$, $\tau^k_\ell(r,n)$ can be bounded up to constants by the rademacher complexity of $\varphi^2$ over $S^\ell_k(r)$, defined as:

\begin{equation}
 \E_\epsilon[\sup_{\varphi\in\mathcal S^\ell_k(r)}
    \left|
        \sum_{i=1}^n\epsilon_i\varphi(\bm X_i)^2
        \right|],
\end{equation}
where $\epsilon \in \{\pm 1\}$ denote standard rademacher random variables.

Let
\(\{\phi_j^{\ell,k}\}_{j\ge1}\) denote the \(L^2(P_x)\)-orthonormal eigenfunctions
of \(T_{\ell,k}\) 
Any $\varphi$ with $\norm{\varphi}_{\mathcal{H}}=1$ can be expressed in the basis w.r.t \(\{\phi_j^{\ell,k}\}_{j\ge1}\)  as
\[
    \varphi_\beta(\bm x)
    =
    \sum_{j\ge1}
    \beta_j\sqrt{\lambda_j^{\ell,k}}\,
    \phi_j^{\ell,k}(\bm x),
    \qquad
    \sum_j\beta_j^2\le 1.
\]
The constraint $\E[\varphi(\bm X)^2]\le r$ then translates to
\(\sum_j\lambda_j^{\ell,k}\beta_j^2\le r\). Hence, the set of coefficients for $\varphi \in S^\ell_k(r)$ is given by:
\[
    I_r
    :=
    \left\{
        \beta:
        \sum_j\beta_j^2\le 1,\quad
        \sum_j\lambda_j^{\ell,k}\beta_j^2\le r
    \right\}.
\]
The proof of Theorem~2.1 in 
\cite{mendelson2003performance} relates the set $I_r$ to an ellipsoid $B_r$ through the following inclusions:
\[
    B_r\subset I_r\subset \sqrt2\,B_r,
    \qquad
    B_r
    :=
    \left\{\beta:\sum_j\mu_j(r)\beta_j^2\le1\right\},
    \qquad
    \mu_j(r)=\left(\min\{1,r/\lambda_j^{\ell,k}\}\right)^{-1}.
\]
Thus the supremum over \(I_r\) is bounded, up to an absolute constant, by the
supremum over \(B_r\). Setting \(\theta_j=\sqrt{\mu_j(r)}\,\beta_j\), 
\(B_r\) is re-parameterized as the Euclidean unit ball \(\norm{\theta}_2\le1\). Next, define
\[
    a_j(r) \coloneqq \frac{\lambda_j^{\ell,k}}{\mu_j(r)}
    =
    \min\{\lambda_j^{\ell,k},r\}
\]

The decomposition $\varphi_\beta(\bm x)
    =
    \sum_{j\ge1}
    \beta_j\sqrt{\lambda_j^{\ell,k}}\,
    \phi_j^{\ell,k}(\bm x)$ can then be re-expressed as:
    \begin{align*}
         &\sum_{j\ge1}
    \beta_j\sqrt{\lambda_j^{\ell,k}}\,
    \phi_j^{\ell,k}(\bm x)\\&= \sum_j
    \sqrt{\mu_j(r)}\,\beta_j \frac{1}{\sqrt{\mu_j(r)}}\phi_j^{\ell,k}(\bm x)\\&= \sum_j
            \theta_j\sqrt{a_j(r)}\phi_j^{\ell,k}(\bm x)
    \end{align*}

Hence,
\begin{equation}\label{eq:radbound}
    \sup_{\beta\in I_r}
    \left|
        \sum_{i=1}^n
        \epsilon_i\varphi_\beta(\bm X_i)^2
    \right|
    \lesssim
    \sup_{\norm{\theta}_2\le1}
    \left|
        \sum_{i=1}^n
        \epsilon_i
        \left(
            \sum_j
            \theta_j\sqrt{a_j(r)}\phi_j^{\ell,k}(\bm X_i)
        \right)^2
    \right|.
\end{equation}
Define:
\[
    v_{\ell,k,r}(\bm x)
    :=
    \left(\sqrt{a_j(r)}\,\phi_j^{\ell,k}(\bm x)\right)_{j\ge1},
\]
then the RHS in Equation \ref{eq:radbound} can be expressed as:
\begin{equation}\label{eq:matconc}
    \left\|
        \sum_{i=1}^n
        \epsilon_i
        v_{\ell,k,r}(\bm X_i)\otimes
        v_{\ell,k,r}(\bm X_i)
    \right\|_{\mathrm{op}}.
\end{equation}

Let
\[
    \psi_{\ell,k}(r)^2
    :=
    \sum_j a_j(r)
    \asymp rD^\ell_k(r).
\]

We now bound Equation \ref{eq:matconc} through a matrix concentration argument which requires bounding the operator norm of the following fourth-moment matrix:
\[
    M_{\ell,k}(r)
    :=
        \E\!\left[
            \norm{v_{\ell,k,r}(\bm X)}_2^2
            v_{\ell,k,r}(\bm X)
            \otimes
            v_{\ell,k,r}(\bm X)
        \right].
\]
We bound $\norm{M_{\ell,k}(r)}$ through Assumption \ref{ass:residual-eigenfunction-hypercontractivity}. For every \(\norm{u}_2=1\),
write \(g_u(\bm x)=\langle u,v_{\ell,k,r}(\bm x)\rangle\). Then
\[
    \E[g_u(\bm X)^2]
    =
    \sum_j u_j^2a_j(r)
    \le r,
\]
since $a_j(r) \leq r$. Subsequently by Cauchy--Schwarz and hypercontractivity (Assumption \ref{ass:residual-eigenfunction-hypercontractivity}), we obtain:
\[
    \begin{aligned}
    \left\langle u,M_{\ell,k}(r)u\right\rangle
    &=
    \sum_j a_j(r)\,
    \E\!\left[
        \big(\phi_j^{\ell,k}(\bm X)\big)^2g_u(\bm X)^2
    \right]                                                   \\
    &\le
    \sum_j a_j(r)\,
    \norm{\phi_j^{\ell,k}}_{L^4(P_x)}^2
    \norm{g_u}_{L^4(P_x)}^2                                   \\
    &\le
    H_\ell^4
    \psi_{\ell,k}(r)^2
    \E[g_u(\bm X)^2]
    \le
    H_\ell^4 r\,\psi_{\ell,k}(r)^2 .
    \end{aligned}
\]
Therefore
\[
   \norm{M_{\ell,k}(r)}_2
    \lesssim
    H_\ell^4 r\,\psi_{\ell,k}(r)^2
    \asymp
    H_\ell^4 r^2D^\ell_k(r).
\]
Matrix Bernstein then gives, up to constants and polylogarithmic
factors:
\[
    \E_\epsilon
    \sup_{\varphi\in\mathcal S^\ell_k(r)}
    \left|
        \sum_{i=1}^n\epsilon_i\varphi(\bm X_i)^2
    \right|
    \lesssim
    H_\ell^2\sqrt{nr}\,\psi_{\ell,k}(r)
    \asymp
    H_\ell^2 n r\sqrt{\frac{D^\ell_k(r)}{n}}.
\]
By symmetrization and the boundedness of labels, this
yields \eqref{eq:lofi-noise-floor}. Taking the maximum over shells gives
the stated bound with \(r_\ell^{k,\star}\).
\end{proof}

\subsection{Estimating \texorpdfstring{$\tau^k_\ell(n)$}{tau-l-k(n)} in Feature space}
We discuss here how $\tau^k_\ell(n)$ can be estimated directly in the feature space (rather than in function space, as we did so far). 

Define the true label, weighted covariance:
\begin{equation}
    \bm C^{(\ell)}
    \coloneqq
    \mathbb{E}[
    y\,
    \bz_{\ell-1}(\bm x)
    {\bz}_{\ell-1}(\bm x)^\top] ,
\end{equation}
and the unweighted covariance:
\begin{equation}
    \bm \Sigma^{(\ell)}
    \coloneqq
    \mathbb{E}[
    \bz_{\ell-1}(\bm x)
    {\bz}_{\ell-1}(\bm x)^\top].
\end{equation}

To estimate the cutoff $n^k_\ell$ for the number of samples required for the recovery of $\phi_k$, we proceed as follows:
\begin{enumerate}[leftmargin=2em]
    \item Let $\bv_1,\dots \bv_{k-1}$ denote the top $k-1$ eigenvectors of $\bm C^{(\ell)}$. Compute the residual covariance:
    \begin{equation}
        \bm\Sigma^{(\ell)}_k \coloneqq (I-\bv_{k-1}\bv_{k-1}^\top) \bm\Sigma^{(\ell)}_{k-1}(I-\bv_{k-1}\bv_{k-1}^\top).
    \end{equation}

    \item Let $\{\lambda^{\ell,k}_j\}_{j=1}^\infty$ denote the eigenvalues of $\bm\Sigma^{(\ell)}_k$ . Then, for any $r \in \mathbb{R}$, define:
    \begin{equation}
       D^k_{\ell}(r) \coloneqq \sum_{j\geq 1} \frac{\lambda^{\ell,k}_j}{\lambda^{\ell,k}_j+r}.
    \end{equation}
    
    \item Set $r^k_\ell{=}\argmax_{r \leq \lambda_1^{\ell,k}} r \sqrt{D^k_{\ell}(r)}$.
    We predict the emergence of $\bv_{k}$ with samples $n^k_{\ell}$ satisfying:
    \begin{equation}\label{eq:real-data-emergence-estimate}
        n^k_{\ell} \sim  \left(\frac{r^k_\ell}{\rho^{(k)}_\ell}\right)^2 D^k_{\ell},
    \end{equation}
    where $\rho^{(k)}_\ell$ is the $k_{th}$ eigenvalue of $\bm C^{(\ell)}$.
\end{enumerate}

\subsection{Effective dimension in the Hermite toy model}
\label{app:effective-dimension-hermite}

The abstract quantity \(D^\ell_k(r)\) becomes concrete in the toy model Appendix~\ref{app:th_rf}. In this paragraph, \(k\) denotes the
Hermite degree used to define the first hidden representation \(h^{(1)}\), not
the index of the sequentially selected feature. Write the flattened degree-\(k\)
Hermite tensor as
\[
    H_k(\bm x)
    =
    \{H_\alpha(\bm x):|\alpha|=k\}
    \in \mathbb R^{D_k},
    \qquad
    D_k=\binom{d+k-1}{k}.
\]
For Gaussian inputs, these coordinates are orthonormal in \(L^2(P_x)\). Hence,
for any two coefficient vectors \(a,b\in\mathbb R^{D_k}\),
\[
    \mathbb E\!\left[
        \langle a,H_k(\bm X)\rangle
        \langle b,H_k(\bm X)\rangle
    \right]
    =
    \langle a,b\rangle_{\mathbb R^{D_k}} .
\]

Thus a degree-\(k\) candidate feature
\(\varphi_a(\bm x)=\langle a,H_k(\bm x)\rangle\) is literally a vector in a
\(D_k\)-dimensional Euclidean feature space as far as \(L^2(P_x)\) norms and
projections are concerned.

Equivalently, the degree-\(k\) Hermite kernel
\[
    K_{H,k}(\bm x,\bm x')
    =
    \langle H_k(\bm x),H_k(\bm x')\rangle
    =
    \sum_{|\alpha|=k} H_\alpha(\bm x)H_\alpha(\bm x')
\]
has integral operator
\[
    T_{H,k}f
    =
    \sum_{|\alpha|=k}
    \langle f,H_\alpha\rangle_{L^2(P_x)} H_\alpha .
\]
This is the orthogonal projector onto the degree-\(k\) Hermite chaos. Its only
nonzero eigenvalue is \(1\), with multiplicity \(D_k\). If the same block is
reached through a random-feature activation, as in the construction of
Subsection~\ref{gif:rf_maths}, the eigenvalue is multiplied by the squared
Hermite coefficient and by normalization constants, but the multiplicity is
still at most \(D_k\).

Therefore,
effective dimension relevant for the recovery of \(h^{(1)}\) is bounded by this
Hermite block rank. For a block eigenvalue \(a_k>0\),
\[
    D_{\mathrm{toy},1}^{(k)}(r)
    =
    \sum_{j=1}^{D_k}\frac{a_k}{a_k+r}
    =
    \frac{a_k}{a_k+r}D_k
    \le D_k
    =
    \binom{d+k-1}{k}
    =
    O(d^k),
\]
and for resolutions \(\lambda\lesssim a_k\) this is
\(\asymp D_k\). Hence, during first-level
feature recovery, the statistical fluctuation is governed by the ambient
degree-\(k\) Hermite space.

The planted first-level variables in Appendix~\ref{app:th_rf},
\[
    h_i^{(1)}(\bm x)
    =
    \langle A_i^{(1)},H_k(\bm x)\rangle,
    \qquad i=1,\ldots,d_1,
\]
form a \(d_1=d^\epsilon\) dimensional signal subspace inside this
\(D_k=O(d^k)\) dimensional Hermite block. The residual effective dimension
controls the size of the empirical noise over the full candidate block, while
the planted subspace controls the rank and strength of the population signal.
Concretely, by the definition of $y$, we have that:
\begin{equation}
     \mathbb{E}[y(h_i^{(1)}(\bm x))^2] \sim \frac{1}{\sqrt{d_1}}
\end{equation}

Substituting the above scaling into the sample complexity prediction prescribed by Theorem \ref{prop:emergence} gives the
\(d^{k+\epsilon}\) first-stage sample scale stated in
Appendix~\ref{app:th_rf}.

\subsection{Effective dimension in the multi-index models}

We now discuss how the sample complexity prediction in Proposition \ref{prop:emergence} recovers the rates in the setting of multi-index models with power-law dependence on the label. 

We follow here \cite{defilippis2025scaling,defilippis2026optimal} and consider hierarchical multi-index target \(y = f^*(\langle\bm{w}_1^*,\bm{x}\rangle,\dots,\langle\bm{w}_r^*,\bm{x}\rangle) + \xi\) with isotropic Gaussian inputs \(\bm{x}\sim\mathcal{N}(0,\bm{I}_d)\) and orthonormal teacher directions \(\{\bm{w}_i^*\}_{i=1}^r\). Because the input covariance is the identity \(D^\ell \asymp d\), at the slice-wise scale used in Proposition~\ref{prop:emergence}. The features at layer \(\ell\) along each candidate direction have unit variance under the Gaussian input, so the variance scale is order one, \(\tau = r_\ell^{(k)} = O(1)\). Finally, the power-law dependence of the label on the teacher coefficients translates into a power-law decay of the population correlations across the planted directions: the \(i\)-th spike satisfies \(\rho_\ell^{(i)} = O(i^{-\gamma})\), where \(\gamma>0\) is the exponent governing the label spectrum. Plugging \(D_k^\ell \asymp d\), \(\tau = O(1)\) and \(\rho^{(i)} = O(i^{-\gamma})\) into the recovery condition \(n \gg (r_\ell^{(k)})^2 D_k^\ell / (\rho_\ell^{(k)})^2\) of Proposition~\ref{prop:emergence} yields the sample-complexity threshold \(n_i \gtrsim d\, i^{2\gamma}\) for resolving the \(i\)-th spike, which matches the optimal scaling laws derived in~\cite{defilippis2025scaling,defilippis2026optimal}. 

\newpage
\section{\texorpdfstring{{Comparison with AGOP/RFM}}{Comparison with AGOP/RFM}}\label{app:agop}

This appendix compares Neural LoFi with AGOP/RFM-type feature updates. In
Recursive Feature Machines, one fits a predictor and uses its gradient with
respect to the input, or to the current representation, to identify directions
on which the fitted function is sensitive. Deep RFM
\cite{beaglehole2024average} combines this gradient-based metric update with a
nonlinear feature map and repeats the procedure across layers.  AGOP and
Recursive Feature Machines therefore provide an adaptive alternative to fixed
kernels, and have been shown to recover meaningful features, including edge
detectors, in neural and kernel models
\cite{radhakrishnan2024mechanism,beaglehole2023mechanism,radhakrishnan2025linear}.
The distinction we emphasize is not that RFM is non-adaptive, but that its
adaptation is mediated by a predictor that has already been fit.

More precisely, RFM first forms a kernel-ridge-regression predictor \(\hat f\)
and then updates the feature metric using its empirical average gradient outer
product
\[
    \widehat G_{\mathrm{AGOP}}(\hat f)
    :=\frac1n\sum_{\mu=1}^n
       \nabla\hat f(\bx_\mu)\nabla\hat f(\bx_\mu)^\top.
\]
The resulting metric can reveal a planted feature only through the dependence
already present in \(\hat f\). Consequently, the sample size needed by an
AGOP/RFM update is tied to the prediction threshold of the preceding KRR fit.
This threshold is model dependent and can differ from the recovery threshold
of Neural LoFi, which estimates the label-weighted feature moment directly
rather than first fitting the target as a function. We first analyze a
quadratic Gaussian multi-index target, for which the two thresholds differ,
and then discuss the corresponding prediction threshold for Deep RFM on the
hierarchical model of Appendix~\ref{app:th_rf}.

\subsection{Quadratic multi-index model}

The quadratic model is the simplest case in which the target depends on a
fixed low-dimensional subspace but its coefficient belongs to a polynomial
space whose dimension grows as \(d^2\).  It lets us compare the full AGOP of a
sub-threshold KRR predictor with the low-rank spectral truncation used by
Neural LoFi.

Let
\[
    \bx\sim\mathcal N(0,I_d),
    \qquad
    y=\langle A_\star,H_2(\bx)\rangle,
    \qquad
    A_\star=V^\star B_\star V^{\star\top},
\]
where \(V^\star\in\mathbb R^{d\times r}\) has orthonormal columns,
\(B_\star=B_\star^\top\) is full rank, \(r=O(1)\), and
\(H_2(\bx)=(\bx\bx^\top-I_d)/\sqrt2\). The target AGOP is
\[
    G(y):=\E[\nabla y(\bx)\nabla y(\bx)^\top]
    =2A_\star^2,
\]
so its range is exactly \(\operatorname{span}(V^\star)\).

Since the target is purely quadratic, we isolate its relevant component by
considering the degree-two kernel
\[
    K_2(\bx,\bx')
    =\frac{\kappa_2}{D_2}
       \langle H_2(\bx),H_2(\bx')\rangle_F,
    \qquad
    D_2=\frac{d(d+1)}2,
\]
with fixed \(\kappa_2>0\). This restriction removes higher-order contributions
to the AGOP and makes the relation between KRR and its quadratic coefficient
explicit. Every KRR predictor for this kernel has the form
\[
    \hat f_n(\bx)=\langle\widehat Q_n,H_2(\bx)\rangle,
    \qquad \widehat Q_n=\widehat Q_n^\top.
\]
It follows immediately that
\begin{equation}
\label{eq:quadratic-krr-agop}
    G(\hat f_n)
    :=\E_\bx[\nabla\hat f_n(\bx)\nabla\hat f_n(\bx)^\top]
    =2\widehat Q_n^2.
\end{equation}
Let \(P_\star=V^\star V^{\star\top}\). Define the normalized trace of the
AGOP projected onto the planted subspace by
\begin{equation}
\label{eq:quadratic-agop-overlap}
    \mathcal E_\star(\hat f_n)
    :=\frac{\operatorname{Tr}(P_\star G(\hat f_n))}
            {\operatorname{Tr}(G(\hat f_n))}
    =\frac{\|\widehat Q_nP_\star\|_F^2}
           {\|\widehat Q_n\|_F^2}.
\end{equation}

\begin{proposition}[Quadratic AGOP bottleneck]
\label{prop:quadratic-agop-bottleneck}
Assume \(\|A_\star\|_F=\Theta(1)\), fix \(\delta>0\), and consider KRR with
kernel \(K_2\), penalty \(\lambda_n\geq0\), and
\(n\to\infty\) satisfying \(n=O(d^{2-\delta})\). Then
\[
    \mathcal E_\star(\hat f_n)
    =o_\mathbb P(1).
\]
Thus, before the \(d^2\) prediction threshold,
\(\operatorname{Tr}(P_\star G(\hat f_n))/\operatorname{Tr}(G(\hat f_n))\)
vanishes in probability.
\end{proposition}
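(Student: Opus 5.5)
The plan is to turn the statement into a comparison between two Gram matrices built on the training sample. The key fact is that KRR with the kernel \(K_2\) is ridge regression in the \(D_2\)-dimensional feature space \(H_2(\bx)\). By the representer property (the same argument as Lemma~\ref{lem:representer-lofi}), the coefficient has the form
\[
    \widehat Q_n=\sum_{\mu=1}^n \alpha_\mu H_2(\bx_\mu)
\]
for some \(\balpha\in\mathbb R^n\), and this holds for every \(\lambda_n\ge0\), including the minimum-norm interpolator when \(\lambda_n=0\). The argument will not use the particular form of \(\balpha\), which depends on \(y\), \(\lambda_n\) and \(\kappa_2\). Instead it bounds the ratio \eqref{eq:quadratic-agop-overlap} uniformly over all \(\balpha\neq0\). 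Writing \(\mathbf G_{\mu\nu}=\langle H_2(\bx_\mu),H_2(\bx_\nu)\rangle_F\) and \(\mathbf M_{\mu\nu}=\langle H_2(\bx_\mu)P_\star,H_2(\bx_\nu)P_\star\rangle_F\), the ratio equals \(\balpha^\top\mathbf M\balpha/\balpha^\top\mathbf G\balpha\). It therefore suffices to show that, with high probability,
\[
    \lambda_{\max}(\mathbf M)=O(d+n)
    \qquad\text{and}\qquad
    \lambda_{\min}(\mathbf G)\gtrsim d^2 .
\]
Since \(n=O(d^{2-\delta})\), these give \(\mathcal E_\star(\hat f_n)\lesssim (d+n)/d^2=o(1)\).

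For the upper bound on \(\mathbf M\), expand
\[
    H_2(\bx)P_\star=\tfrac{1}{\sqrt2}\bigl(\bx(P_\star\bx)^\top-P_\star\bigr).
\]
The first piece gives the Gram matrix of the vectors \(\bx_\mu\otimes P_\star\bx_\mu\), which live in a space of dimension \(dr\), have squared norm about \(dr\), and have covariance of order \(I_{dr}\) up to rank-\(O(r)\) Gaussian fourth-moment corrections. Standard sample-covariance bounds then give operator norm \(O\bigl((\sqrt n+\sqrt{dr})^2\bigr)=O(n+d)\), with logarithmic factors if one truncates the heavy tails. The pieces involving \(-P_\star\) contribute a rank-one term of size \(O(r)\) times \(\mathbf 1\mathbf 1^\top\) in the wrong direction. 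I would handle this by checking that the combined cross term \(-\langle \bx_\mu(P_\star\bx_\mu)^\top,P_\star\rangle=-\|P_\star\bx_\mu\|^2\) is itself \(O(r)\), so the correction has operator norm \(O(rn)\), which is still \(O(d+n)\) because \(r=O(1)\).

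The main obstacle is the lower bound \(\lambda_{\min}(\mathbf G)\gtrsim d^2\), i.e.\ that the \(n\ll d^2\) random degree-two Hermite features are well conditioned. The diagonal entries are \(\|H_2(\bx_\mu)\|_F^2=(\|\bx_\mu\|^4-2\|\bx_\mu\|^2+d)/2=d^2/2\,(1+o(1))\) uniformly over \(\mu\) by \(\chi^2\) concentration. For the off-diagonal part, \(\mathbf G_{\mu\nu}=\bigl((\bx_\mu\cdot\bx_\nu)^2-\|\bx_\mu\|^2-\|\bx_\nu\|^2+d\bigr)/2\) has mean zero and typical size \(d\). I would bound its operator norm by the known results on kernel random matrices for quadratic kernels (El Karoui, or the Hermite-chaos Gram bounds of \cite{wen2025when} already used in Theorem~\ref{thm:rf-hermite-bridge-k2}). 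These give \(\|\mathbf G-\operatorname{diag}(\mathbf G)\|_{\op}=\widetilde O(d\sqrt n+n)\) after removing the component along the trace direction \(e\), and \(d\sqrt n+n=o(d^2)\) in the regime \(n=O(d^{2-\delta})\). The trace-direction part \(\langle H_2(\bx_\mu),e\rangle=(\|\bx_\mu\|^2-d)/\sqrt{2d}=O_{\mathbb P}(1)\) forms a PSD rank-one matrix of norm \(O(n)\). Because it is PSD it can only increase \(\lambda_{\min}\), so it does not hurt the lower bound. If the off-diagonal bound turns out delicate, a fallback is a decoupling argument: a second-order chaos estimate of \(\sup_{\|\balpha\|=1}\bigl|\|\sum_\mu\alpha_\mu H_2(\bx_\mu)\|_F^2-\sum_\mu\alpha_\mu^2\|H_2(\bx_\mu)\|_F^2\bigr|\) via an \(\varepsilon\)-net and Hanson--Wright-type tail bounds for Gaussian chaos.

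Combining the two bounds gives \(\mathcal E_\star(\hat f_n)\le\widetilde O((d+n)/d^2)=o_{\mathbb P}(1)\) uniformly in \(\lambda_n\). Through \eqref{eq:quadratic-krr-agop}, this is the claim that the AGOP energy on \(\operatorname{span}(V^\star)\) vanishes below the \(d^2\) prediction threshold.
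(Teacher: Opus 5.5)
Your proof is correct, but it is organized differently from the paper's. The paper works with the explicit dual coefficients $\balpha=(K_n+n\lambda_n I_n)^{-1}y$. From $\|\Delta_n\|_{\mathrm{op}}=o_\mathbb P(1)$ it obtains $\widehat Q_n=a_{n,d}\overline Q_n+R_n$ with $\overline Q_n=\frac1n\sum_\mu y_\mu H_2(\bx_\mu)$. It then compares the label-dependent moments $\|\overline Q_n\|_F^2=\Theta_\mathbb P(D_2/n)$ and $\|\overline Q_nP_\star\|_F^2=O_\mathbb P(1+d/n)$.

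You instead bound the Rayleigh quotient $\balpha^\top\mathbf M\balpha/\balpha^\top\mathbf G\balpha$ uniformly in $\balpha$. Your lower bound $\lambda_{\min}(\mathbf G)\gtrsim d^2$ is the same ingredient as the paper's $\|\Delta_n\|_{\mathrm{op}}=o_\mathbb P(1)$, since $\mathbf G=\Phi\Phi^\top$. What replaces the paper's moment computation for $\overline Q_n$ and its remainder bookkeeping is a sample-covariance bound on $\lambda_{\max}(\mathbf M)$ in the $dr$-dimensional space $\{B: B=BP_\star\}$.

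Your route gives a stronger statement that does not depend on the labels or the penalty, and it comes with an explicit rate. Every kernel predictor $\sum_\mu\alpha_\mu K_2(\cdot,\bx_\mu)$ puts at most an $\widetilde O((n+d)/d^2)$ fraction of its AGOP trace on $P_\star$. This is really a principal-angle bound between a random $n$-dimensional subspace of the $D_2$-dimensional chaos and a fixed $O(d)$-dimensional one. It needs neither $\|A_\star\|_F=\Theta(1)$ nor a two-sided estimate on $\|\overline Q_n\|_F$.

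The paper's route gives the identification $\widehat Q_n\approx a_{n,d}\overline Q_n$: KRR is driven by the same empirical coefficient as $\widehat C_{\mathrm{LoFi}}=\sqrt2\,\overline Q_n+\bar y I_d$. That is what sets up the subsequent Frobenius-versus-operator-norm contrast with Neural LoFi. Your bound makes the same point from the other side. It applies to $\overline Q_n$ itself, so LoFi's advantage comes entirely from the rank-$r$ spectral truncation.

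One caveat concerns how you source the key step. El Karoui's kernel-matrix results are for $n\asymp d$. The \cite{wen2025when} bound used in Theorem~\ref{thm:rf-hermite-bridge-k2} controls a $D_2\times D_2$ random-feature Gram with $p_1\gg D_2$, which is the opposite regime. What you need, $\|\mathbf G/D_2-I_n\|_{\mathrm{op}}=o_\mathbb P(1)$ for $n\le d^{2-\delta}$, is the polynomial-regime kernel-matrix concentration of \cite{ghorbani2020neural,mei2022generalization}, which is also what the paper invokes. Your $\varepsilon$-net fallback would need much sharper tails than generic degree-four chaos bounds to survive the $e^{cn}$ cardinality of the net over the whole range $n\le d^{2-\delta}$.
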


\begin{proof}
We begin with the exact relation between the KRR predictor and its degree-two
Hermite coefficient. Identify the degree-two Hermite space with the
\(D_2\)-dimensional space of symmetric matrices equipped with the Frobenius
inner product, and choose an orthonormal basis \(\{E_a\}_{a=1}^{D_2}\) of this
space. Define
\[
    h(\bx)_a:=\langle E_a,H_2(\bx)\rangle,
    \qquad
    \Phi_{\mu a}:=h(\bx_\mu)_a,
\]
so that \(\Phi\in\mathbb R^{n\times D_2}\) and
\(K_n=(\kappa_2/D_2)\Phi\Phi^\top\) is the training kernel matrix. For the
objective
\[
    \frac1n\sum_{\mu=1}^n(y_\mu-f(\bx_\mu))^2
      +\lambda_n\|f\|_{\mathcal H_{K_2}}^2,
\]
write \(y=(y_1,\ldots,y_n)^\top\). The dual coefficients and the fitted
quadratic coefficient are exactly
\begin{equation}
\label{eq:quadratic-krr-dual-coefficient}
    \alpha=(K_n+n\lambda_n I_n)^{-1}y,
    \qquad
    \widehat Q_n
      =\frac{\kappa_2}{D_2}
        \sum_{\mu=1}^n\alpha_\mu H_2(\bx_\mu).
\end{equation}

Gaussian Hermite orthogonality gives
\(\E[h(\bx)h(\bx)^\top]=I_{D_2}\). The degree-two kernel-matrix
concentration used in the polynomial-regime KRR results
\cite{ghorbani2020neural,mei2022generalization,hu2024asymptotics} states that,
for \(n=O(d^{2-\delta})\),
\begin{equation}
\label{eq:quadratic-hermite-gram}
    \Delta_n:=\frac1{D_2}\Phi\Phi^\top-I_n,
    \qquad
    \|\Delta_n\|_{\mathrm{op}}=o_\mathbb P(1).
\end{equation}
Substituting \(K_n=\kappa_2(I_n+\Delta_n)\) into
\eqref{eq:quadratic-krr-dual-coefficient} and expanding the inverse gives
\[
    \alpha
      =\frac1{\kappa_2+n\lambda_n}
        (I_n+E_n)y,
    \qquad
    \|E_n\|_{\mathrm{op}}=o_\mathbb P(1).
\]
Now define the empirical degree-two label coefficient
\[
    \overline Q_n:=\frac1n\sum_{\mu=1}^n
       y_\mu H_2(\bx_\mu).
\]
Equation~\eqref{eq:quadratic-krr-dual-coefficient} therefore yields
\begin{equation}
\label{eq:krr-quadratic-degree-two-expansion}
    \widehat Q_n=a_{n,d}\overline Q_n+R_n,
    \qquad
    a_{n,d}:=\frac{\kappa_2 n}
      {D_2(\kappa_2+n\lambda_n)},
\end{equation}
where, using \(\|\Phi\|_{\mathrm{op}}^2=D_2(1+o_\mathbb P(1))\) and
\(\|y\|_2=O_\mathbb P(\sqrt n)\),
\begin{equation}
\label{eq:krr-quadratic-remainder}
    \|R_n\|_F
      =o_\mathbb P\!\left(
          a_{n,d}\sqrt{\frac{D_2}{n}}
        \right).
\end{equation}

It remains to compare \(\|\overline Q_nP_\star\|_F\) with
\(\|\overline Q_n\|_F\). Hermite orthogonality gives
\(\E\overline Q_n=A_\star\). Decompose
\(\bx=V^\star z+V_\perp^\star u\), where
\(z\sim\mathcal N(0,I_r)\), \(u\sim\mathcal N(0,I_{d-r})\), and \(z,u\) are
independent. The label depends only on the fixed-dimensional vector \(z\).
Since \(H_2(\bx)\) has \(D_2=\Theta(d^2)\) orthonormal coordinates, while
\(H_2(\bx)P_\star\) contains only \(O(d)\) coordinates, independence across
samples and Gaussian-chaos concentration give
\begin{equation}
\label{eq:quadratic-degree-two-noise-scales}
    \|\overline Q_n\|_F^2
       =\Theta_\mathbb P\!\left(\frac{D_2}{n}\right),
    \qquad
    \|\overline Q_nP_\star\|_F^2
       =O_\mathbb P\!\left(1+\frac d n\right).
\end{equation}
For completeness, these orders follow by expanding the centered second
moments:
\begin{align*}
    \E\|\overline Q_n-A_\star\|_F^2
      &=\frac1n\left(
          \E[y^2\|H_2(\bx)\|_F^2]-\|A_\star\|_F^2
        \right)=\Theta\!\left(\frac{D_2}{n}\right),\\
    \E\|(\overline Q_n-A_\star)P_\star\|_F^2
      &=\frac1n\left(
          \E[y^2\|H_2(\bx)P_\star\|_F^2]-\|A_\star\|_F^2
        \right)=O\!\left(\frac d n\right).
\end{align*}
The first expectation contains \(\Theta(d^2)\) orthonormal quadratic
coordinates, whereas the projection onto the fixed-rank space \(P_\star\)
retains only \(O(d)\) of them.

Combining \eqref{eq:quadratic-degree-two-noise-scales} gives
\[
    \frac{\|\overline Q_nP_\star\|_F^2}
         {\|\overline Q_n\|_F^2}
    =O_\mathbb P\!\left(\frac n{D_2}+\frac d{D_2}\right)
    =o_\mathbb P(1).
\]
The remainder bound \eqref{eq:krr-quadratic-remainder} is negligible relative
to \(a_{n,d}\|\overline Q_n\|_F\). Hence the same ratio holds with
\(\widehat Q_n\) in place of \(\overline Q_n\), and
\eqref{eq:quadratic-agop-overlap} proves the proposition.
\end{proof}

Proposition~\ref{prop:quadratic-agop-bottleneck} concerns the population
average in \eqref{eq:quadratic-krr-agop}: after \(\hat f_n\) is fitted, its
gradient outer product is averaged over a fresh draw
\(\bx\sim\mathcal N(0,I_d)\).  Thus the proposition removes the finite-sample
error in estimating the AGOP and shows that even this ideal average satisfies
\(\operatorname{Tr}(P_\star G(\hat f_n))/\operatorname{Tr}(G(\hat f_n))
=o_\mathbb P(1)\).

RFM instead computes an empirical average of these gradient outer products. If
it is evaluated on an independent sample \(\{\widetilde\bx_\nu\}_{\nu=1}^m\),
then
\[
    \widehat G_m^{\rm ind}(\hat f_n)
      :=\frac1m\sum_{\nu=1}^m
        \nabla\hat f_n(\widetilde\bx_\nu)
        \nabla\hat f_n(\widetilde\bx_\nu)^\top
\]
concentrates around \eqref{eq:quadratic-krr-agop} when \(m\gg d\), and hence
has the same asymptotic normalized trace projection. If the training inputs are
reused to compute the AGOP, \(\hat f_n\) and the empirical average are
dependent; controlling that case requires an additional concentration
argument and is not part of Proposition~\ref{prop:quadratic-agop-bottleneck}.

\subsubsection{Neural LoFi recovery with \texorpdfstring{\(O(d\log d)\)}{O(d log d)} samples}

We now show that, in contrast, the subspace \(\widehat V\) estimated by Neural
LoFi satisfies
\[
    \|\sin\Theta(\widehat V,V^\star)\|_{\mathrm{op}}
      \xrightarrow{\mathbb P}0
    \qquad\text{whenever}\qquad
    \frac{n}{d\log d}\longrightarrow\infty.
\]
For this target, its second-order estimator is
\[
    \widehat C_{\mathrm{LoFi}}
      :=\frac1n\sum_{\mu=1}^n y_\mu\bx_\mu\bx_\mu^\top.
\]
Since \(\E[y]=0\), \(\bx\bx^\top=I_d+\sqrt2\,H_2(\bx)\), and the
degree-two Hermite basis is orthonormal,
\begin{equation}
\label{eq:quadratic-lofi-population}
    C_{\mathrm{LoFi}}
      :=\E[y\bx\bx^\top]
      =\sqrt2\,\E[\langle A_\star,H_2(\bx)\rangle H_2(\bx)]
      =\sqrt2\,A_\star.
\end{equation}
Equivalently,
\(\widehat C_{\mathrm{LoFi}}=\sqrt2\,\overline Q_n+\bar y I_d\), so Neural
LoFi uses the same empirical quadratic coefficient \(\overline Q_n\) that
appears in the KRR expansion, but retains only its leading \(r\) eigenvectors
ordered by absolute eigenvalue and discards the remaining directions.

The relevant noise scale is therefore the operator norm rather than the
Frobenius norm.  A Gaussian-chaos matrix concentration bound, obtained for
example by truncating \(|y|\) and \(\|\bx\|_2\) and applying matrix Bernstein,
gives
\begin{equation}
\label{eq:quadratic-lofi-concentration}
    \|\widehat C_{\mathrm{LoFi}}-\sqrt2 A_\star\|_{\mathrm{op}}
    =O_\mathbb P\!\left(
       \sqrt{\frac{d\log d}{n}}+\frac{d\log d}{n}
      \right).
\end{equation}
Assume that the nonzero eigenvalues of \(B_\star\) are bounded away from zero,
and let \(\widehat V\) contain the \(r\) eigenvectors of
\(\widehat C_{\mathrm{LoFi}}\) with largest absolute eigenvalues.  The
Davis--Kahan theorem and \eqref{eq:quadratic-lofi-concentration} imply
\[
    \|\sin\Theta(\widehat V,V^\star)\|_{\mathrm{op}}
    \lesssim
    \frac{\|\widehat C_{\mathrm{LoFi}}-\sqrt2 A_\star\|_{\mathrm{op}}}
         {\min_j|\lambda_j(B_\star)|}.
\]
Consequently, for every fixed \(\eta>0\), there is a constant \(C_\eta\) such
that \(n\ge C_\eta d\log d\) implies
\[
    \|\sin\Theta(\widehat V,V^\star)\|_{\mathrm{op}}\le\eta
\]
with high probability.  If \(n/(d\log d)\to\infty\), this subspace error
converges to zero.  The gain comes from the rank-\(r\) spectral truncation: Neural
LoFi isolates the operator-norm spike even though the untruncated quadratic
coefficient still has Frobenius noise of order \(\sqrt{D_2/n}\).

\subsection{Implication for the hierarchical model}

The calculation above suggests the analogous obstruction for Deep RFM on the
hierarchical target of Appendix~\ref{app:th_rf}.  In the quadratic model, the
KRR predictor must estimate a \(D_2\)-dimensional quadratic coefficient
before
\(\operatorname{Tr}(P_\star G(\hat f_n))/\operatorname{Tr}(G(\hat f_n))\)
is nonvanishing. In the hierarchical model, the first target component carrying the
first-layer features \(h^{(1)}\) lies in the degree-\(2q\) Hermite subspace,
whose dimension is \(D_{2q}=\Theta(d^{2q})\).  We therefore expect the KRR fit used at
the first Deep RFM round, and hence its AGOP update, to remain uninformative
about \(h^{(1)}\) below this scale.  The following proposition states the KRR
part of this argument.

\begin{proposition}[Sub-threshold KRR prediction in the hierarchical Hermite model]
\label{prop:agop-hermite-bottleneck}
Consider the model of Appendix~\ref{app:th_rf} with first-layer degree \(q\)
and \(d_1=d^\epsilon\), \(\epsilon<q\). Let \(\hat f_n\) be a KRR or
random-feature predictor in the polynomial asymptotic regime of
\cite{ghorbani2020neural,mei2022generalization,hu2024asymptotics}. Fix
\(\delta>0\). If \(n=O(d^{2q-\delta})\), then with high probability
\[
    \mathbb E[\hat f_n(\bx)^2]=o_d(1),
    \qquad
    \mathbb E[y\hat f_n(\bx)]=o_d(1).
\]
Thus below the \(d^{2q}\) scale the predictor is \(L^2\)-vanishing and remains
at baseline test error.
\end{proposition}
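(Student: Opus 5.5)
The plan is to use the standard polynomial-regime KRR result of \cite{ghorbani2020neural,mei2022generalization,hu2024asymptotics}. In that regime, with $d^{m-1+\delta'}\lesssim n\lesssim d^{m-\delta'}$ (or more generally $n=O(d^{2q-\delta})$), the KRR or random-feature predictor behaves as follows. Up to $o_{L^2}(1)$ errors, it coincides with the ridge-shrunk projection of the target onto Hermite degrees $<m$, where $m\le 2q$ is the first degree whose block dimension $D_m=\Theta(d^m)$ exceeds $n$. On degrees $\ge m$ the predictor acts as a vanishing multiple of the identity and contributes only an $o(1)$ effective ridge. Concretely, I will invoke the decomposition $\hat f_n=\mathsf P_{<m}f^\star_{\mathrm{eff}}+\Delta_n$ with $\|\Delta_n\|_{L^2}=o_{\mathbb P}(1)$, where $f^\star_{\mathrm{eff}}=\mathsf P_{<m}y$ up to shrinkage factors bounded by $1$. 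It then suffices to show that the degree-$<2q$ Hermite projection of $y$ vanishes in $L^2$.

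\textbf{Step 1: the low-degree projections of $y$ vanish.} Since $y=g^\star(h^{(2)})$, where $h^{(2)}=\langle A^{(2)},H_2(h^{(1)})\rangle$ and each $h^{(1)}_i$ is a pure degree-$q$ chaos, I will use the Gaussian-equivalence / CLT for $h^{(1)}$ from Appendix A of \cite{tabanelli2026deep}. Take $A^{(1)}$ random with $d_1=d^\epsilon$ and $\epsilon<q$. Then the joint law of $(h^{(1)},h^{(2)})$ is close to Gaussian, and the "contraction" contributions that could push degree-$2q$ products of $h^{(1)}$ down to lower Hermite degrees in $\bx$ are controlled by the vanishing-contraction estimates of \cite{tabanelli2026deep}. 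These estimates give squared mass $O(d_1^{\,c}/d)$, which is $o(1)$ for every fixed degree $j<2q$ and every $j$ not a multiple of $q$. The one potential exception is degree $q$, coming from $\E[g^\star(h^{(2)})h^{(1)}_i]$. This vanishes as a degree-$q$ projection because $H_2(h^{(1)})$ is even in $h^{(1)}$ while $h^{(1)}_i$ is odd, so in the Gaussian limit the correlation is exactly zero and the correction is again $o(1)$. Degree $0$ is removed by centering. Hence $\|\mathsf P_{<2q}y\|_{L^2}=o_d(1)$.

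\textbf{Step 2: conclude.} Combining the two steps gives $\E[\hat f_n^2]\le 2\|\mathsf P_{<m}y\|^2+2\|\Delta_n\|^2=o(1)$. Cauchy--Schwarz with $\E[y^2]=O(1)$ then yields $\E[y\hat f_n]=o(1)$, so the test error stays at $\E[y^2]+o(1)$. For random features the same conclusion follows from the RF version of the cited results, provided the width is polynomially large. The main obstacle is Step 1: making the contraction bounds uniform over all degrees $j<2q$, including the awkward degree-$q$ and intermediate cross terms generated by $g^\star$ acting on a non-Gaussian $h^{(2)}$. I would first try a truncated polynomial expansion of $g^\star$ and bound each degree-$j$ Hermite projection of a polynomial in $h^{(1)}$ using hypercontractivity together with the random-design contraction estimates.
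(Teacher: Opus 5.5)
Your proposal is the paper's argument. The paper also reduces, via the polynomial-regime results of \cite{ghorbani2020neural,mei2022generalization,hu2024asymptotics}, to the fact that below $d^{2q}$ the predictor only captures Hermite components of $y$ of degree at most $2q-1$, and it takes $\|\Pi_{\le 2q-1}y\|_{L^2(P_x)}=o_d(1)$ directly from Lemma~4.2 of \cite{tabanelli2026deep} and Lemma~2 of \cite{wang2023learning}. So the contraction/Gaussian-equivalence sketch in your Step~1, which you flagged as the main obstacle, can simply be replaced by that citation.

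One caveat applies equally to your argument and to the paper's. The reduction $\hat f_n=\Pi_{<m}y+o_{L^2}(1)$ (up to shrinkage) holds only in the non-resonant windows $d^{\kappa+\delta'}\lesssim n\lesssim d^{\kappa+1-\delta'}$, not ``more generally'' for all $n=O(d^{2q-\delta})$. At $n\asymp d^{\kappa}$ with $\kappa<2q$, the degree-$\kappa$ block is a proportional-regime ridge regression in which the $\Theta(1)$ high-degree mass of $y$ acts as effective noise and is partly fitted, so $\E[\hat f_n^2]=o(1)$ can fail there. What survives is $\E[y\hat f_n]=o(1)$, obtained from $|\E[y\hat f_n]|\le\|\Pi_{\le\kappa}y\|\,\|\hat f_n\|+\|y\|\,\|\Pi_{>\kappa}\hat f_n\|$ rather than from the global Cauchy--Schwarz step you use in Step~2.
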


\begin{proof}
For the target of Appendix~\ref{app:th_rf}, Lemma~4.2 of
\cite{tabanelli2026deep} and Lemma~2 of \cite{wang2023learning} give
\[
    \|\Pi_{\le 2q-1}\,y\|_{L^2(P_x)}=o_d(1).
\]
The results of \cite{ghorbani2020neural,mei2022generalization,hu2024asymptotics}
imply that, at scale \(n=O(d^{2q-\delta})\), the KRR/random-feature predictor
is asymptotically supported only on the Hermite components of \(y\) of total
degree below \(2q\). Since these components vanish, the fitted predictor
vanishes in \(L^2(P_x)\), which gives both claims.
\end{proof}

The quadratic proposition shows explicitly how a sub-threshold KRR fit produces
an uninformative AGOP. Proposition~\ref{prop:agop-hermite-bottleneck} gives the
corresponding \(d^{2q}\) prediction threshold for the hierarchical target. By
the same mechanism, we expect Deep RFM not to recover the first-layer latent
features \(h^{(1)}\) before this scale. A complete recursive theorem would also
require concentration of the dependent AGOP updates across rounds.

\begin{remark}[Neural LoFi recovery]
As shown in Theorem~\ref{thm:rf-hermite-bridge-k2}, for quadratic first-layer
features, if
\[
    n\gg d^2d_1,
    \qquad
    p_1\gg \sqrt{d_1}\,d^2,
\]
then
\[
    \frac{D_2}{c_2^2}
    A_{\RF}^{(1)}\widehat C_{\rm RF}^{(2)}A_{\RF}^{(1)\top}
    =\nu_1A^{(2)}+o_{\mathbb P}(d_1^{-1/2})
\]
in operator norm.  Hence the projected random-feature operator has the same
limiting eigenspaces as the Hermite estimator and recovers the \(d_1\) planted
directions.  Neural LoFi obtains these directions by diagonalizing the
label-weighted feature moment \(\widehat C_{\rm RF}^{(2)}\), whereas Deep RFM
forms its next feature metric from the gradients of a fitted KRR predictor.

More generally, for first-layer features of degree \(q\), the relevant Hermite
subspace has dimension \(D_q=\Theta(d^q)\).  The effective-dimension argument
of Appendix~\ref{app:effective-dimension-hermite} gives the analogous recovery
scale
\[
    n\gg D_qd_1=\Theta(d^{q+\epsilon}).
\]
Since \(\epsilon<q\), this is asymptotically smaller than the
\(d^{2q}\) KRR scale in Proposition~\ref{prop:agop-hermite-bottleneck}.
\end{remark}

\newpage

\section{Additional Numerical Explorations} \label{app:numerical-extra}

\subsection{Neural LoFi Kernel Limit} \label{app:kernel-limit}

\begin{figure}[H]
    \centering
    \hfill
    \includegraphics[width=0.32\linewidth]{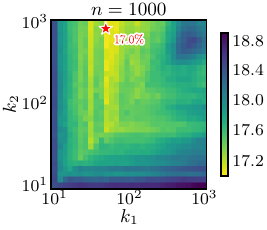}
    \hfill
    \includegraphics[width=0.32\linewidth]{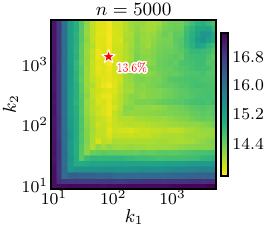}
    \hfill
    \includegraphics[width=0.32\linewidth]{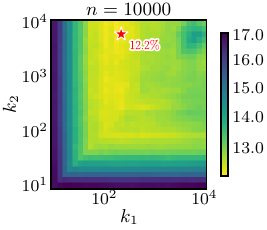}
    \caption{
        Test Error (\%) as a function of the kept features for Kernel LoFi for different training dataset sizes $n$.
        Stars indicate the best $(k_1,k_2)$ in each grid. The optimal retained dimension is finite and different from the usual kernel case (upper right corner), mirroring the behavior of the finite-width Neural LoFi heatmaps in~\Cref{fig:heatmaps}
    }
    \label{fig:kernel-heatmaps}
\end{figure}

The kernel formulation of Neural LoFi developed in
Appendix~\ref{app:kernel-nlofi} replaces the random projections $\bm R_\ell$ of
Algorithm~\ref{alg:neural-lofi} by their kernel expectations and computes the
generalized eigenvalue problem of Proposition~\ref{prop:generalized-eigenvector}
on the $n{\times}n$ Gram matrix. The resulting estimator is deterministic in the projection
randomness and depends on $(\bm x_{1:n}, y_{1:n})$ only through the kernel
$K_{\ell-1}$ and the labels. 

Figure~\ref{fig:kernel-heatmaps} reports the test error of kernel LoFi on
binary CIFAR-10 over the per-layer retained ranks $(k_1,k_2)$, at two
training-set sizes. 
First, the error
landscape is U-shaped in the same way as its finite-width counterpart in
\Cref{fig:heatmaps}: keeping too few features discards predictive signal,
keeping too many includes eigendirections that are not separated from the
noise bulk at this sample size, and the optimum sits at intermediate
$(k_1,k_2)$. 
Second, the location of this optimum shifts toward larger
ranks as $n$ grows. 
Both effects are those predicted by the
relevance--complexity criterion of Theorem~\ref{thm:variational-rkhs} and
the emergence threshold of \cref{eq:lofi-emergence-threshold}: more
samples lower the noise floor $\tau^k_\ell(n)$, allowing additional
low-degree directions to cross it.

Read together with the convergence experiment of
Figure~\ref{fig:nlofi-convergence}, in which the finite-width LoFi error
approaches the kernel-LoFi error from above as the projection width $p$
grows, this establishes that the U-shape and the sample-dependent rank
optimum are properties of the supervised spectral mechanism itself rather
than artifacts of the random projections used in practice. 
This is also
what distinguishes kernel LoFi from standard fixed-kernel methods such as
the NNGP or NTK: the operator diagonalized at each layer is built from the
labels, so the geometry in which the next layer searches for signal changes
with the task. 
Kernel LoFi is, in this sense, the simplest deterministic
object that captures the layerwise task-adaptivity of feature learning
described in the main text.

The Neural LoFi kernel also gives a feature-space perspective on why overparameterization and pruning are not contradictory. Classical pruning methods show that many weights or connections can be removed after training with little loss in performance
\cite{lecun1989optimal,han2015learning}, while the lottery-ticket hypothesis suggests that large dense networks may contain smaller trainable subnetworks
\cite{frankle2019lottery}. In Neural LoFi, width and rank play complementary roles: large width creates a rich feature space in which task-correlated directions can be discovered, while spectral pruning retains only the directions that finite data can reliably support. Thus large networks are useful for discovery, but low-rank feature selection controls the effective dimension used for prediction.

\subsection{\texorpdfstring{{Measuring the feature alignment of LoFi with GD}}{Measuring the feature alignment of LoFi with GD}} \label{app:gd-feature-alignment}

The comparison in Fig.~\ref{fig:gradient-alignment} is made at the level of
subspaces, not individual hidden coordinates. This is the natural invariant:
hidden coordinates in both GD and Neural LoFi can be rotated, permuted, rescaled,
or sign-flipped without changing the represented feature span.

Let \(\mathcal D_{\mathrm{test}}=\{\bx_\mu\}_{\mu=1}^m\) be the held-out set on
which the comparison is evaluated. For Neural LoFi, we use the filtered
coordinates obtained immediately after the spectral step and before the random
feature lift. Thus, for layer \(\ell\),
\[
    \left[Z_{\ell}^{\mathrm{LoFi}}\right]_{\mu,:}
    =
    g_{\ell}^{\mathrm{LoFi}}(\bx_\mu)
    =
    \bz_{\ell-1}^{\mathrm{LoFi}}(\bx_\mu)
    V_{\ell,k_\ell}^{\mathrm{LoFi}},
    \qquad
    Z_{\ell}^{\mathrm{LoFi}}\in\mathbb R^{m\times k_\ell},
\]
where \(V_{\ell,k_\ell}^{\mathrm{LoFi}}\) contains the \(k_\ell\) retained
LoFi spectral directions. The lifted representation
\[
    \bz_\ell^{\mathrm{LoFi}}
    =
    p_\ell^{-1/2}\sigma(R_\ell g_\ell^{\mathrm{LoFi}})
\]
is not used in the overlap.

For the GD network at step \(t\), write the layer-\(\ell\) weight matrix as
\[
    W_{\ell}^{\mathrm{GD}}(t)
    =
    U_{\ell}(t)\Sigma_{\ell}(t)V_{\ell}(t)^\top ,
\]
and let \(V_{\ell,r}^{\mathrm{GD}}(t)\) be the matrix of the top \(r\) right
singular vectors. We run the GD network normally up to layer \(\ell-1\), and
then project the current representation onto these singular directions:
\[
    \left[Z_{\ell,r}^{\mathrm{GD\text{-}SVD}}(t)\right]_{\mu,:}
    =
    \bz_{\ell-1}^{\mathrm{GD}}(\bx_\mu,t)
    V_{\ell,r}^{\mathrm{GD}}(t),
    \qquad
    Z_{\ell,r}^{\mathrm{GD\text{-}SVD}}(t)\in\mathbb R^{m\times r}.
\]
We apply neither the layer-\(\ell\) nonlinearity nor any later GD layer or
classifier. Hence both sides of the comparison are spectral coordinates exposed
by the target layer before the next nonlinear expansion.

Let \(H=I_m-\frac1m\mathbf 1\mathbf 1^\top\) be the centering matrix, and let
\(\operatorname{orth}(A)\) denote an orthonormal basis for the column span of
\(A\), obtained from the nonzero left singular vectors. Define
\[
    Q_{\ell}^{\mathrm{LoFi}}
    =
    \operatorname{orth}\!\left(HZ_{\ell}^{\mathrm{LoFi}}\right),
    \qquad
    Q_{\ell,r}^{\mathrm{GD}}(t)
    =
    \operatorname{orth}\!\left(HZ_{\ell,r}^{\mathrm{GD\text{-}SVD}}(t)\right).
\]
The directed recovery score plotted in Fig.~\ref{fig:gradient-alignment} is
\begin{equation}
    \mathrm{Recovery}_{\ell,r}(t)
    =
    \frac{
    \left\|
    \left(Q_{\ell}^{\mathrm{LoFi}}\right)^\top
    Q_{\ell,r}^{\mathrm{GD}}(t)
    \right\|_F^2
    }{
    \operatorname{rank}\!\left(HZ_{\ell,r}^{\mathrm{GD\text{-}SVD}}(t)\right)
    }.
    \label{eq:gd-lofi-subspace-recovery}
\end{equation}
Equivalently, this is the average squared length of an orthonormal GD-SVD
feature direction after projection onto the LoFi filtered-feature span. The
score lies in \([0,1]\), equals \(1\) exactly when the centered GD-SVD span is
contained in the centered LoFi span, and is invariant to any invertible
reparameterization of the columns within either feature matrix. The companion
``precision'' score, useful when the LoFi span is substantially wider, uses the
same numerator but divides by
\(\operatorname{rank}(HZ_{\ell}^{\mathrm{LoFi}})\).

In Fig.~\ref{fig:gradient-alignment}, we set \(r=8\) and report this score for
each hidden layer of a four-layer fully connected network on the binary
CIFAR-10 animal-vs.-vehicle task. The plotted curve is the normalized gain
\[
    \frac{
        \mathrm{Recovery}_{\ell,r}(t)-\mathrm{Recovery}_{\ell,r}(0)
    }{
        1-\mathrm{Recovery}_{\ell,r}(0)
    }.
\]
In the layerwise GD experiment, only one hidden layer is trained at a time; the
final readout-training phase is omitted from the displayed time interval.

\subsection{Generalization and Spectral Performance: GD vs. LoFi} 
\label{app:gd-spectral-performance-comp}
In this experiment we aim to train networks for image binary classification (Animals vs. Vehicles in CIFAR-10) using both Neural LoFi and Gradient Descent (GD), and compare their generalization performance across a range of training set sizes. 
We investigate the generalization properties of Neural LoFi relative to GD by evaluating their test performance across varying training set sizes $n \in [10^2, 5 \times 10^4]$. To ensure a controlled comparison, we fix the architectures for both FC and CNN models to have a comparable number of parameters and identical random projection dimensions.

For GD, we employ a compute-constrained training protocol where the total number of gradient steps is held constant across dataset sizes. We adopt an adaptive scaling law for the learning rate, $\eta \propto \sqrt{B/n}$, where $B$ is the batch size, and utilize a cosine annealing schedule with a linear warmup. This ensures stable convergence dynamics without the need for exhaustive per-configuration hyperparameter tuning. In contrast, LoFi remains a one-pass algorithm requiring only the selection of the eigenvector bottleneck $k_\ell$.

Figure~\ref{fig:gd-lofi-comparison} illustrates the test error as a function of the training set size. We observe that LoFi achieves generalization performance comparable to, and in low-sample regimes occasionally superior to, GD. By reporting GD performance at varying step intervals, we demonstrate that LoFi, despite being a one-pass spectral method, captures a feature set equivalent to that learned by GD during its early-to-mid training stages. 

While GD eventually outperforms LoFi as the training budget increases, this gap is expected; GD iteratively refines features across the full model capacity, whereas LoFi performs a sequential, layer-wise spectral projection. Crucially, these results are obtained without auxiliary regularization (e.g., dropout, early stopping) or optimized filtering levels for LoFi. Thus, the results in Figure~\ref{fig:gd-lofi-comparison} suggest that the spectral alignment of LoFi is not merely a theoretical property but a robust driver of generalization in deep architectures.


\begin{figure}[t]
    \centering
    \includegraphics[width=\linewidth]{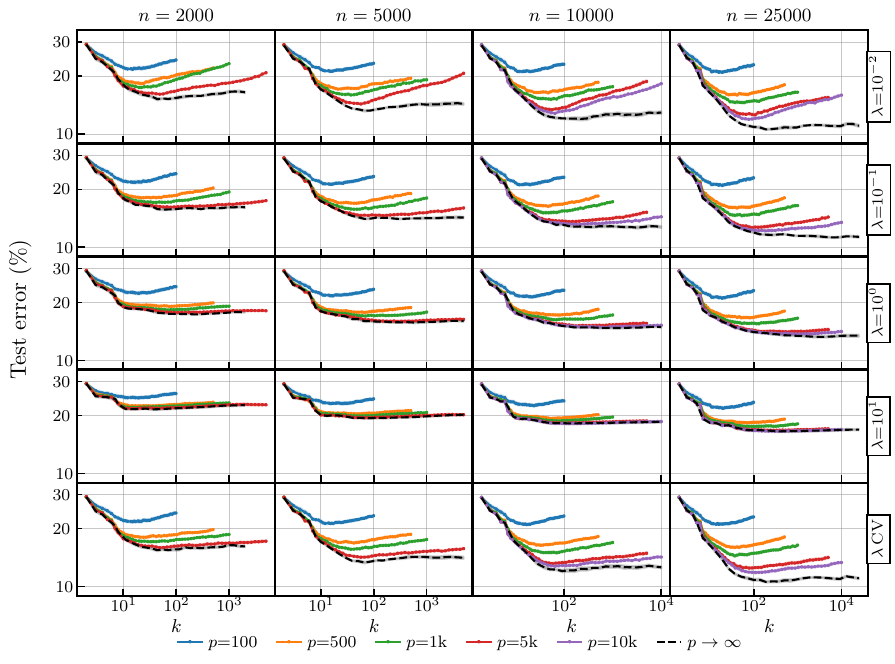}
    \caption{
        Test error (\%) as a function of the number of retained features $k$ at
        the hidden layer of a one-hidden-layer Neural LoFi with ReLU activation
        on CIFAR-10. Each panel fixes the ridge regularization $\lambda$ (rows,
        from $10^{-2}$ to $10^{1}$; the bottom row tunes $\lambda$ by
        cross-validation) and the training-set size $n$ (columns), and overlays
        curves for a range of random-projection widths $p$, including the
        infinite-width limit $p\!\to\!\infty$ (dashed). 
        Only the
        intermediate representation is filtered; the input and output layers are not reduced. 
    }
    \label{fig:filtering-one-hidden-layer}
\end{figure}

\begin{figure}
    \centering
    \includegraphics[width=\linewidth]{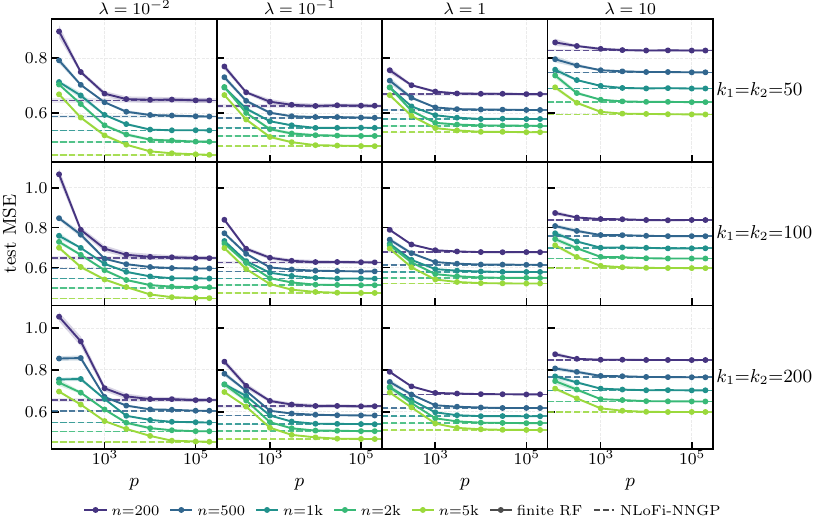}
    \caption{%
    Convergence of finite-width NLoFi to the NLoFi-NNGP limit as the hidden width $p$ grows.  Each panel shows the test MSE versus $p$ for four training-set sizes $n$. 
    Solid curves are finite-RF NLoFi; dashed horizontal lines mark the corresponding NLoFi-NNGP limits at the same $n$.  Rows vary the number of retained eigenfeatures per layer ($k_1 = k_2 \in \{50, 100, 200\}$); columns vary the ridge regularisation $\lambda$.
    Across all settings the finite-width curves converge from above to the kernel limit.%
    }  
    \label{fig:nlofi-convergence}
\end{figure}

\subsection{Filtering of Features}

\Cref{fig:filtering-one-hidden-layer} reports the test error as a function of
the number of retained features $k$ for a single hidden layer, where only the
intermediate representation is subject to Neural LoFi's feature selection while
the input and output layers are left untouched. This is the one-dimensional
slice of the two-dimensional heatmaps of \Cref{fig:heatmaps}, isolating the
role of $k$.

Within any single panel every curve is U-shaped: retaining too few features
underfits the target (high bias), while retaining too many carries over the
noise of the unfiltered random features (high variance). The minimum defines an
optimal rank $k^\star$, so Neural LoFi's filtering realizes a
bias-variance trade-off, here controlled by an explicit number of retained
directions rather than by the ridge penalty $\lambda$ alone. The trend is
robust across the fixed-$\lambda$ rows and the cross-validated one.

Reading along a row (increasing the number of random projections $p$) lowers the
error at the optimum and moves $k^\star$ to larger values: a wider random-feature
bank spans a richer function class, so more of its directions are worth keeping
and progressively less has to be cut away. The $p\!\to\!\infty$ limit (dashed) is the lower envelope of the finite-$p$ family. Thus $p$ should
be taken as large as possible, yet filtering remains beneficial at every $p$.

Reading down a column (increasing the sample size $n$) has the same qualitative
effect through a different mechanism: with more data the label-aware covariance
estimates better features, so again $k^\star$ increases and the error decreases
as there is less overfitting to prune.

Together these trends suggest a design point: build the widest feature bank
you can afford and then recover generalization by projecting onto the
task-relevant low-dimensional subspace. 
This is the same rationale behind
network pruning
\cite{frankle2019lottery,li2017pruning} and behind rainbow networks \cite{guth2024rainbow}, in which
each layer is a random feature map composed with a learned low-rank rotation that
aligns features to the task. 
Neural LoFi makes that alignment explicit and
label-aware, and \Cref{fig:filtering-one-hidden-layer} quantifies how much of the
width can be pruned as a function of $p$ and $n$.

\subsection{Predicting the sample complexity for feature recovery}
\label{app:feature-recovery}


The variational analysis of Section~\ref{sec:emergence-snr} predicts
that the $k$-th eigenvector of the layer-$\ell$ signed covariance
becomes recoverable once the population correlation $\rho_\ell^{(k)}$
exceeds the empirical noise floor $\tau_\ell^k(n)$, with the latter
controlled by the residual effective dimension of the kernel induced by
the previous layer.  This is an asymptotic statement, and the
quantitative prediction
Eq.~\eqref{eq:real-data-emergence-estimate} rests on assumptions that
CIFAR-10 features are not designed to satisfy.  The experiment in
Fig.~\ref{fig:sample-complexity-pred} (and its convolutional analog
Fig.~\ref{fig:sample-complexity-pred-conv}) is intended as a stress test
of the prediction in that regime.

We use the full binary CIFAR-10 training set ($N=60{,}000$, animals
vs.\ vehicles) as a population proxy: running Neural LoFi on this set
yields stable reference eigenvectors $\{\hat{\bm v}_i^{(N)}\}_{i\ge 1}$
at the layer of interest.  We then refit the same pipeline on random
subsets of size $n\le N$, holding the random projections $\bm W_1,\bm
W_2$ fixed across subsets, and record the index-aligned overlap
$|\langle \hat{\bm v}_i^{(n)},\hat{\bm v}_i^{(N)}\rangle|^2$, averaged
over $100$ independent dataset permutations.  In the convolutional
figure we plot the eight overlaps after sorting them descending within
each draw, which is robust to the $|\lambda|$-ordering swaps that arise
when two leading eigenvalues of opposite sign have similar magnitude.  The predicted thresholds $n_\ell^k$
are obtained from the proxy at $n=N$ by the recursive procedure of
App.~\ref{app:effective-dimension} and use no information from the
smaller-$n$ runs.

The predicted thresholds (dashed verticals) track the order and
approximate scale of the empirical transitions across the eight leading
eigenvectors, which themselves are sharp on a logarithmic scale and
saturate in the order predicted by the recursive effective-dimension
recipe — consistent with the BBP/EJ picture of
Section~\ref{sec:emergence-snr}.  The bound is qualitative and not
tight, so the prediction should be read as the correct scaling of
$n^k_\ell$ with the residual effective dimension and the spectral gap
rather than as an absolute threshold; even at this level, however, a
single eigendecomposition on a reference set is enough to indicate the
order of magnitude at which each eigendirection becomes extractable.

\begin{figure}[t]
    \centering
    \includegraphics[width=\linewidth]{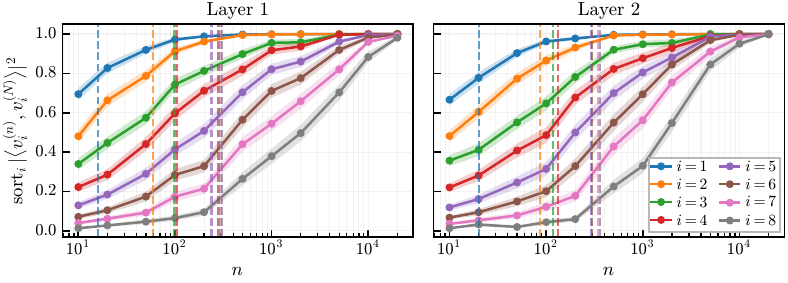}
    \caption{
    \textbf{Predicting when individual features emerge on CIFAR-10 with convolutional networks.} Convolutional analog of Fig.~\ref{fig:sample-complexity-pred} in the main text.
    For a CNN Neural LoFi model on the CIFAR-10 animal-vs.-vehicle task (random feature matrices $\bm W_1, \bm W_2$ held fixed across subsamples), we track the squared overlap $|\langle v_i^{(n)},v_i^{(N)}\rangle|^2$ between eigenvectors estimated from $n$ samples and the large-sample reference eigenvectors at layer $1$ (Left) and $2$ (Right).  At each sample size we order the eight tracked overlaps from largest to smallest within each draw, so $i=1$ is the best-aligned axis at that $n$ and so on; curves show mean $\pm$ SEM over $100$ random subsamples.  Dashed verticals are the predicted emergence thresholds $n_\ell^k$ from \eqref{eq:lofi-emergence-threshold},\eqref{eq:emergence_feature},
    sorted ascending and color-matched to the curves (see
    Eq.~\eqref{eq:real-data-emergence-estimate} in the Appendix).   The sharp rise of each curve near its predicted threshold shows that the effective-dimension criterion predicts the order and approximate scale at which successive task-relevant directions become learnable.
    }
    \label{fig:sample-complexity-pred-conv}
\end{figure}

\subsection{Spectrum of the signed covariance}
The core of LoFi is the spectral decomposition of the signed covariance operator, thus it is crucial to understand the structure of its spectrum. 

We analyze the eigenvalue spectra learned by ridge spectral networks on a binary CIFAR-10 classification task (animal vs. vehicle). The architecture comprises two convolutional layers with $p=512$ channels, each followed by $2\times 2$ max pooling and $L_2$ normalization of the features—concluding with a fully-connected layer of $p=512$ units. All layers are trained using signed-covariance eigendecomposition with eigenvalue-based feature selection (ridge spectral training). We evaluate the model across a range of training set sizes ($n \in \{200, 500, 2000, 50000\}$) while maintaining a fixed network architecture. The grid plots in \Cref{fig:eigenvalues-histogram} illustrate the full eigenvalue spectrum for each layer, utilizing a symlog $x$-axis to resolve both fine-grained and large-magnitude spectral components. The top five eigenvalues per layer are highlighted with red triangles, revealing how the spectral structure evolves with data scale and illustrating the relative importance of dominant versus distributed features across the network depth.
\begin{figure}
    \centering
    \includegraphics[width=0.94\linewidth]{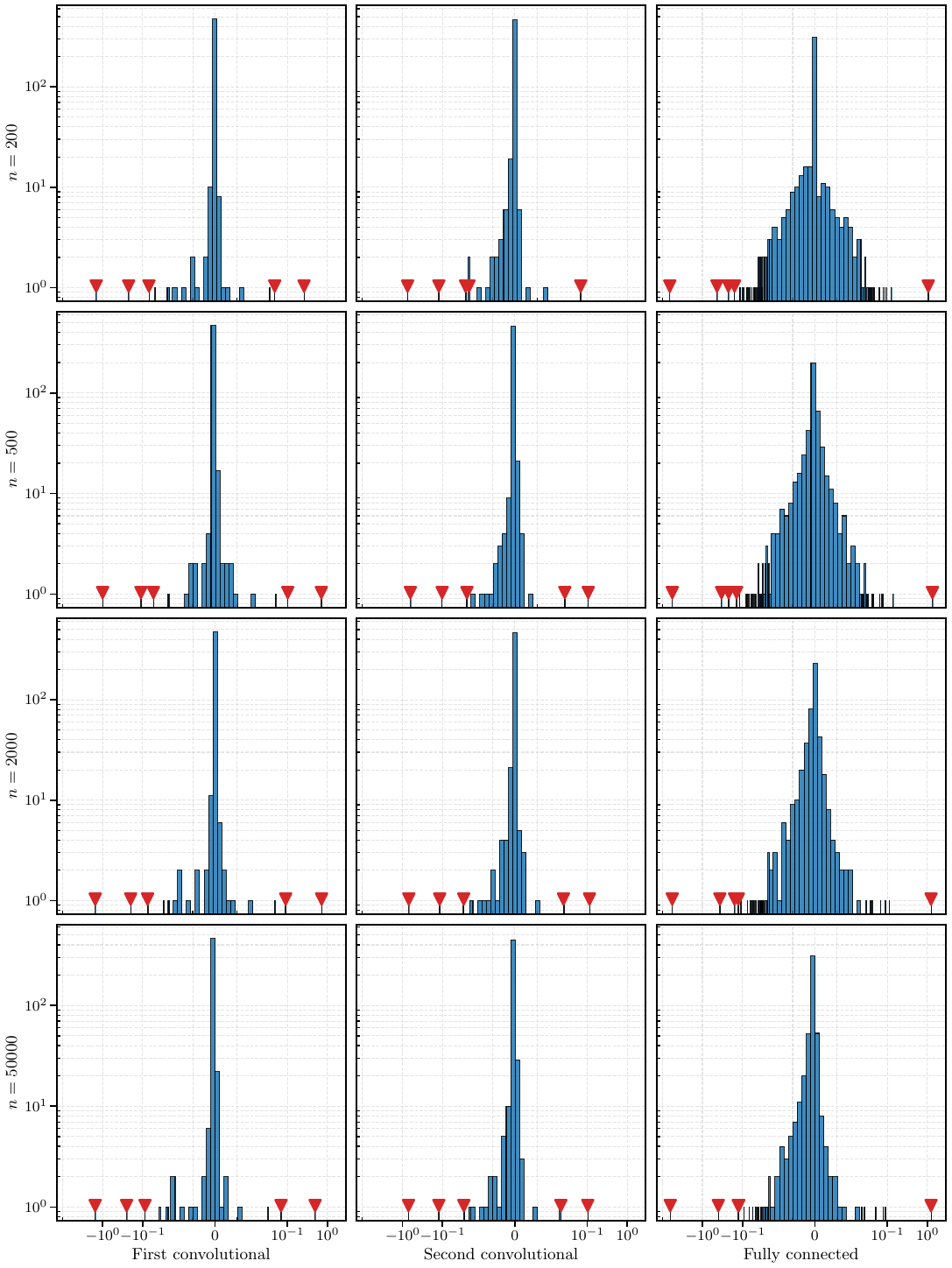}
    \caption{\textbf{Spectral Distribution:} Histograms of eigenvalues across network layers (columns) and dataset sizes (rows). Red markers indicate the five most dominant eigenvalues. The symlog scale reveals the emergence of spectral structure and the separation of lead features from the bulk distribution as $n$ grows. The random feature dimension in this experiment is $p=512$ for all layers.}
    \label{fig:eigenvalues-histogram}
\end{figure}

\subsection{Feature Visualization}

\begin{figure}
    \centering
    \includegraphics[width=\linewidth]{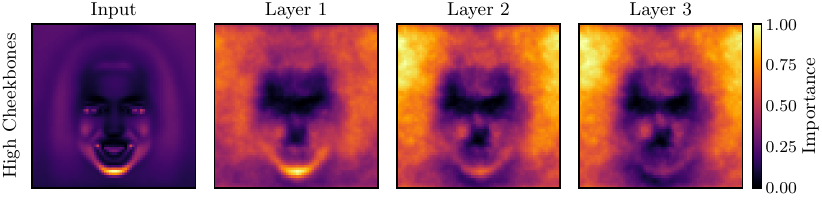}\\
    \includegraphics[width=\linewidth]{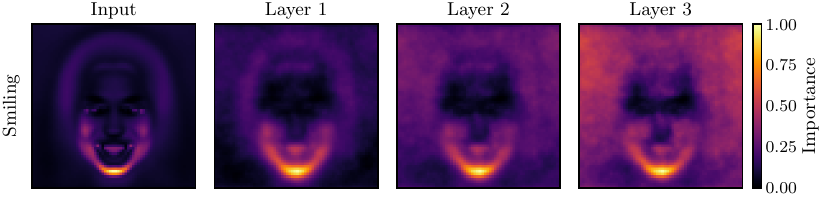}
    \caption{
    Layer-wise feature importance $\bar{I}^{(\ell)}$ (\cref{eq:importance-layer}) on CelebA \cite{liu2015faceattributes} for two binary 
    attributes. \textit{(Top, High Cheekbones)} Importance concentrates 
    progressively on the cheekbone region across layers, while the chin area, 
    salient at the input, is progressively suppressed in deeper layers. 
    \textit{(Bottom, Smiling)} Importance remains focused on the mouth and jaw 
    region throughout all layers, reflecting that the discriminative signal for 
    this attribute is preserved and not discarded by Neural LoFi's feature 
    selection. This contrast illustrates that the retained features are 
    task-dependent and geometrically meaningful.
    }
    \label{fig:celeba-feature-importance}
\end{figure}

\paragraph{Input importance} To probe which input pixels drive the activation along $v_k^{(\ell)}$, we
back-propagate through the random features and accumulate the absolute
Jacobian--vector product over the training set,
\begin{equation}
  I_k^{(\ell)}(d) \;=\; \frac{1}{N}\sum_{i=1}^{N}
  \left| \frac{\partial}{\partial x_{i,d}} \!\left( v_k^{(\ell)\!\top} \phi_\ell(x_i) \right) \right|
  \;=\; \frac{1}{N}\sum_{i=1}^{N} \left| J_\ell(x_i)^{\!\top} v_k^{(\ell)} \right|_d,
  \label{eq:importance_per_k}
\end{equation}
where $J_\ell(x) = \partial \phi_\ell(x)/\partial x \in \mathbb{R}^{P\times D}$.
The raw map $I_k^{(\ell)}$ is convolved with an isotropic Fourier low-pass
filter $\smash{m(f) = (1 + \|f\|/f_0)^{-\alpha}}$ to suppress
high-frequency noise inherited from the random projections. For the input layer ($\ell = 0$) the Jacobian collapses to the identity and~\eqref{eq:importance_per_k} reduces to
$I_k^{(0)}(d) = \bigl(v_k^{(0)}\bigr)_d^{2}$.

Different eigenvectors carry very different amounts of label signal. Rather than
treating them uniformly, we aggregate the $K_\ell$ retained per-$k$ importance
maps with weights given by the absolute signed-covariance eigenvalues,
\begin{equation}\label{eq:importance-layer}
  \bar{I}^{(\ell)}(d) \;=\;\frac{\sum_{k=0}^{K_\ell-1} |\lambda_k^{(\ell)}|\, I_k^{(\ell)}(d)}{\sum_{k=0}^{K_\ell-1} |\lambda_k^{(\ell)}|}.
\end{equation}
This emphasises directions that most strongly co-vary with the target while
preserving the spatial information carried by the lower-ranked, but still
label-aligned, eigenvectors.

In~\Cref{fig:celeba-feature-importance} each panel shows $\bar{I}^{(\ell)}$ reshaped to the $64\!\times\!64$ image grid and averaged over the three colour channels. We min--max rescale each panel independently to $[0,1]$. 
The shared colorbar reports this per-panel normalised intensity. The vertical label on the leftmost panel names the binary CelebA attribute used as $y$.

\paragraph{Filter and activations for CNNs} In Section~\ref{sec:real-data} we have already discussed the filters and activations learned by LoFi for CNNs. Here we provide additional visualizations of these filters and activations, for different layers and training set sizes.
\begin{figure}
    \centering
    \begin{minipage}{0.3\linewidth}
    \includegraphics[width=0.8\linewidth]{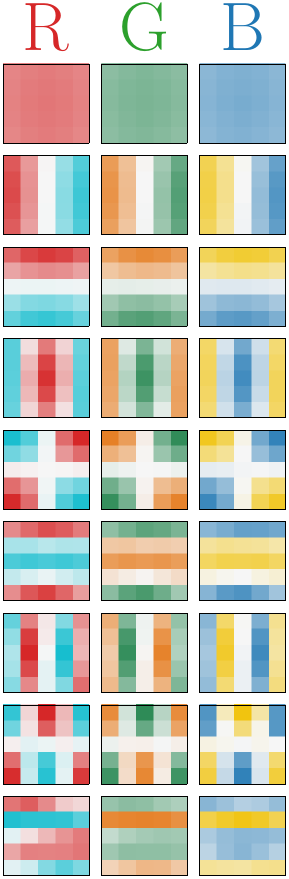}
    \end{minipage}
    \hfill
    \begin{minipage}{0.6\linewidth}
        \includegraphics[width=0.8\linewidth]{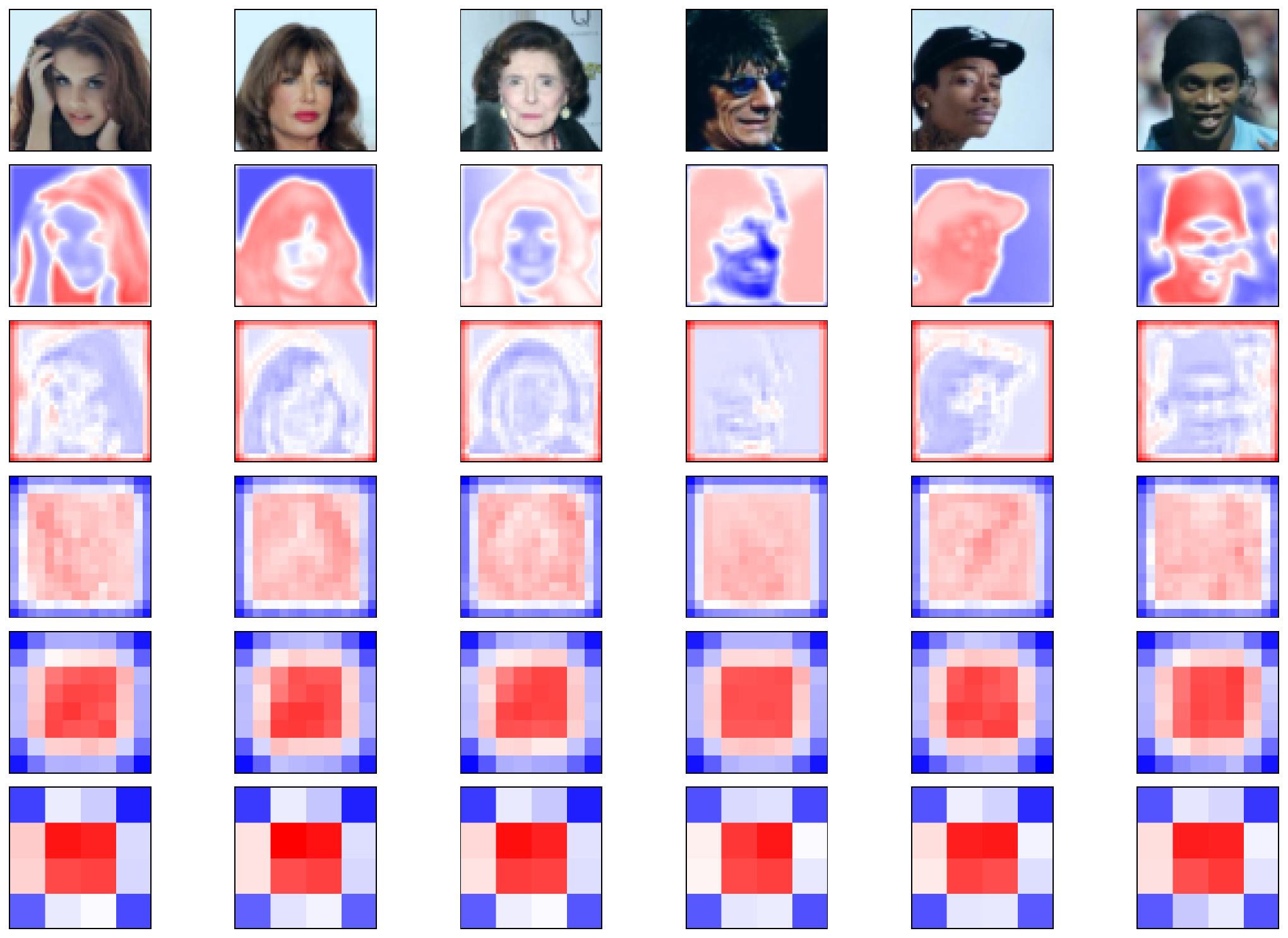}
        \begin{center}activation of top eigenvector \end{center}
        \vspace{.3cm}

        \includegraphics[width=0.85\linewidth]{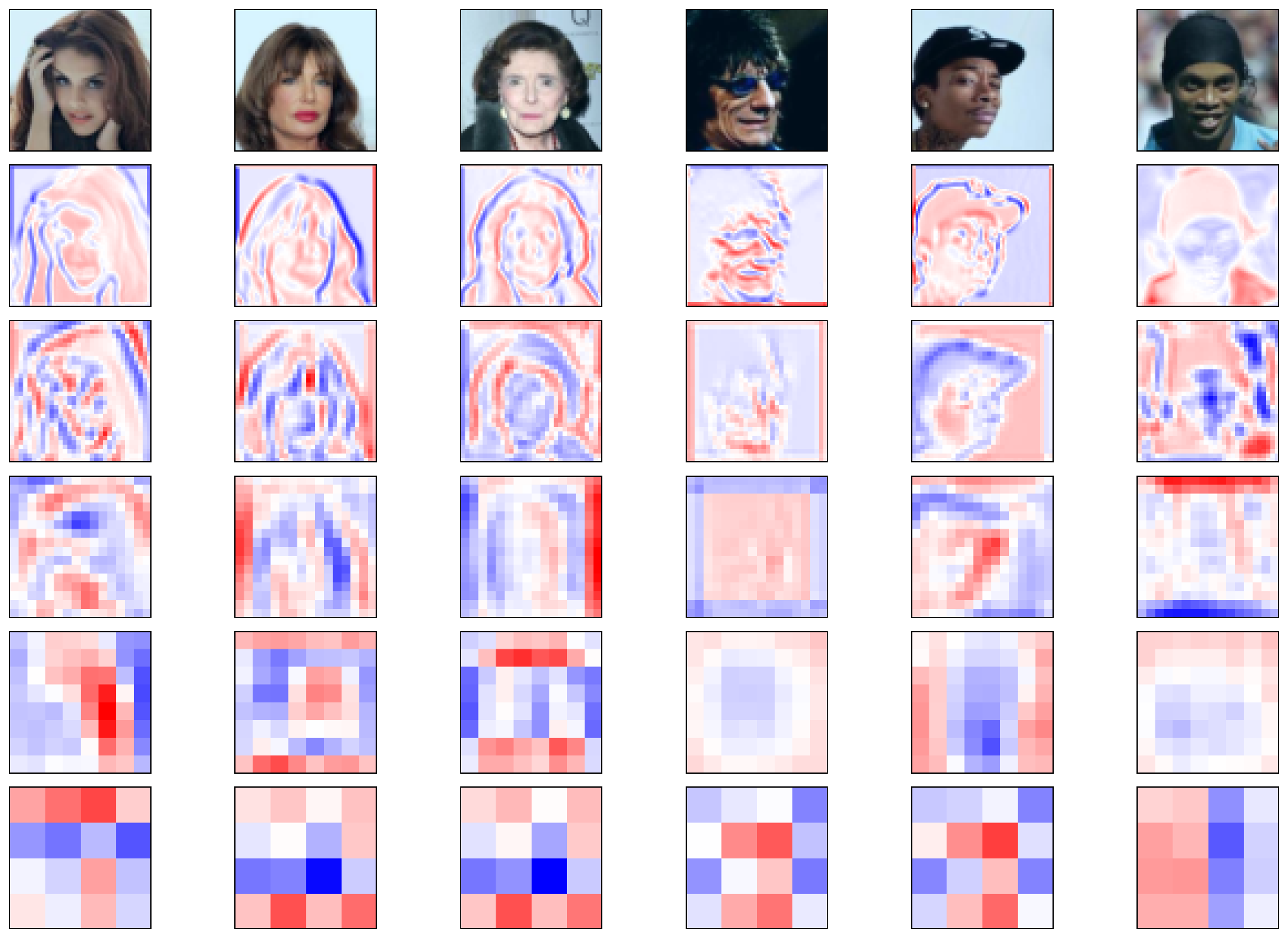}
        \begin{center}activation of 7th eigenvector \end{center}
    \end{minipage}
    \caption{\textbf{Filters and activations for CNNs.} We train a 6 convolutional + 1 fully connected layer neural network on CelebA \cite{liu2015faceattributes} for binary classification of the "Gender" attribute, using Neural LoFi with signed-covariance eigendecomposition and eigenvalue-based feature selection. \textit{(Left)} The $5\times5$ first-layer filters learned by Neural LoFi, visualized as RGB images. \textit{(Right)} The activations of the top-ranked (1st) and mid-ranked (7th) eigenvectors at each convolutional layer (lower rows are deeper), for 6 images of the test set.}
    \label{fig:additional-cnn-feature-visualizations}
\end{figure}
In Figure~\ref{fig:additional-cnn-feature-visualizations} we show the first-layer filters learned by LoFi, as well as the activations of the top-ranked and mid-ranked eigenvectors at each convolutional layer. Despite being larger than the filters learned on CIFAR-10 (Figure~\ref{fig:cnn-filter-activations}), the filters learned on CelebA are still interpretable as edge detectors, with a variety of orientations and frequencies. It is interesting to compare the features learned by different eigenvectors: the top eigenvector tends to be a brightness filter and converges to the same representation when going deeper. The mid-ranked eigenvector, instead, learns more complex features even at the first layer, obtaining a self-evident representation of the sample after the last convolutional layer.

\newpage 
\section{Code Implementation}\label{app:code-implem}

All experiments are carried out with a self-contained Python package built on PyTorch \cite{pytorch}, Numpy \cite{numpy}, Scipy \cite{scipy} and scikit-learn \cite{scikit-learn}. The source code is available in the attached zip file.

For each layer $\ell = 1, \ldots, L$, the trainer
(i) estimates the RMS norm
$c_\ell = \bigl(\tfrac{1}{n}\sum_i \|\mathbf{h}_{\ell-1}^{(i)}\|^2\bigr)^{1/2}$
to keep pre-activations $O(1)$;
(ii) draws a frozen random map
$\mathbf{W}_\ell \in \mathbb{R}^{d_{\ell-1}\times p_\ell}$,
$W_{\ell,ij}\sim\mathcal{N}(0,1)$,
and forms
$\mathbf{Z}_\ell = p_\ell^{-1/2}\,
\sigma_\ell\!\bigl(\mathbf{H}_{\ell-1}\mathbf{W}_\ell / c_\ell\bigr)$;
(iii) computes the top-$k_\ell$ eigenvectors $\mathbf{V}_\ell$ of the signed
covariance
$\mathbf{C}_\ell = \tfrac{1}{n}\mathbf{Z}_\ell^\top
\operatorname{diag}(\mathbf{y})\,\mathbf{Z}_\ell$
ranked by $|\lambda|$; and
(iv) sets $\mathbf{H}_\ell = \mathbf{Z}_\ell\mathbf{V}_\ell$.

The eigendecomposition of $\mathbf{C}_\ell$ is the dominant cost, and three
paths are dispatched automatically based on $(n,p_\ell,k_\ell)$: for
$p_\ell\le 25{,}000$ on GPU, $\mathbf{C}_\ell$ is formed explicitly and
diagonalised with \texttt{torch.linalg.eigh}; for moderate $p_\ell$ on CPU,
the full LAPACK eigensolver is used when $k_\ell\ge p_\ell/4$ and Lanczos \\
(\texttt{scipy.sparse.linalg.eigsh}) otherwise; for $p_\ell > 50{,}000$ and
$k_\ell\le p_\ell/2$, a \texttt{LinearOperator} is passed to \texttt{eigsh}
so that $\mathbf{C}_\ell$ is never materialised, requiring only $O(np_\ell)$
memory and two passes per Lanczos iteration. A deterministic starting vector
$\mathbf{v}_0=\mathbf{1}/\!\sqrt{p_\ell}$ ensures reproducibility, and when
$\mathbf{Z}_\ell$ does not fit in GPU memory, features are streamed in batches
and $\mathbf{C}_\ell$ is accumulated via outer products before calling
\texttt{eigsh}.

The final representation $\mathbf{H}_L$ is passed to
\texttt{sklearn.linear\_model.RidgeCV}; classification accuracy is obtained by thresholding at zero (targets $\pm 1$). 

\subsection{Figure Reproducibility}\label{app:hyper-params}

We list below the hyperparameters used for each figure (panel by panel).
Unless stated otherwise, random-feature weights are Gaussian, biases are zero,
the final readout is ridge regression, and curves are averaged over multiple
seeds.

\begin{description}[leftmargin=1.5em]

  \item[\Cref{fig:heatmaps}, left.]
    Binary CIFAR-10 (animals vs.\ vehicles), $L=3$ ReLU LoFi layers.
    Test error vs.\ $n$ for ridge, 3-layer random features, and LoFi with
    $p\in\{5000, 10000\}$; LoFi ranks $(k_1,k_2)$ optimally found after a logarithmic grid search in $[2, p-1]^2$. Ridge $\lambda$ optimally tuned with \texttt{RidgeCV} from \cite{scikit-learn} in $[10^{-6}, 10^{6}]$ with $500$ points; 10 seeds.

  \item[\Cref{fig:heatmaps}, right.]
    Same setup. Test error grid over $(k_1,k_2)$ at fixed $p=5000$, $k_3=p$, for $n\in\{5000,10000,20000,50000\}$. Stars mark the best $(k_1,k_2)$ in each grid. 10 seeds.

  \item[\Cref{fig:gradient-alignment}.]
    Four-layer FC ReLU net on binary CIFAR-10; width $p=1000$, LoFi ranks
    $k=20$, $n=50000$.  The plotted score is
    Eq.~\eqref{eq:gd-lofi-subspace-recovery} using the filtered LoFi features
    and the top $r=8$ GD-SVD features at each layer, converted to the normalized
    gain \((O_t-O_{\mathrm{init}})/(1-O_{\mathrm{init}})\).  GD is trained
    layerwise for 200 steps per hidden layer, and the final readout-training
    phase is omitted from the displayed 800-step window.

  \item[\Cref{fig:cnn-filter-activations}, left.]
    Binary CIFAR-10 animal-vehicle. Convolutional LoFi: 4 conv layers ($3{\times}3$,
    $2{\times}2$ pool, $L_2$ norm) + 1 FC layer; channels $p=4096$ , ranks
    32,64,128,256, ReLU; $n=50000$. Top first-layer filters from the three RGB
    channels.

  \item[\Cref{fig:cnn-filter-activations}, right.]
    Same architecture. Activation maps of the 6th LoFi feature on test images
    at successive depths.

\item[\Cref{fig:rf_theoretical}, left, center.]
    Hierarchical teacher of App.~\ref{app:th_rf} with $k=2$, $\epsilon=\tfrac12$ and $g^\star=\tanh$; $d\in\{80,100,120,140\}$. RF widths $(p_1,p_2)\in\{(20000,512),(30000,768),(40000,1024),(50000,1280)\}$, spherical weights, and activations as in App.~\ref{app:rf-estimator}. The top eigenvectors are computed with a power iteration using at most $15$ iterations and oversampling parameter $10$. Readout: polynomial kernel of maximal degree $5$, ridge regularization $10^{-6}$, kernel regularization $10^{-4}$. Left: test MSE vs.\ $\alpha=\log(n)/\log(d)$. Center: overlap between $\widehat h^{(1)}$ and $h^{(1)}$ vs.\ $\alpha$. $10$ seeds.

\item[\Cref{fig:rf_theoretical}, right.]
    Same hierarchical teacher with $q=2$, $d=100$, $\epsilon=\tfrac12$, $\alpha=3$, and $g^\star=\tanh$, using a first-layer random-feature width $p_1=10000$ and the activations of App.~\ref{app:rf-estimator}. The displayed spectrum is computed with a randomized eigensolver.
    One seed.
    
\item[\Cref{fig:rf_spectrum}]
  Same hierarchical teacher as in \Cref{fig:rf_theoretical}, right, with $q=2$, $d=100$, $\epsilon=\tfrac12$, and $g^\star=\tanh$, using a first-layer random-feature width $p_1=10000$ and the activations of App.~\ref{app:rf-estimator}. The figure shows the spectrum of the first-layer spectral operator for several values of $\alpha \in \{1.5, 2.0, 2.5, 3.0, 3.5\}$, each computed with a randomized eigensolver. One seed per value of $\alpha$.

  \item[\Cref{fig:kernel-heatmaps}]
    Kernel LoFi (App.~\ref{app:kernel-nlofi}) on binary CIFAR-10 (animals vs.\ vehicles), $L=3$ layers, ReLU activation. Test error grid over $(k_1,k_2)\in[10,n-1]^2$ for $n\in\{1000,5000\}$; stars mark the best $(k_1,k_2)$. 10 seeds. The final optimization for the ridge parameter of Kernel Ridge Regression is done for $\lambda \in [10^{-5}, 1]$ with 20 log-spaced points.

  \item[\Cref{fig:gd-lofi-comparison}.]
    Binary CIFAR-10 animal vehicle, $n\in[10^2,5\times 10^4]$. Matched architectures with
    $p=4096$ , The LoFi ranks are kept fixed between architectures $32,64,128,256$. SGD: batch $B=512$,
    $\eta\propto\sqrt{B/n}$ (peak 0.01), cosine + warmup, total steps fixed
    across $n$. 10 seeds. 

  \item[\Cref{fig:filtering-one-hidden-layer}.]
    %
    Binary CIFAR-10, single LoFi hidden layer (ReLU), only the hidden representation reduced to rank $k$; input and output layers unreduced. Specifically the network is
    \begin{align}
    \phi(x)   &= \sigma\!\Big(\tfrac{1}{\sqrt{d}}\,W^{(1)} x\Big) \in \mathbb{R}^{p},
                 && W^{(1)}\in\mathbb{R}^{p\times d}, \label{eq:lofi-h1}\\
    g(x)      &= \sigma\!\Big(\tfrac{1}{\sqrt{k}}\,W^{(2)} V\,\phi(x)\Big) \in \mathbb{R}^{p},
                 && W^{(2)}\in\mathbb{R}^{p\times k},\ \ V\in\mathbb{R}^{k\times p}, \label{eq:lofi-h2}\\
    f(x)      &= w^{\top} g(x) + b,
                 && w\in\mathbb{R}^{p},\ b\in\mathbb{R}, \label{eq:lofi-readout}
\end{align}
where $\sigma$ is the ReLU. The two projection matrices $W^{(1)},W^{(2)}$ are
drawn i.i.d.\ Gaussian and \emph{frozen}; they are never trained. 
    Panels are a grid: rows fix the ridge penalty $\lambda\in\{10^{-2},10^{-1},10^{0},10^{1}\}$ (bottom row: $\lambda$ tuned per point with \texttt{RidgeCV} from \cite{scikit-learn} in $[10^{-6},10^{6}]$ over $100$ points), columns fix the training-set size $n\in\{2000,5000,10000,25000\}$. Each panel overlays $p\in\{100,500,1000,5000,10000\}$ and the infinite-width limit $p\to\infty$ (two-layer ReLU NNGP kernel, dashed; same per-row $\lambda$, double-centered readout). Retained rank $k$ spans $2,\dots,p{-}1$ (finite, $\sim 50$ points) and $2,\dots,n{-}1$ (kernel, $30$ points). Shading: $\pm 1$ SEM over seeds ($10$ finite, $5$ kernel).

  \item[\Cref{fig:nlofi-convergence}.]
    Binary CIFAR-10 (animals vs.\ vehicles), 3-layer ReLU LoFi with last layer before Ridge not reduced. Test MSE vs.\ width $p$ for $n\in\{200,500,1000,2000,5000\}$. All shaded areas are average over 10 seeds.

  \item[\Cref{fig:sample-complexity-pred}]
    Binary CIFAR-10 (animals vs.\ vehicles); a 2-layer ReLU LoFi with a
    population proxy obtained from the full dataset ($N=60{,}000$).
    For each sample size $n$, the LoFi pipeline is fit on a fresh
    permutation of $n$ training points (with $\bm W_1,\bm W_2$ held fixed
    across permutations); we plot the index-aligned overlap
    $|\langle\hat{\bm v}_i^{(n)},\hat{\bm v}_i^{(N)}\rangle|^2$ for the top
    $i=1,\dots,6$ eigenvectors of the layer-$\ell$ signed covariance. Mean
    $\pm$ SEM over $100$ dataset permutations. Dashed verticals: predicted
    thresholds $n_\ell^k$ from~\cref{eq:real-data-emergence-estimate}.

  \item[{\Cref{fig:lofi-layer-higher-order-decay}.}]
   {Binary CIFAR-10 (animals vs.\ vehicles); a 3-layer fully connected ReLU
    LoFi model.  For the input representation and the three successive LoFi
    layer representations, we use the layer features directly, sample random
    directions, build an empirical orthonormal polynomial basis in each
    one-dimensional projection, and plot the median squared label projection
    of the degree-$k$ component after subtracting the fitted $1/n$ finite-sample
    noise floor.}

  \item[\Cref{fig:sample-complexity-pred-conv}]
      Binary CIFAR-10 (animals vs.\ vehicles); a 2-layer convolutional ReLU LoFi (random convolutions with $P=5{,}000$ channels, $5{\times}5$ kernels, padding $2$, stride $1$; layer-1 eigenreduction $K_1=100$) with a population proxy obtained from $N=30{,}000$ samples. For each sample size $n$, the LoFi pipeline is fit on a fresh permutation of $n$ training points (with $\bm W_1,\bm W_2$ held fixed across permutations); we plot the per-layer diagonal overlaps $|\langle\hat{\bm v}_i^{(n)},\hat{\bm v}_i^{(N)}\rangle|^2$ for $i=1,\dots,8$, sorted descending within each draw so that $i=1$ is best-aligned axis at that $n$. Mean $\pm$ SEM over $100$ dataset permutations. Dashed verticals: predicted thresholds $n_\ell^k$ from~\cref{eq:real-data-emergence-estimate}.

  \item[\Cref{fig:eigenvalues-histogram}.]
    Binary CIFAR-10. Two conv layers (kernel $3\times3$, $p=512$, $2{\times}2$
    pool, $L_2$ norm) + FC layer ($p=512$); $n\in\{200,500,2000,50000\}$,
    ranks $k_\ell=32,64,128$. Symlog $x$-axis; top-5 eigenvalues marked.

  \item[\Cref{fig:celeba-feature-importance}, top.]
    CelebA at $64{\times}64$, attribute \emph{High Cheekbones} from CelebA.
    $L=3$ FC LoFi layers (ReLU), $P=50000$, $(k_1, k_2, k_3) = (100,50,20)$,
    $n=200{,}000$, seed $0$. Importance via Eq.~\eqref{eq:importance-layer}; Fourier low-pass parameters $f_0=0.15$, $\alpha=3.0$.

  \item[\Cref{fig:celeba-feature-importance}, bottom.]
    Same setup, attribute \emph{Smiling} from CelebA.

  \item[\Cref{fig:additional-cnn-feature-visualizations}, left.]
    CelebA \emph{Gender}. Six conv LoFi layers
    ($5{\times}5$, $2{\times}2$ pool, $L_2$ norm) + FC layer; channels
    $p=512$, ranks 16, 32,64,128,256,512 , ReLU; $n=500000$. First-layer
    $5{\times}5$ filters as RGB images.

  \item[\Cref{fig:additional-cnn-feature-visualizations}, right.]
    Same architecture. Activations of the 1st-ranked (top) and 7th-ranked
    (bottom) eigenvectors at each conv layer (shallow$\to$deep), on six test
    images.

\end{description}


\end{document}